\def\eqref#1{equation~\ref{#1}}
\def\1{\bm{1}}
\def\rva{{\mathbf{a}}}
\def\rvq{{\mathbf{q}}}
\def\rvs{{\mathbf{s}}}
\def\rvx{{\mathbf{x}}}
\def\rmH{{\mathbf{H}}}
\def\rmM{{\mathbf{M}}}
\def\rmP{{\mathbf{P}}}
\def\rmT{{\mathbf{T}}}
\DeclareMathAlphabet{\mathsfit}{\encodingdefault}{\sfdefault}{m}{sl}
\SetMathAlphabet{\mathsfit}{bold}{\encodingdefault}{\sfdefault}{bx}{n}
\newcommand{\E}{\mathbb{E}}
\DeclareMathOperator*{\argmax}{arg\,max}
\def\eqref#1{Eq.~(\ref{#1})}
\def\rmM{\mathbb{M}}
\def\rvx{{\boldsymbol{x}}}
\newcommand{\red}[1]{\textcolor{black}{#1}}
\definecolor{DeepGreen}{RGB}{0,83,0}
\definecolor{DeepBlue}{RGB}{0,50,120}
\newcommand{\green}[1]{\textcolor{black}{#1}}
\newcommand{\blue}[1]{\textcolor{black}{#1}}
\definecolor{outerblue}{RGB}{30,90,255}
\definecolor{midorange}{RGB}{255,140,0}
\definecolor{innergreen}{RGB}{0,160,0}
\definecolor{deepviolet}{RGB}{150,0,180}
\lstdefinelanguage{prompt}{
  morestring=[b]",
  morecomment=[l]{\#},
  sensitive=true
}
\newcolumntype{C}[1]{>{\centering\arraybackslash}m{#1}}
\newcommand{\fitcell}[1]{\resizebox{\linewidth}{!}{$#1$}}
\title{Improving Discrete Diffusion Unmasking \\Policies Beyond Explicit Reference Policies}
\author{
Chunsan Hong\textsuperscript{1}, Seonho An\textsuperscript{2}, Min-Soo Kim\textsuperscript{2}, Jong Chul Ye\textsuperscript{1}
\\
\textsuperscript{1}Graduate School of AI, KAIST \quad
\textsuperscript{2}School of Computing, KAIST
}
\newcommand{\matrixb}{\left[ \begin{array}}
\newcommand{\matrixe}{\end{array} \right]}
\DeclareRobustCommand\onedot{\futurelet\@let@token\@onedot}
\def\@onedot{\ifx\@let@token.\else.\null\fi\xspace}
\def\ie{\emph{i.e}\onedot}
\newcommand{\Cref}[1]{Chap.~\ref{#1}}
\renewcommand{\paragraph}[1]{\vspace{1mm}\noindent\textbf{#1}\,\,\,}
\setlist[itemize]{align=parleft,left=0pt}
\newtheorem{prop}{Proposition}
\newtheorem{ex}{Example}
\newtheorem{thm}{Theorem}
\newtheorem{lemma}{Lemma}
\newcommand{\appropto}{\mathrel{\vcenter{\offinterlineskip\halign{\hfil$##$\cr\propto\cr\noalign{\kern2pt}\sim\cr\noalign{\kern-2pt}}}}}
\begin{document}

\maketitle

\begin{abstract}
Masked diffusion models (MDMs) have recently emerged as a novel framework for language modeling. MDMs generate sentences by iteratively denoising masked sequences, filling in [MASK] tokens step by step. 
Although MDMs support any-order sampling, performance is highly sensitive to the choice of \emph{which position to unmask next}. Prior work typically relies on rule-based schedules (e.g., max-confidence, max-margin), which provide ad hoc improvements. In contrast, we replace these heuristics with a \emph{learned} scheduler.
Specifically, we cast denoising as a KL–regularized Markov decision process (MDP) with an explicit {reference policy} and optimize a regularized objective that admits policy-improvement and convergence guarantees under standard assumptions. 
We prove that the optimized policy under this framework generates samples that more closely match the data distribution than heuristic schedules.
Empirically, across four benchmarks, our learned policy consistently outperforms max-confidence: for example,  on \textsc{Sudoku}, where unmasking order is critical, it yields a 20.1\% gain over random and a 11.2\% gain over max-confidence. 
Code is available at \url{https://github.com/chunsanHong/UPO}.

\end{abstract}

%%%%%%%%%%%%%%%
%\input{sections/introduction}
%%%%%%%%%%%%%%%

\section{Introduction}\label{sec:introduction}

Diffusion models have achieved remarkable success in domains such as image generation~\citep{ho2020denoising,song2021score}, by defining a noise-adding forward process and a denoising reverse process in continuous space, and learning the reverse dynamics within this framework. Recent studies~\citep{nie2025llada,ye2025dream,lou2024discrete} have advanced this line of work into the discrete domain, applying it to language modeling through masked diffusion models (MDMs).
 In the discrete setting, MDMs define a forward process that replaces tokens in text with a [\textsc{MASK}] token and a corresponding reverse process; training typically maximizes an evidence lower bound (ELBO)~\citep{lou2024discrete} to fit the reverse dynamics. Small-scale MDMs~\citep{lou2024discrete,sahoo2024simple} have matched or surpassed autoregressive models (ARMs), and recent large-scale studies~\citep{ye2025dream,nie2025llada} report results competitive with strong ARMs such as LLaMA-3~\citep{grattafiori2024llama3herdmodels}. 

Despite a shared template, diffusion in discrete versus continuous spaces differs substantially—especially at inference. Continuous-space diffusion models~\citep{song2021score,karras2022edm} perform denoising by integrating a stochastic differential equation (SDE) or an ordinary differential equation (ODE). In contrast, discrete-space diffusion models~\citep{sahoo2024simple,nie2025llada} denoise by iteratively sampling concrete replacements for [\textsc{MASK}] tokens. These mechanisms lead to different update geometries: continuous models accumulate numerous small changes across all dimensions, whereas discrete models enact localized, high-magnitude jumps at a few coordinates. Consequently, effective denoising in discrete space requires its own methodology. Just as advances in SDE solvers unlocked large gains for continuous diffusion, improved denoising strategies are likely to yield substantial performance gains for MDMs.

This raises a natural question: what matters most in MDM denoising? Among several factors, we focus on the choice of \emph{which position to unmask next}. \citet{kim2025trainworstplanbest} provide a theoretical rationale for this choice, proving that no polynomial-time algorithm can solve any-order generation; \ie, we cannot train MDMs that exactly recover the real data distribution for all masked sentences. However, they empirically show that selecting unmasking tokens via max-confidence~\citep{chang2022maskgit} or max-margin~\citep{kim2025trainworstplanbest} at inference time can bypass sub-hard instances and even outperform ARMs. Reflecting this, contemporary large-scale MDMs such as LLaDA or Dream-7B~\citep{nie2025llada,ye2025dream} commonly adopt max-confidence.

In this paper, we aim to move beyond heuristic references  (e.g., max-confidence) by \emph{learning} improved unmasking policies. To this end, we cast MDM denoising as a Markov decision process (MDP) and train an unmasking position-selection policy using the group relative policy optimization (GRPO)~\citep{shao2024deepseekmathpushinglimitsmathematical}. 
Theoretically, we prove that under this framework, the optimized policy produces samples closer to the true data distribution than the strong heuristic references. Moreover, we introduce \textit{three} tractable surrogate objectives and provide propositions that justify optimizing them in place of the ideal, yet intractable, objective. Practically, our learned policy substitutes for max-confidence and related heuristics, achieving improved accuracy across four benchmarks. For example,  on \textsc{Sudoku}, where unmasking order is crucial, it delivers a 20.1\% improvement over random and a 11.2\% improvement over max-confidence.

\noindent\textbf{Related works.}
Reinforcement learning (RL) has emerged as an effective tool to fine-tune large language models toward complex objectives or preferences~\citep{ouyang2022traininglanguagemodelsfollow,guo2025deepseek,shao2024deepseekmathpushinglimitsmathematical,rafailov2024directpreferenceoptimizationlanguage}. In MDMs, \citet{Zekri2025} apply policy gradient methods to a pretrained discrete diffusion model so as to maximize a task-specific reward at the last denoising step. \citet{zhao2025d1scalingreasoningdiffusion} adopts the GRPO framework to improve the reasoning ability by rewarding only the final answer accuracy. \citet{zhu2025llada15variancereducedpreference} propose a variance reduced preference optimization (VRPO) for MDMs, which shares the concept of direct preference optimization (DPO)~\citep{rafailov2024directpreferenceoptimizationlanguage} in ARMs. While the aforementioned works utilize RL to post-train the MDM itself, they differ from ours, where we train the unmasking policy model. 
Recently, \citet{Huang2025} introduce \textsc{DColT}, which also trains an unmasking policy via reinforcement learning. Our approach differs in that we formulate denoising as a KL–regularized MDP with an explicit \emph{reference policy} and optimize a regularized objective that admits policy-improvement and convergence guarantees under standard assumptions. 
Furthermore, while \citet{Huang2025} jointly applies RL to train both the unmasking policy and the underlying MDMs for maximum accuracy, our work focuses solely on optimizing the unmasking policy for frozen MDMs, thereby reducing computational cost and mitigating the risk of reward over-optimization.

\section{Preliminary}
\noindent\textbf{Notations.} 
Let the distribution $p_{\rm{data}}$ be the data distribution over sequences of length $L$ with vocabulary $\mathcal{V}=\{1, . . . , m\}$, and let the set of all possible sequences be $\mathcal{X}=\mathcal{V}^L$. We define $p_L$ 
as the Kronecker delta distribution that assigns all probability mass to the sentence in which every token is masked.
We denote the kernel forms of $p_{\rm{data}}$ and $p_L$ by $\rmP_{\rm{data}}$ and $\rmP_{L}$, respectively, with entries $[\rmP_{\rm{data}}]{\rvx}=p_{\rm{data}}(\rvx)$. We write $\rvx_n=\{x_n^1,x_n^2,\dots,x_n^L\}$ for the $n$-th step sequence with $L$ tokens.  We denote the mask by $\rmM$ and the set of sequences with $n$ masks by $\mathcal{X}_n=\{\rvx|\rvx\in\{\mathcal{V}\cup\{\rmM\}\}^L,\sum_{i=1}^L{\textbf{1}(x^i=\rmM)}=n\}$, so that $\rvx_n\in\mathcal{X}_n$ has $n$ masks and
we refer $a^i[\rvx]$ for the $i$-th mask index of $\rvx$. 
For example, if the number of masks in $\rvx$ is $n$, then $a^i[\rvx]\in\{1,\dots,L\}$, $x^{a^i[\rvx]}=\rmM$ and $i\in\{1,\dots,n\}$.
For brevity, we will omit $\rvx$ in $a^i[\rvx]$ for the rest of the paper.
We denote MDMs as $\theta$ and their model distribution as $\pi_\theta$. We denote Markov transition kernel $\rmT:\mathcal{X}\times\mathcal{X}\to[0,1]$ where $\rmT(\rvx,\rvx')$ is the probability of $\rvx$ transform into $\rvx'$ such that $\sum_{\rvx'\in\mathcal{X}} \rmT(\rvx,\rvx')=1$ for all $\rvx$.

\noindent\textbf{Masked diffusion models.}
Masked Diffusion Models \citep{shi2025simplified,sahoo2024simple} define a mask-adding forward process and corresponding reverse process in discrete space, and learn the model $\pi_\theta$ to mimic the true reverse process. The forward process gradually masks tokens independently one-by-one in $\rvx_0$ until the sequence is fully masked at $\rvx_n$  with $n=L$ (or $t = 1$ where $t:=n/L$). For $t=n/L \in (0, 1)$, the
sequence $\rvx_n$ is partially masked, with each being masked with probability $t$ or remaining unmasked with probability $1-t$. The reverse process recovers the data distribution by iteratively predicting masked tokens as $t$ decreases from $1$ to $0$. While previous works~\citep{campbell2022continuous, sahoo2024simple} propose the time-dependent MDM training framework, the recent works~\citep{zheng2024maskeddiffusionmodelssecretly,ou2024absorbing} propose the time-agnostic model where time is implicitly included in the number of masks. Then, the time-agnostic loss for MDM is defined as follows:

\begin{align}
\label{eq:objective}
\textstyle
   \mathcal{L}_{\rm{MDM}}(\theta)  \triangleq   -  \mathbb{E}_{n, \rvx_0,  \rvx_n} \big[\tfrac{1}{n} \sum_{ i = 1 }^L \textbf{1}[x_n^i = \rmM] \log \pi_{\theta}(x_0^i|\rvx_n) \big].
\end{align}
\citet{zheng2024maskeddiffusionmodelssecretly,ou2024absorbing} have shown that $\mathcal{L}_{\rm{MDM}}$ is the upper bounds of the negative log-likelihood. \citet{zheng2024maskeddiffusionmodelssecretly} propose the simple generation algorithm, named the First-Hitting sampler, beyond a continuous time Markov chain (CTMC) that follows the conventional reverse process. First-Hitting sampler randomly selects the index $i$ for unmasking and samples the next token sequentially as $\rvx_L\rightarrow\rvx_{L-1}\rightarrow\dots\rightarrow\rvx_0$. This is proven to converge to $p_{\rm{data}}$. 
Recent large-scale MDMs~\citep{nie2025llada,ye2025dream} currently adopt time-agnostic modeling and further employ sampling strategies such as the max-confidence beyond the First-Hitting sampler. 

\noindent\textbf{Markov transition kernel of random sampler and training hard problem of MDM.} Large-scale MDMs first select the masked token to unmask and sample from the categorical distribution. See Algorithm~\ref{alg:sampling-mdm} in Appendix~\ref{appendix:algorithms} for the detailed sampling algorithm. We here formulate general Markov transition kernels defined by various samplers of MDM.
\begin{align}
    \rmT_{n}^{(g)}= \sum\limits_{i=1}^n{\rmT_{n}^{i,(g)}},\;\;\;\;
   \rvx_{n-1} \sim \sum\limits_{i=1}^n[\rmT_{n}^{i,(g)}](\rvx_n,\rvx) 
 \end{align}
 where
 \begin{align}
 [\rmT_{n}^{i,(g)}](\rvx_n,\rvx) = % \rightarrow \rvx_{n-1}}= 
    \left\{
    \begin{array}{cl}
        g(a^i|\rvx_n)\cdot\pi_\theta(x^{a^i}|\rvx_{n}) & \text{if } \rvx_n^{-{a^i}}=\rvx^{-a^i}, \\
        0 & \text{otherwise},
    \end{array}
    \right.\label{eq:generalized_kernel}\\
\end{align}
where $\sum_{i=1}^ng(a^i|\rvx_n)=1$, $\rvx^{-{a^i}}=\{x^1,x^2,\dots,x^{{a^i}-1},x^{{a^i}+1},\dots,x^L\}$, $x_n^{a^i} = \rmM$ by definition, and $\pi_\theta(x_{n-1}^{a^i}|\rvx_n)$ is the categorical distribution of model prediction of the $i$-th mask of $\rvx_n$. 

The non-homogeneous Markov kernel for First-Hitting sampler~\citep{zheng2024maskeddiffusionmodelssecretly} for time-agnostic MDMs can be defined as choosing $g$ by
\begin{align}
        g(a^i|\rvx_n):= g_{\text{rand}}(a^i|\rvx_n)= \frac{1}{n}
\end{align}
In algorithmic manner, this is equal to first sample $i\sim \mathcal{U}\{1,\dots,n\}$, then sample $x^{a^i}\sim\pi_\theta(x^{a^i}|\rvx_n)$, and obtain $\rvx_{n-1}$ by replacing $i$-th mask of $\rvx_n$ with $x^{a^i}$.
In this case,  $(\prod_{n=1}^{L} \rmT_{n}^{g_{\rm{rand}}}) \cdot \rmP_L \approx \rmP_{\rm{data}}$ if $\pi_\theta$ is perfectly estimated ~\citep{zheng2024maskeddiffusionmodelssecretly}. However, \citet{kim2025trainworstplanbest} have proven that learning $\pi_\theta$ for every combination of masked sequence cannot be solved by a polynomial algorithm, and there exist hard subproblems. This can interpreted as there exists some points that true reverse process and model's reverse process are different, such that resulting in $(\prod_{n=1}^{L} \rmT_{n}^{g_{\rm{rand}}}) \cdot \rmP_L \neq \rmP_{\rm{data}}$. 
We provide a simplified example to help understand:

\begin{ex}
Consider the following simplified zebra (Einstein) puzzle. 
There are two houses, and the task is to determine the person living in each house and their favorite food. 
The possible persons are \emph{Robert} and \emph{Tom}, and the possible foods are \emph{pizza} and \emph{hamburger}. 
The clues are given as follows: 1) The person living in the first house is Robert, and 2) Tom likes pizza. 
\end{ex}
Now consider the MDM directly answer the questions, filling the mask of sentence: \textit{``{House 1: {name: [MASK], food: [MASK]}, House 2: {name: [MASK], food: [MASK]}}"}. Obviously, if MDM tries to answer the name of the person who lives in the first house, the problem becomes trivial. However, if it tries to fill attributes for the second house first before filling attributes of the first house, the problem becomes much harder. As the number of questions and clues grows, this kind of discrepancy would be much larger. Indeed, the zebra puzzle is also known as an NP-hard problem~\citep{shah2024causal,yato2003complexity}. Note that this is a highly simplified example to help understand the problem of MDM training, and refer to \green{Appendix~\ref{appendix:related_works} for more details and} \citet{kim2025trainworstplanbest} for a complete explanation.

\noindent\textbf{Practical approach to avoid such hard problems.}
Meanwhile, \citet{kim2025trainworstplanbest} empirically shows that unmasking the token with max-confidence or max-margin helps to avoid such hard problems, leading to better generations. Therefore, in practice, rather than randomly sampling the token for unmasking, large-scale discrete diffusion models~\citep{nie2025llada,ye2025dream} utilize such rule-based samplings. The weighted kernel related to the max-confidence sampling can be defined with  $g(a^i|\rvx_n):= g_{\text{conf}}$ defined as follows:

\begin{align}
    g_{\text{conf}}(a^i|\rvx_n) = 
        \textbf{1}\bigl(a^i=\argmax_{a^j} (\max_{c\in\mathcal{V}} \pi_\theta(\rvx_{n-1}^{a^j}=c |\rvx_n))\bigr),
\end{align}
or in other stochastic forms,
\begin{align}
    g_{\rm{conf}^\tau}(a^i|\rvx_n) &=\frac{\sum_{c\in\mathcal{V}} \exp({\pi_\theta(x_{n-1}^{a^i}=c|\rvx_n)/\tau})}{\sum_{a^j}\sum_{c\in\mathcal{V}} \exp({\pi_\theta(x_{n-1}^{a^j}=c|\rvx_n)/\tau})}, \;\;\text{s.t.}\;\lim_{\tau\rightarrow0+}g_{\rm{conf}^\tau}=g_{\text{conf}},\\
    g_{\rm{Top-K}}(a^i|\rvx_n)&=\frac{1}{K}\cdot\textbf{1}(a^i \in\underset{{a^j}}{\rm{argtopk}} (\max_{c\in\mathcal{V}} \pi_\theta(\rvx_{n-1}^{a^j}=c |\rvx_n))),\;\;\text{s.t.}\;g_{\rm{Top-1}}=g_{\rm{conf}}.
\end{align}
where $\rm{argtopk}$ denotes the indices from the top-k sampler.
Similarly, the weight kernel for max-margin sampler can be defined with $g_{\text{margin}}$ whose value is $1$ where $i$ is the mask index with the maximum margin between the first-confident and second-confident token.

\section{Improving Unmasking Policy Beyond  a Reference Policy}\label{method}
While twisted Markov transition kernels driven by heuristic unmasking policies—such as $g_{\rm conf}$ or $g_{\rm margin}$—deliver substantial gains, we posit that a stronger policy $g$ can guide MDM denoising more effectively. Motivated by this, \textit{we learn $g_\phi$} instead of defining $g$ in a heuristic manner to find an optimal path. We further cast the learning problem of $g_{\phi}$ as a KL-regularized MDP, where a strong heuristic sampler serves as a reference policy, enabling stability and policy improvement guarantees.

\subsection{Existence of optimal path Beyond $g_{\mathrm{conf}}$}\label{sec:preliminary_experiment}

\begin{wrapfigure}{r}{0.36\linewidth}
   \vspace{-0.5cm}
   \includegraphics[width=\linewidth]{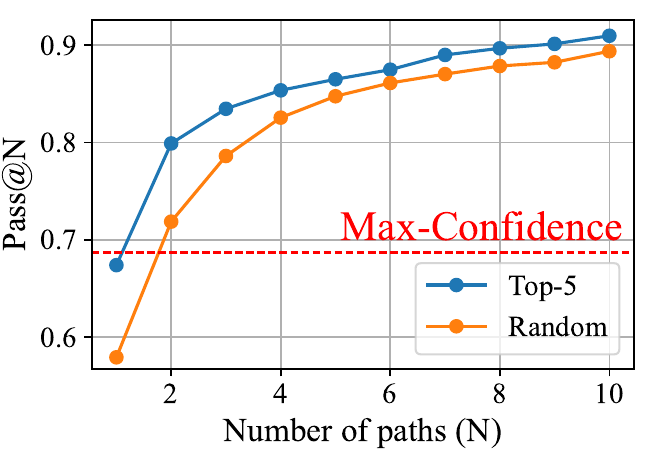}
   \vspace{-0.35cm}
   \caption{Pass@N on GSM8K. 
   The dashed red line is the single-trajectory accuracy of max-confidence $g_{\rm conf}$.}
   \label{fig:prelim}
   \vspace{-0.4cm}
\end{wrapfigure}

We test various $g$ in GSM8K~\citep{cobbe2021gsm8k}, the mathematical questions benchmark, to probe whether max-confidence is merely strong or whether substantially better unmasking paths exist. We compare three schedulers: $g_{\rm conf}$, $g_{\rm rand}$, and $g_{\rm{Top-K}}$. Here, $g_{\rm conf}$ is deterministic, while $g_{\rm Top\text{-}K}$ smoothly interpolates between random and max-confidence. \red{For controlled comparison to analyze the effect of generation ordering and following the default setting used in LLaDA~\citep{nie2025llada}, we conducted token sampling with argmax operator, \ie, $x_{n-1}^{a_n}=\argmax_{x\in \mathcal{V}} \pi_\theta(x^{a_n}=x|\mathbf{x}_n)$.}

For each test question, we draw $N$ independent trajectories using either $g_{\rm rand}$ or $g_{\rm Top\text{-}5}$ and report Pass@N, the fraction of questions for which at least one trajectory is correct. 
Figure~\ref{fig:prelim} shows three consistent trends:
(i) $g_{\rm{conf}}$ significantly outperforms $g_{\rm{rand}}$ when $N=1$;
(ii) $g_{\rm Top\text{-}5}$ dominates $g_{\rm rand}$ across $N$, highlighting the strength of confidence-based ordering; 
(iii) as $N$ grows, both curves surpass the max-confidence line by a clear margin, and $g_{\rm Top\text{-}5}$ exceeds $0.92$ when $N=10$. 

These results carry two implications. First, max-confidence is indeed a strong single-path baseline. Second, the rise of Pass@N above the max-confidence line indicates the presence of higher-yield unmasking trajectories than the deterministic max-confidence order. Brute-force search for such paths is computationally prohibitive and requires unobserved oracle feedback, motivating our learned policy $g_\phi$ that aims to discover high-reward paths without enumerating many trajectories.

\subsection{Reformulating the problem as reinforcement learning}\label{sec:theoretical_framework}
Motivated by the strength of $g_{\rm conf}$ and the existence of higher-yield unmasking paths, we set two goals: (i) increase the probability that an MDM produces the correct answer, and (ii) move the induced sampling distribution closer to $p_{\rm data}$ than a strong heuristic reference $g_{\rm ref}$. We therefore learn a stochastic position-selection policy $g_\phi$ rather than fixing $g$ heuristically. The policy is parameterized as a softmax over masked positions using a learnable scorer $h_\phi(\rvx)\in\mathbb{R}^{L}$:
\begin{align}
g_\phi(a^i\,|\,\rvx_n)
=\frac{\exp\!\big([h_\phi(\rvx_n)]_{a^i}\big)}
{\sum_{i'=1}^{n}\exp\!\big([h_\phi(\rvx_n)]_{a^{i'}}\big)}. 
\label{eq:def_g}
\end{align}

We cast unmasking as a regularized Markov decision process (MDP) with verifiable terminal rewards. Each episode begins from a prompt–answer pair where the answer is fully masked, $[\rvx_L:\rvq]$ with $\rvq\sim\rho_{\mathcal Q}$ referring to the answer distribution and $\rvx_L=\rmM ^{L}$, and terminates when all masks are removed at $n=0$, yielding $\rvx_0\in\mathcal X=\mathcal V^{L}$. At state $\rvx_n$ the action space is the set of masked indices $\mathcal A_{\rvx_n}=\{a^1[\rvx_n],\dots,a^n[\rvx_n]\}$; the policy $g_\phi(\cdot\,|\,\rvx_n)$ selects one index $a_n\in\mathcal{A}_{\rvx_n}$ to unmask. Transitions factor through the fixed MDM denoiser $\pi_\theta$:
\[
p_{g_\phi}(\rvx_{n-1}\mid\rvx_n)
=\rmT_{n}^{g_\phi} (\rvx_n,\rvx_{n-1}) %\to \rvx_{n-1}}
=g_\phi(a_n\mid\rvx_n)\cdot \pi_\theta(\rvx_{n-1}\mid\rvx_n,a_n),
\]
where $\pi_\theta(\rvx_{n-1}\mid\rvx_n,a_n)$ is the token distribution for position $a_n$. The agent thus controls \emph{where} to unmask, while the environment dynamics are induced by $\pi_\theta$. Rewards are verifiable: an external checker evaluates the fully unmasked output $\rvx_0$ and returns $r(\rvq,\rvx_0)\in\{0,1\}$.

To stabilize and accelerate training, we introduce a strong \emph{reference policy} $g_{\rm ref}$ (e.g., Top-$K$) and optimize a theoretical KL-regularized GRPO-style objective~\citep{mroueh2025reinforcement}. We optimize directly over the \emph{terminal-output distribution}
induced by the policy:
all expectations are taken with respect to $p_g(\rvx_0\mid\rvq)$, and the regularizer compares
terminal-output marginals.
Accordingly, our theoretical loss is an \emph{output-level} GRPO objective without clipping:
\begin{align}
\max_{\phi}\;\;
\mathbb{E}_{\rvq\sim\rho_{\mathcal Q}}\Biggl[
\mathbb{E}_{\rvx_0\sim p_{g_{\phi_{\rm old}}}(\cdot\mid \rvq)}
\!\biggl[
\frac{p_{g_\phi}(\rvx_0\mid \rvq)}{p_{g_{\phi_{\rm old}}}(\rvx_0\mid \rvq)}\,
A(\rvq,\rvx_0)
\biggr]
-\beta\, D_{\mathrm{KL}}\!\left(p_{g_\phi}(\rvx_0|\rvq)\,\big\|\,p_{g_{\mathrm{ref}}}(\rvx_0|\rvq)\right)\Biggr],
\label{eq:theoretical_loss}
\end{align}
where $A(\rvq,\rvx_0)$ is a standardized advantage. Let $r_g(\rvq)=\mathbb{E}_{\rvx_0\sim p_g(\cdot\mid \rvq)}[r(\rvq,\rvx_0)]\in[0,1]$ and $\mathrm{std}_g(\rvq)=\sqrt{\mathbb{E}_{\rvx_0\sim p_g(\cdot\mid \rvq)}[(r(\rvx_0)-r_g(\rvq))^2]}$, then
$
A(\rvq,\rvx_0)=(r(\rvq,\rvx_0)-r_{g_{\phi_{\rm old}}}(\rvq))/(\mathrm{std}_{g_{\phi_{\rm old}}}(\rvq)+\epsilon)$.
This objective follows the \emph{output-level} GRPO loss used in DeepSeek-R1~\citep{guo2025deepseek} rather than the original \emph{token-wise} GRPO used in DeepSeekMath~\citep{shao2024deepseekmathpushinglimitsmathematical}. 
The regularizer keeps $g_\phi$ close to $g_{\rm ref}$, acting as a trust-region that mitigates instability while enabling improvement over a strong baseline; selecting an appropriate $g_{\rm ref}$ provides a high-accuracy starting point and accelerates convergence.

\subsection{Theoretical analysis}\label{sec:theoretical_analysis}
In this section, we provide how $g_\phi$ optimizing \eqref{eq:theoretical_loss} can guarantee the convergence of the policy model and superior performance over the reference policy $g_{\rm{ref}}$.
\citet{mroueh2025reinforcement} have proven that a policy model learning output-level GRPO loss (\eqref{eq:theoretical_loss}) can converge to its optimal performance $r_{g^*}$, which is higher than $r_{g_{\rm{ref}}}$:

\begin{thm}[Restatement of GRPO convergence theorem~\citep{mroueh2025reinforcement}]\label{theorem:GRPO_convergence}
    Assume $1>r_{g_{\rm{ref}}}>0$, define for $\beta>0$:
    \begin{align}
        h_{\epsilon,p_{\rm{ref}}}(r_g) = \frac{1}{1+\frac{1-r_{g_{\rm{ref}}}}{r_{g_{\rm{ref}}}} \exp \bigr(-\frac{1}{\beta}\frac{1}{\sqrt{r_g(1-r_g)+\epsilon}}\bigl)}
    \end{align}
    Then, for the local optimizer $\phi_n=\max_{\phi}\;\rm{\eqref{eq:theoretical_loss}}$\ where $\phi_{\rm{old}}=\phi_{n-1}$, the probability of success satisfies the following fixed point iteration i.e we have almost surely for all $\rvq$ for $n\ge1$,
    \begin{align}
        r_{g_{\phi_{n}}}(\rvq) = h_{\epsilon,p_{\rm{ref}}}(r_{g_{\phi_{n-1}}}(\rvq))
    \end{align}
    and $r_{g_{\phi_{0}}}(\rvq)=r_{g_{\rm{ref}}}(\rvq)$. Also, let $r_{g^*}$ be a fixed point of $h_{\epsilon,p_{\rm{ref}}}$ and assume that have $|h_{\epsilon,p_{\rm{ref}}}'(r_{g^*})|<1$. Given that $h_{\epsilon,p_{\rm{ref}}}$ and $h_{\epsilon,p_{\rm{ref}}}'$ are continuous in $[0,1]$, following local fixed point convergence is established:
    \begin{align}
        \lim_{n\rightarrow{\infty}} r_{g_{\phi_n}} = r_{g^*},\label{eq:reward_convergence}
    \end{align}
    and $r_{g^*}>r_{g_{\rm{ref}}}$.
\end{thm}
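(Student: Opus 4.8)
The plan is to reduce the policy-optimization problem to a scalar fixed-point recursion in the per-query success probability $r_g(\rvq)$, and then invoke a standard local fixed-point argument. Because \eqref{eq:theoretical_loss} is stated at the level of the terminal-output marginal $p_g(\rvx_0\mid\rvq)$, I would first fix a query $\rvq$ and treat $p_{g_\phi}(\cdot\mid\rvq)$ as the free optimization variable (the expressivity idealization implicit in the ``output-level'' formulation). The importance-sampling term then collapses, $\mathbb{E}_{\rvx_0\sim p_{g_{\phi_{\rm old}}}}[\tfrac{p_{g_\phi}(\rvx_0)}{p_{g_{\phi_{\rm old}}}(\rvx_0)}A(\rvq,\rvx_0)]=\mathbb{E}_{\rvx_0\sim p_{g_\phi}}[A(\rvq,\rvx_0)]$, so for fixed $\phi_{\rm old}$ the per-query objective is $\mathbb{E}_{p}[A]-\beta\,D_{\mathrm{KL}}(p\,\|\,p_{g_{\rm ref}})$, which is strictly concave in $p$. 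A Lagrangian/variational computation then yields the unique maximizer in closed form, the exponential tilt $p_{g_{\phi_n}}(\rvx_0\mid\rvq)\propto p_{g_{\rm ref}}(\rvx_0\mid\rvq)\exp(A(\rvq,\rvx_0)/\beta)$.

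The second step exploits the binary reward. Since $r(\rvq,\rvx_0)\in\{0,1\}$, the standardized advantage takes only two values: $a_+=(1-r_{g_{\phi_{n-1}}})/(\mathrm{std}_{g_{\phi_{n-1}}}+\epsilon)$ on the ``correct'' set and $a_-=-r_{g_{\phi_{n-1}}}/(\mathrm{std}_{g_{\phi_{n-1}}}+\epsilon)$ on its complement, so the tilt multiplies the reference mass on each set by a constant. Writing $r_n:=r_{g_{\phi_n}}(\rvq)$ and summing the tilted measure over the correct set gives $r_n=\tfrac{r_{g_{\rm ref}}e^{a_+/\beta}}{r_{g_{\rm ref}}e^{a_+/\beta}+(1-r_{g_{\rm ref}})e^{a_-/\beta}}$; substituting $a_--a_+=-1/(\mathrm{std}_{g_{\phi_{n-1}}}+\epsilon)$ and $\mathrm{std}_{g_{\phi_{n-1}}}=\sqrt{r_{n-1}(1-r_{n-1})}$ reproduces exactly $r_n=h_{\epsilon,p_{\rm ref}}(r_{n-1})$ (up to the placement of $\epsilon$ inside the square root). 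The initial condition $r_{g_{\phi_0}}(\rvq)=r_{g_{\rm ref}}(\rvq)$ holds because the first iterate is centered at the reference, and the ``almost surely for all $\rvq$'' qualifier is inherited from the exact-optimization (population) regime of \citet{mroueh2025reinforcement}, where per-query decoupling makes the recursion hold simultaneously for every $\rvq$.

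For convergence and strict improvement I would analyze $h_{\epsilon,p_{\rm ref}}$ directly. A one-line computation shows $h_{\epsilon,p_{\rm ref}}(r)>r$ for every $r\in(0,1)$ (equivalently $e^{-1/(\beta\sqrt{r(1-r)+\epsilon})}<1$) and that $h_{\epsilon,p_{\rm ref}}$ is increasing; hence the iterates form a strictly increasing sequence bounded above by $1$ and therefore converge. By continuity the limit is a fixed point, and the hypothesis $|h_{\epsilon,p_{\rm ref}}'(r_{g^*})|<1$ together with continuity of $h_{\epsilon,p_{\rm ref}}'$ identifies $r_{g^*}$ as the locally attracting fixed point reached from $r_{g_{\rm ref}}$, giving $\lim_{n\to\infty}r_{g_{\phi_n}}=r_{g^*}$. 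Strict improvement is then immediate: since $r_1=h_{\epsilon,p_{\rm ref}}(r_{g_{\rm ref}})>r_{g_{\rm ref}}$ and the sequence is nondecreasing thereafter, $r_{g^*}\ge r_1>r_{g_{\rm ref}}$.

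The step I expect to be the main obstacle is the first one: rigorously justifying that the constrained maximizer over the \emph{achievable} terminal distributions coincides with the unconstrained exponential-tilt solution. The closed form is exact only if the policy class $g_\phi$ (a softmax over masked positions composed with the frozen denoiser $\pi_\theta$) is expressive enough to realize the reweighted marginal $p_{g_{\rm ref}}\exp(A/\beta)$ for each $\rvq$; the ``optimize directly over the terminal-output distribution'' assumption is precisely what licenses this, and I would state it explicitly and verify that the stationarity conditions of \eqref{eq:theoretical_loss} reproduce the tilt before reducing to the scalar recursion. The remaining analytic facts—concavity, the two-value advantage, monotonicity of $h_{\epsilon,p_{\rm ref}}$, and the local fixed-point theorem—are routine.
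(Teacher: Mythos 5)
The paper does not actually prove this theorem: it is imported verbatim from \citet{mroueh2025reinforcement}, and the only in-paper ingredient is Lemma~\ref{lemma:policy_dynamics} in Appendix~\ref{appendix:sec_proof_KL_tightening}, which records the exponential-tilt recursion $p_{g_{\phi_n}}\propto p_{g_{\rm ref}}\exp\bigl(\tfrac{1}{\beta}(w_\epsilon^+ r - w_\epsilon^-(1-r))\bigr)$. Your first two steps (collapsing the importance ratio to $\mathbb{E}_{p_{g_\phi}}[A]$, maximizing the strictly concave per-query functional $\mathbb{E}_p[A]-\beta D_{\mathrm{KL}}(p\|p_{g_{\rm ref}})$ to get the tilt, then using the two-valued advantage to sum the tilted mass over the correct set) correctly reconstruct that lemma and the logistic form of $h_{\epsilon,p_{\rm ref}}$; you are also right to flag both the expressivity idealization and the inconsistent placement of $\epsilon$ (the main text uses $\mathrm{std}+\epsilon$, Lemma~\ref{lemma:policy_dynamics} uses $r(1-r)+\epsilon$, and the theorem uses $\sqrt{r(1-r)+\epsilon}$). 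Up to that point your route is the same as the cited one.

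The genuine gap is in your convergence step. The claim that $h_{\epsilon,p_{\rm ref}}(r)>r$ for all $r\in(0,1)$ is false: $h(r)>r$ is equivalent to $\tfrac{(1-r)\,r_{g_{\rm ref}}}{r\,(1-r_{g_{\rm ref}})}>e^{-1/(\beta\sqrt{r(1-r)+\epsilon})}$, which reduces to $e^{-c}<1$ only at $r=r_{g_{\rm ref}}$; since $\epsilon>0$ keeps $h$ bounded away from $1$, the inequality necessarily fails for $r$ near $1$. Likewise $h$ is \emph{not} increasing on $[0,1]$: it is increasing in $c(r)=1/(\beta\sqrt{r(1-r)+\epsilon})$, and $c$ is decreasing on $[0,\tfrac12]$ and increasing on $[\tfrac12,1]$, so $h$ is non-monotone. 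Your monotone-bounded-sequence argument therefore collapses, and with it your derivation of both the limit and the strict improvement. The repairs are different for the two conclusions. Strict improvement does not need monotonicity at all: since $e^{-c(r)}<1$ for every $r$, one has the uniform bound $h_{\epsilon,p_{\rm ref}}(r)>\bigl(1+\tfrac{1-r_{g_{\rm ref}}}{r_{g_{\rm ref}}}\bigr)^{-1}=r_{g_{\rm ref}}$, and applying this at the fixed point gives $r_{g^*}=h_{\epsilon,p_{\rm ref}}(r_{g^*})>r_{g_{\rm ref}}$ directly. Convergence, however, cannot be obtained by monotone iteration; it is exactly here that the hypotheses $|h_{\epsilon,p_{\rm ref}}'(r_{g^*})|<1$ and continuity of $h'$ must do the work, via a local contraction argument on a neighborhood of $r_{g^*}$ (together with an argument, supplied in the source, that the iterates starting from $r_{g_{\rm ref}}$ enter that basin). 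As written, your proof silently replaces this local fixed-point argument with a global monotonicity claim that the function does not satisfy.
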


Therefore, setting $g_{\rm{ref}}$ as a strong proxy model is important, as we can always get a better policy $g_{\phi^*}$.
Moreover, in the following we prove that policy model $\phi$ maximizing \eqref{eq:theoretical_loss} can restore $p_{\rm{data}}$ better than $g_{\rm{ref}}$:

\begin{restatable}[Reference-KL Tightening for MDM Policy Improvement]{thm}{KLTightening}
\label{theorem:KL_divergence}
    Assume there exists an ideal MDM $\theta^*$ satisfying MDM $\mathcal{L}_{\text{MDM}}(\theta^*)=0$ for $p_{\text{data}}$ where data is composed of prompt and oracle answer $\{\mathbf{q},\mathbf{a}\}$. 
    \red{Assume we select well-defined reward $r(\mathbf{q},\mathbf{a})\in\{0,1\}$ such that $p_{\text{data}}(\mathbf{q},\mathbf{a})>0\Rightarrow r(\mathbf{q},\mathbf{a})=1$ and $0<r_{g_{\text{ref}}}<1$.}
    Now consider incomplete MDM $\theta$ such that $\mathcal{L}_{\rm{MDM}}(\theta)>0$, and $g_{\phi^*}$ that satisfies \eqref{eq:reward_convergence} for $\theta$. Let $g_{\mathrm{ref}}$ be a policy model such that, for every $\rvq$,
$p_{g_{\rm{ref}}}(\rvx_0\mid \rvq)>0$ whenever $p_{\theta^*}(\rvx_0\mid \rvq)>0$, 
\ie, $\operatorname{supp}(p_{\theta^*}(\cdot\mid \rvq))\subseteq \operatorname{supp}(p_{g_{\mathrm{ref}}}(\cdot\mid \rvq))$. Then, the following inequality holds:
    \begin{align}
     D_{\rm{KL}}(p_{\rm{\theta^*}}(\rvx_0|\rvq)||p_{g_{\phi^*}}(\rvx_0|\rvq))\red{<} D_{\rm{KL}}(p_{\rm{\theta^*}}(\rvx_0|\rvq)||p_{g_{\rm{ref}}}(\rvx_0|\rvq)).\label{eq:ineq_1}
    \end{align}
    Or equivalently,
    \begin{align}
     D_{\rm{KL}}(p_{\rm{data}}(\rva|\rvq)||p_{g_{\phi^*}}(\rvx_0|\rvq))\red{<} D_{\rm{KL}}(p_{\rm{data}}(\rva|\rvq)||p_{g_{\rm{ref}}}(\rvx_0|\rvq)).\label{eq:ineq_2}
    \end{align}
\end{restatable}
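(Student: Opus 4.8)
The plan is to exploit the fact that the ideal denoiser $\theta^*$ with $\mathcal{L}_{\rm{MDM}}(\theta^*)=0$ places all of its terminal mass on correct answers, so that $p_{\theta^*}(\rvx_0\mid\rvq)=p_{\rm{data}}(\rva\mid\rvq)$ is supported entirely on the correct set $\mathcal C_{\rvq}=\{\rvx_0:r(\rvq,\rvx_0)=1\}$. This support restriction lets me split any $D_{\rm{KL}}(p_{\theta^*}\,\|\,p_g)$ into a term measuring the \emph{shape} of $p_g$ restricted to correct answers and a term measuring how much total \emph{mass} $g$ assigns to $\mathcal C_{\rvq}$. I would then show that the GRPO optimum $g_{\phi^*}$ leaves the within-$\mathcal C_{\rvq}$ shape identical to that of $g_{\rm ref}$ while strictly increasing the correct mass, which immediately yields the inequality.

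Concretely, writing $r_g(\rvq)=\sum_{\rvx_0\in\mathcal C_{\rvq}}p_g(\rvx_0\mid\rvq)$ and the conditional-on-correct distribution $\tilde p_g(\rvx_0\mid\rvq)=p_g(\rvx_0\mid\rvq)/r_g(\rvq)$ on $\mathcal C_{\rvq}$, the first step is the exact decomposition
\begin{align*}
D_{\rm{KL}}\!\big(p_{\theta^*}\,\|\,p_g\big)
= D_{\rm{KL}}\!\big(p_{\theta^*}\,\|\,\tilde p_g\big) + \log\tfrac{1}{r_g(\rvq)},
\end{align*}
which holds because $p_{\theta^*}$ vanishes off $\mathcal C_{\rvq}$ and $\sum_{\mathcal C_{\rvq}}p_{\theta^*}=1$. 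The support assumption $\operatorname{supp}(p_{\theta^*})\subseteq\operatorname{supp}(p_{g_{\rm ref}})$ together with $r_{g_{\rm ref}}>0$ (from Theorem~\ref{theorem:GRPO_convergence}) guarantees $\tilde p_{g_{\rm ref}}$ is well defined and both KL terms are finite.

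The crux is characterizing $g_{\phi^*}$. Because we optimize the KL-regularized objective \eqref{eq:theoretical_loss} directly over terminal-output marginals, the maximizer at the fixed point is the Gibbs tilt $p_{g_{\phi^*}}(\rvx_0\mid\rvq)\propto p_{g_{\rm ref}}(\rvx_0\mid\rvq)\exp(A^*(\rvx_0)/\beta)$, where $A^*$ is the standardized advantage evaluated at the fixed point. Since the reward is binary, $A^*$ takes only two values---one constant on $\mathcal C_{\rvq}$ and one on its complement---so the exponential reweighting is uniform within $\mathcal C_{\rvq}$. Normalizing then gives $\tilde p_{g_{\phi^*}}=\tilde p_{g_{\rm ref}}$, i.e.\ the two policies share the same conditional shape on correct answers. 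Substituting into the decomposition, the shape terms cancel and
\begin{align*}
D_{\rm{KL}}\!\big(p_{\theta^*}\,\|\,p_{g_{\phi^*}}\big)-D_{\rm{KL}}\!\big(p_{\theta^*}\,\|\,p_{g_{\rm ref}}\big)=\log\frac{r_{g_{\rm ref}}(\rvq)}{r_{g_{\phi^*}}(\rvq)}\le 0,
\end{align*}
where the sign follows from $r_{g^*}>r_{g_{\rm ref}}$ in \eqref{eq:reward_convergence} (and $r_{g_{\phi^*}}=r_{g^*}$). This proves \eqref{eq:ineq_1}, and \eqref{eq:ineq_2} is the same statement after substituting $p_{\theta^*}(\rvx_0\mid\rvq)=p_{\rm{data}}(\rva\mid\rvq)$, which holds since a perfectly trained MDM recovers $p_{\rm{data}}$ under any unmasking order.

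The main obstacle I anticipate is rigorously justifying the Gibbs form of $g_{\phi^*}$ and, in particular, that the advantage $A^*$ is constant on $\mathcal C_{\rvq}$: this requires invoking the fixed-point analysis behind Theorem~\ref{theorem:GRPO_convergence} and the binary-reward structure, and being careful that optimizing over terminal-output marginals (rather than over the restricted softmax family \eqref{eq:def_g}) is what makes the unconstrained exponential-tilt maximizer attainable. A secondary subtlety is handling the $\epsilon$ and $\mathrm{std}$ normalization in the standardized advantage so that $A^*$ is genuinely two-valued; since neither depends on the identity of $\rvx_0$ within a fixed reward level, uniformity of the tilt within $\mathcal C_{\rvq}$ is preserved and the shape terms still cancel.
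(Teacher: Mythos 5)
Your proof is correct, and it reaches the paper's inequality by a somewhat different route. Both arguments ultimately rest on the same key fact — the GRPO fixed point is an exponential tilt of the reference, $p_{g_{\phi^*}}(\rvx_0\mid\rvq)=\tfrac{1}{Z}\,p_{g_{\rm ref}}(\rvx_0\mid\rvq)\exp\!\bigl(\tfrac{1}{\beta}(w_\epsilon^+ r(\rvq,\rvx_0)-w_\epsilon^-(1-r(\rvq,\rvx_0)))\bigr)$, which is exactly Lemma~\ref{lemma:policy_dynamics} in the appendix; your "Gibbs tilt with a two-valued exponent" is that lemma with the weights written as a standardized advantage, and your anticipated obstacle is resolved by citing it. Where you diverge is in how the sign is extracted. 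The paper expands $\mathbb{E}_{p_{\theta^*}}[\log p_{g_{\phi^*}}-\log p_{g_{\rm ref}}]$ directly, uses the support condition to collapse the expectation to the constant $\tfrac{1}{\beta}w_\epsilon^+-\log Z$, and then bounds $Z\le\exp(w_\epsilon^+/\beta)$ from $w_\epsilon^\pm\ge 0$. You instead decompose each KL into a shape term on the correct set $\mathcal C_{\rvq}$ plus a mass term $-\log r_g(\rvq)$, observe that the uniform tilt within $\mathcal C_{\rvq}$ preserves the conditional shape so those terms cancel, and close by invoking $r_{g^*}>r_{g_{\rm ref}}$ from Theorem~\ref{theorem:GRPO_convergence}. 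The two final steps are equivalent (the mass identity $r_{g_{\phi^*}}=e^{w_\epsilon^+/\beta}r_{g_{\rm ref}}/Z$ shows $\log(r_{g_{\rm ref}}/r_{g_{\phi^*}})=\log Z-w_\epsilon^+/\beta$), but yours buys a sharper, interpretable identity — the KL improvement equals exactly $\log\bigl(r_{g^*}(\rvq)/r_{g_{\rm ref}}(\rvq)\bigr)$ — at the cost of leaning on the convergence theorem's conclusion rather than the elementary nonnegativity of the weights, which the paper's self-contained bound avoids.
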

% \end{thm}

\begin{proof}[Proof sketch.] \citet{mroueh2025reinforcement} provides policy dynamic under training \eqref{eq:theoretical_loss}. Using the result given in Lemma~\ref{lemma:policy_dynamics} in Appendix, we can express $g_{\phi^*}$ with $g_{\rm{ref}}$ and thereby derive KL divergence. Refer to Appendix~\ref{appendix:sec_proof_KL_tightening} for detailed proof \red{and reason why the support assumption holds}.
\end{proof}

\red{
The interpretation of Theorem~\ref{theorem:KL_divergence} is as follows. 
If we choose a binary reward $r(\mathbf{q},\mathbf{a}) \in \{0,1\}$ satisfying 
$p_{\text{data}}(\mathbf{q},\mathbf{a}) > 0 \Rightarrow r(\mathbf{q},\mathbf{a}) = 1$ 
and $0 < r_{g_{\text{ref}}} < 1$, then the theorem guarantees that the KL divergence is 
strictly tightened compared to the reference policy. 
Consequently, one may view any reward that satisfies these conditions as a 
“good” reward in the sense that it provably induces learning dynamics that bring 
the sampling distribution closer to $p_{\text{data}}$. 
Under the assumption that the dataset is well constructed, such binary rewards 
naturally correspond to properties such as correctness, the absence of reasoning errors, 
grammatical validity, or even combinations thereof, any of which encourage the 
model to move its output distribution closer to the true data distribution.
}

Our provided Theorem~\ref{theorem:KL_divergence} guarantees the closer sampling to $p_{\rm{data}}$ or ideal MDM $\theta^*$ than $g_{\rm{ref}}$. 
Consequently, under the assumptions of two theorems, initializing with $g_{\phi_0}=g_{\rm ref}$ and repeatedly optimizing $g_{\phi}$ using \eqref{eq:theoretical_loss} yields a convergent policy $g_{\phi^\star}$ that (i) attains a higher expected reward than $g_{\rm ref}$ and (ii) generates samples that are closer to the real data distribution than those from $g_{\rm ref}$.

\subsection{Practical realization of unmasking policy optimization}
\label{sec:main_objective}

Here, we provide practical and tractable training objectives with various $g_{\rm{ref}}$.
In a real-world setting where $L$ and $|\mathcal{V}|$ are high, \eqref{eq:theoretical_loss} is not directly learnable since $p_{g_\phi}(\rvx_0|\rvq)$ is intractable in MDMs. It requires marginalizing over every trajectory to $\rvx_0$, \ie, $p_{g_\phi}(\rvx_0|\rvq)=\sum_{\rvx_1,...,\rvx_L}p_\phi(\rvx_{0:L}|\rvq)$, making both the main objective (reward maximization) and the KL term in \eqref{eq:theoretical_loss} intractable. To address this,  we provide a surrogate loss that can alternate the main objective. 
\begin{restatable}[Output--Token Level Gradient Alignment (informal)]{prop}{EqualGradient}
\label{prop:equal_gradient}
    For MDM $\pi_\theta$ and unmasking policy model $g_\phi$ where $p_{g_\phi}(\rvx_{n-1}\mid\rvx_n)=g_\phi(a_n\mid\rvx_n)\cdot \pi_\theta(\rvx_{n-1}\mid\rvx_n,a_n)$ and $p_{g_\phi}(\rvx_0|\rvq)=\sum_{\rvx_1,...,\rvx_L}p_\phi(\rvx_{0:L}|\rvq)$, consider the output level loss $\mathcal{L}_{\rm{output}}$ and token-wise policy loss $\mathcal{L}_{\rm{token}}$:
    \begin{align}
        \mathcal{L}_{\rm{output}}(\phi)&=\mathbb{E}_{\rvq\sim\rho_{\mathcal Q}}
\mathbb{E}_{\rvx_0\sim p_{g_{\phi_{\rm{old}}}}(\cdot\mid \rvq)}
\bigr[\frac{p_{g_\phi}(\rvx_0\mid \rvq)}{p_{g_{\phi_{\rm old}}}(\rvx_0\mid \rvq)}A(\rvq,\rvx_0)\bigl],\label{eq:prop1_eq1}\\
\mathcal{L}_{\rm{token}}(\phi)&=\mathbb{E}_{\rvq\sim\rho_{\mathcal Q}}
\mathbb{E}_{\{a_{1:L},\rvx_{0:L}\}\sim p_{g_{\phi_{\rm{old}}}}(\cdot\mid \rvq)}
\bigr[\sum_{n=1}^L\frac{g_\phi(a_n\mid \rvx_n)}{g_{\phi_{\rm old}}(a_n\mid \rvx_n)}A(\rvq,\rvx_0)\bigl].\label{eq:loss_token}
    \end{align}
    Assume that we optimize $\phi$ carefully such that $\phi\approx\phi_{old}$, the two gradients are approximately equal:
    \begin{align}
        \nabla_\phi\mathcal{L}_{\rm{token}}\approx \nabla_\phi\mathcal{L}_{\rm{output}},
    \end{align}
    and $\nabla_\phi\mathcal{L}_{\rm{token}}= \nabla_\phi\mathcal{L}_{\rm{output}}$ at first optimization step where $\phi =\phi_{\rm{old}}$. 
\end{restatable}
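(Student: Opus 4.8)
The plan is to establish both claims by differentiating the two losses with the log-derivative (REINFORCE) identity and showing the two gradients coincide exactly at $\phi=\phi_{\rm old}$, after which the neighborhood statement follows from continuity. The starting point is the trajectory factorization
$$p_{g_\phi}(a_{1:L},\rvx_{0:L}\mid\rvq)=\prod_{n=1}^{L} g_\phi(a_n\mid\rvx_n)\,\pi_\theta(\rvx_{n-1}\mid\rvx_n,a_n),$$
in which the initial state $\rvx_L=\rmM^{L}$ is deterministic given $\rvq$. Because the denoiser $\pi_\theta$ does not depend on $\phi$, taking $\nabla_\phi\log$ of this product kills every $\pi_\theta$ factor and leaves $\nabla_\phi\log p_{g_\phi}(a_{1:L},\rvx_{0:L}\mid\rvq)=\sum_{n=1}^{L}\nabla_\phi\log g_\phi(a_n\mid\rvx_n)$. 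Two further facts are used throughout: the advantage $A(\rvq,\rvx_0)$ depends on the trajectory only through its endpoint $\rvx_0$, and the state space $\mathcal{X}$ is finite, so differentiation and the finite sums defining the marginals may be freely interchanged.

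First I would differentiate $\mathcal{L}_{\rm output}$. Writing $\nabla_\phi\frac{p_{g_\phi}(\rvx_0\mid\rvq)}{p_{g_{\phi_{\rm old}}}(\rvx_0\mid\rvq)}=\frac{p_{g_\phi}(\rvx_0\mid\rvq)}{p_{g_{\phi_{\rm old}}}(\rvx_0\mid\rvq)}\nabla_\phi\log p_{g_\phi}(\rvx_0\mid\rvq)$ and setting $\phi=\phi_{\rm old}$, the ratio becomes $1$, so the output gradient reduces to $\mathbb{E}_{\rvq}\mathbb{E}_{\rvx_0\sim p_{g_{\phi_{\rm old}}}}[\,\nabla_\phi\log p_{g_{\phi_{\rm old}}}(\rvx_0\mid\rvq)\,A(\rvq,\rvx_0)]$. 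The key step is to expand the score of the marginal: since $p_{g_\phi}(\rvx_0\mid\rvq)=\sum_{a_{1:L},\rvx_{1:L}}p_{g_\phi}(a_{1:L},\rvx_{0:L}\mid\rvq)$, applying $\nabla_\phi p=p\,\nabla_\phi\log p$ termwise yields the ``marginal-score as posterior expectation'' relation $\nabla_\phi\log p_{g_\phi}(\rvx_0\mid\rvq)=\mathbb{E}_{\,\text{traj}\sim p_{g_\phi}(\cdot\mid\rvx_0,\rvq)}\big[\sum_{n=1}^{L}\nabla_\phi\log g_\phi(a_n\mid\rvx_n)\big]$, where the inner expectation is over trajectories conditioned on their endpoint $\rvx_0$.

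Next I would differentiate $\mathcal{L}_{\rm token}$: each summand's ratio satisfies $\nabla_\phi\frac{g_\phi(a_n\mid\rvx_n)}{g_{\phi_{\rm old}}(a_n\mid\rvx_n)}\big|_{\phi_{\rm old}}=\nabla_\phi\log g_{\phi_{\rm old}}(a_n\mid\rvx_n)$, so the token gradient at $\phi_{\rm old}$ equals $\mathbb{E}_{\rvq}\mathbb{E}_{\{a_{1:L},\rvx_{0:L}\}\sim p_{g_{\phi_{\rm old}}}}[\sum_{n=1}^{L}\nabla_\phi\log g_{\phi_{\rm old}}(a_n\mid\rvx_n)\,A(\rvq,\rvx_0)]$. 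To match the two expressions I would fold the posterior-over-trajectories expectation from the output computation into the outer draw of $\rvx_0$ via the tower property: $\mathbb{E}_{\rvx_0\sim p_{g_{\phi_{\rm old}}}}\mathbb{E}_{\text{traj}\mid\rvx_0}[\cdot]=\mathbb{E}_{\text{traj}\sim p_{g_{\phi_{\rm old}}}}[\cdot]$, which is legitimate because $A$ depends only on $\rvx_0$ and can be carried inside the inner expectation. This turns the output gradient into exactly the full-trajectory expectation that defines the token gradient, establishing $\nabla_\phi\mathcal{L}_{\rm token}=\nabla_\phi\mathcal{L}_{\rm output}$ at $\phi=\phi_{\rm old}$.

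Finally, for the neighborhood claim I would argue by continuity. At general $\phi$ the two gradients differ only in how importance weights attach to the score—$\mathcal{L}_{\rm output}$ carries a single trajectory-level ratio while $\mathcal{L}_{\rm token}$ carries per-step ratios—but both are continuous functions of $\phi$ (the scorer $h_\phi$, hence $g_\phi$ and $\nabla_\phi\log g_\phi$, being smooth, and all sums finite). Hence $\phi\mapsto\nabla_\phi\mathcal{L}_{\rm token}(\phi)-\nabla_\phi\mathcal{L}_{\rm output}(\phi)$ is continuous and vanishes at $\phi_{\rm old}$, so it is $o(1)$ as $\phi\to\phi_{\rm old}$, giving $\nabla_\phi\mathcal{L}_{\rm token}\approx\nabla_\phi\mathcal{L}_{\rm output}$ for $\phi\approx\phi_{\rm old}$. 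I expect the main obstacle to be the marginalization step—correctly deriving the marginal-score-equals-posterior-expectation identity and then justifying the tower-property collapse that pulls the endpoint-conditioned trajectory expectation back into a plain draw of full trajectories—since this is where the equivalence between the output-level and token-level formulations genuinely lives; the remaining manipulations are routine applications of $\nabla_\phi\log$.
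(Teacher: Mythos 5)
Your proof is correct, and it takes a genuinely different route from the paper's. The paper first proves a bespoke policy-gradient theorem (Lemma~\ref{lemma:my_pgt}) for its sparse-reward, finite-horizon, non-stationary MDP, shows the baseline inside $A$ contributes nothing because $\sum_{\tau}p_{g_\phi}(\tau)=1$, writes $\nabla_\phi\mathcal{L}_{\rm output}=\mathbb{E}_{\rvq}\bigl[\nabla_\phi V_{g_\phi}(\rvq)/(\mathrm{std}_{g_{\phi_{\rm old}}}(\rvq)+\epsilon)\bigr]$ in the form $\sum_{\rvx}p_{g_\phi}(\rvx)\sum_a Q_{g_\phi}(\rvx,a)\nabla_\phi g_\phi(a|\rvx)$, and then carries out a lengthy layer-by-layer marginalization of each token term $\widehat{\mathcal{L}}_{\rm token,i}$ to show $\nabla_\phi\mathcal{L}_{\rm token}$ has the identical form with $p_{g_{\phi_{\rm old}}}$ and $Q_{g_{\phi_{\rm old}}}$ in place of $p_{g_\phi}$ and $Q_{g_\phi}$. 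You instead stay entirely at the trajectory level: the Fisher identity $\nabla_\phi\log p_{g_\phi}(\rvx_0|\rvq)=\mathbb{E}_{\tau\sim p_{g_\phi}(\cdot|\rvx_0,\rvq)}\bigl[\sum_{n}\nabla_\phi\log g_\phi(a_n|\rvx_n)\bigr]$ combined with the tower property collapses $\nabla_\phi\mathcal{L}_{\rm output}$ at $\phi=\phi_{\rm old}$ directly onto $\nabla_\phi\mathcal{L}_{\rm token}$ at $\phi=\phi_{\rm old}$, with no value or action-value functions and with the baseline simply carried along inside $A$ rather than cancelled. Your argument is considerably shorter; what the paper's $Q$-function decomposition buys is an explicit expression for \emph{both} gradients at general $\phi$, which localizes the approximation error in the substitutions $p_{g_\phi}\!\to\!p_{g_{\phi_{\rm old}}}$ and $Q_{g_\phi}\!\to\!Q_{g_{\phi_{\rm old}}}$, whereas your neighborhood claim rests on a purely qualitative continuity-of-gradients argument (sufficient for an informal proposition, but less informative about where the error lives). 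The one hypothesis your Fisher-identity step uses implicitly is positivity of $p_{g_\phi}(\rvx_0|\rvq)$ on the support of $p_{g_{\phi_{\rm old}}}(\cdot|\rvq)$, which holds here since $g_\phi$ is a softmax; it is worth stating.
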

\begin{proof}[Proof sketch.] 
Although the result can be readily inferred from standard RL, we provide a rigorous proof for our setting: sparse-reward, finite-horizon, episodic, and non-stationary MDPs where we maximize a GRPO-style objective. 
The full proof is in Appendix~\ref{appendix:sec_surrogate_output}. 
\end{proof}

\begin{figure}[t]
    \centering
   \includegraphics[width=0.7\linewidth]{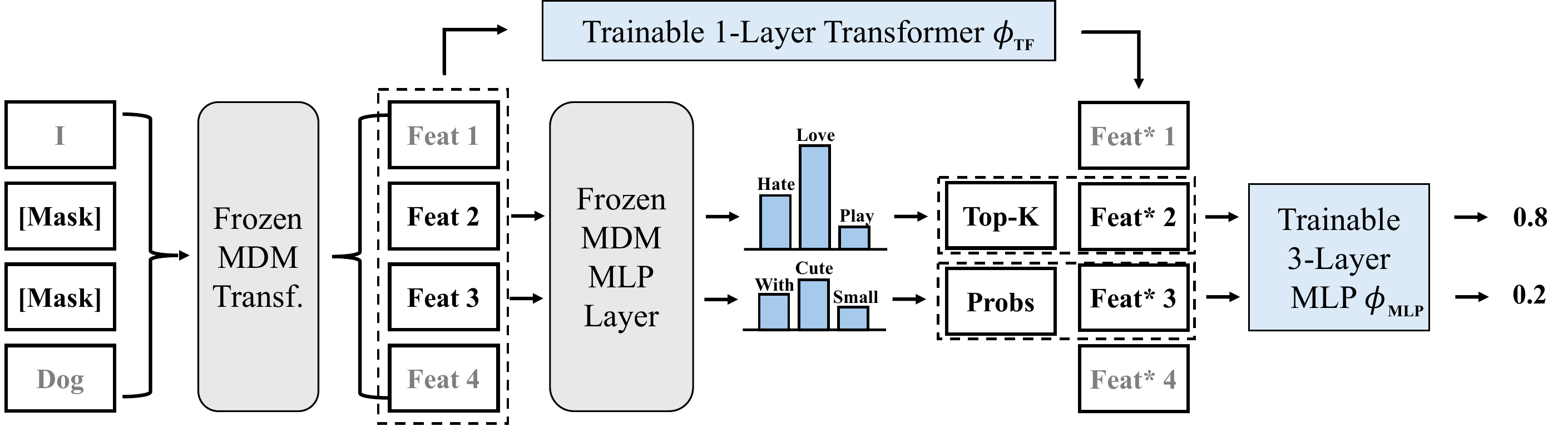}
   \caption{The structure of our unmasking policy model. The model is composed of a single Transformer layer and a 3-layer MLP. In inference time, the model proceeds as follows: 1) Given an input sentence, we run the base MDMs transformer and extract a feature. 2) This feature branches into two paths: first, the base MDMs continue to produce a prediction of token distribution, and second, fed to the policy model’s transformer layer. 3) We concatenate the extracted feature with the base MDM’s Top-K probabilities. 4) The concatenated feature is then processed by a 3-layer MLP to yield a policy over unmasking positions. The shared structure with frozen MDM supports memory-efficient training (\textit{e.g.}, LLaDA: 8B, policy model: 134M), where we only update the unmasking policy model.
   The details and memory-efficient training algorithm are provided in Appendix~\ref{appendix:algorithms}}
\label{fig:model_structure}
\vspace{-0.3cm}
\end{figure}

%\subsection{Group relative unmasking policy optimization}\label{sec:practical_training}
%This section 
While \eqref{eq:prop1_eq1} optimizes $p_{g_\phi}(\rvx_0|\rvq)=\sum_{\rvx_1,...,\rvx_L}p_\phi(\rvx_{0:L}|\rvq)$ which is intractable, our surrogate loss in \eqref{eq:loss_token} optimizes $g_\phi(a_n\mid \rvx_n)/g_{\phi_{\rm old}}(a_n\mid \rvx_n)$ which is tractable once trajectory is sampled.
Now, we detail how we train the policy model $\phi$ efficiently. Our unmasking policy model structure is shown in Figure~\ref{fig:model_structure}, and the full training algorithm table can be found in Appendix~\ref{appendix:algorithms}.

\noindent\textbf{Final Training Objective.} 
By using Proposition~\ref{prop:equal_gradient}, clipping, and Monte Carlo approximation, we transform the theoretical loss (\eqref{eq:theoretical_loss}) into tractable unmasking policy optimization (UPO) loss as follows:
% While typical GRPO tries to fine-tune the probability distribution of language models, we optimize the policy of selecting an index to unmask. Plugging our model into original GRPO loss, the loss can be derived as follows:

\begin{align}
    \mathcal{L}_{\rm{UPO}}(g_\phi,g_{\rm{ref}},D)=\mathbb{E}_{{\rvq\sim\rho_Q},\{\rvx_{0:L}^{(g)},a_{1:L}^{(g)}\}_{g=1}^G\sim p_{g_{\phi_{\rm{old}}}}}\Biggl[\frac{1}{G}\sum_{g}^{G}\Biggl(\Bigl(\frac{1}{L}\sum_{n=1}^{L} \min\bigl(\frac{g_{\phi_{}}(a_n^{(g)}|\rvx_n^{(g)})}{g_{\phi_{\rm{old}}}(a_n^{(g)}|\rvx_n^{(g)})}A_g,\nonumber\\
    \text{clip}(\frac{g_{\phi_{}}(a_n^{(g)}|\rvx_n^{(g)})}{g_{\phi_{\rm{old}}}(a_n^{(g)}|\rvx_n^{(g)})},1-\epsilon,1+\epsilon)A_g\bigr)\Bigr)- \beta\cdot D(p_{g_\phi}||p_{g_{\rm{ref}}})\Biggr)\Biggr],\label{eq:final_loss}
\end{align}
where $D(p_{g_\phi}||p_{g_{\rm{ref}}})$ should be empirical and tractable estimator of $D_{\mathrm{KL}}\!\left(p_{g_\phi}(\rvx_0|\rvq)\,\|\,p_{g_{\mathrm{ref}}}(\rvx_0|\rvq)\right)$ and clipped loss has been proven to attain global optimality~\citep{huang2024ppo}. Note that $(g)$ in superscript of \eqref{eq:final_loss} is the group index.

\begin{table}[t]
\centering
\caption{Various realization of $\mathcal{L}_{\rm{UPO}}(g_\phi,g_{\rm{ref}},D)$ varying $g_{\rm{ref}}$.}
\label{tab:policy_realizations}
\setlength{\tabcolsep}{6pt}
\renewcommand{\arraystretch}{1.2}
\begin{threeparttable}
\begin{adjustbox}{width=\textwidth}
\begin{tabular}{l C{0.28\textwidth} C{0.32\textwidth} C{0.32\textwidth}}
\toprule
\makecell{Reference policy\\$g_{\rm{ref}}$}
& \makecell{Max-Confidence\\$g_{\rm conf}$}
& \makecell{Softmax Realization\\$g_{\rm conf}^{\tau}$}
& \makecell{Top-K Realization\\$g_{\rm Top\text{-}K}$} \\
\midrule
\makecell{Divergence \\ $D(p_{g_\phi}\!|| p_{g_{\rm ref}})$}
& Cross-Entropy Loss
& \multicolumn{2}{C{0.64\textwidth}}{
  \fitcell{
  \mathcal{L}_{\rm KL}(\rvx_{0:L},a_{1:L},\rvq)
  = {\rm StopGrad}\!\Bigl(
      \tfrac{p_{g_{\phi}}(\rvx_{0:L}|\rvq)}{p_{g_{\phi_{\rm old}}}(\rvx_{0:L}|\rvq)}
      \cdot
      \bigl(1+\log\tfrac{p_{g_{\phi}}(\rvx_{0:L}|\rvq)}{p_{g_{\rm ref}}(\rvx_{0:L}|\rvq)}\bigr)
    \Bigr)
    \cdot \sum_{n=1}^{L}\log g_{\phi}(a_n\mid \rvx_n)
}}
\\
\makecell{Parametrization \\ $g_{\phi}(a^i\mid \rvx)$}
& \multicolumn{2}{C{0.60\textwidth}}{
{\fontsize{11pt}{10pt}\selectfont
  $\frac{\exp({[h_{\phi}(\rvx)]_{a^i}})}
       {\sum_{j\in\{\text{All Mask indices}\}} \exp({[h_{\phi}(\rvx)]_{a^j}})}$ 
       % for $i\in\{$All Mask indices$\}$
}
}
& {
  \fitcell{
  \displaystyle
  \frac{\exp({[h_{\phi}(\rvx)]_{a^i}})}
       {\sum_{j\in\{\text{Top-K Mask indices}\}} \exp({[h_{\phi}(\rvx)]_{a^j}})}
}}
\\
\makecell{Model initialization \\ $\phi_0$}
& \multicolumn{2}{C{0.60\textwidth}}{Model Pretrained with Cross-Entropy Loss w.r.t. $g_{\rm{conf}}$}
& Random Initialization
\\
\bottomrule
\end{tabular}
\end{adjustbox}
\end{threeparttable}
\end{table}

\noindent\textbf{Realizations of $\mathcal{L}_{\mathrm{UPO}}$ via the choice of $g_{\mathrm{ref}}$.}
We consider three reference policies—$g_{\mathrm{conf}}$, $g_{\mathrm{conf}}^{\tau}$, and $g_{\mathrm{Top\text{-}K}}$—and summarize their associated divergence terms in Table~\ref{tab:policy_realizations}. In Table~\ref{tab:policy_realizations}, we provide the tractable surrogate $\mathcal{L}_{\mathrm{KL}}$ that is justified by Propositions~\ref{prop:upper_bound_KL} and~\ref{prop:stopgrad} in Appendix~\ref{appendix:sec_KL_divergence}. Since we are sampling the trajectory from old policy model while training, we design the estimator $\mathcal{L}_{\rm{KL}}$ that can be measured by the probability of the trajectories, $p_{g_{\phi}}(\rvx_{0:L}|\rvq)$, $p_{g_{\phi_{\rm{old}}}}(\rvx_{0:L}|\rvq)$, and $p_{g_{\rm{ref}}}(\rvx_{0:L}|\rvq)$.
For softmax realization, since $g_{\mathrm{conf}^{\tau}}$ assigns nonzero probability to all trajectories, 
% and satisfies the support-set assumption in Theorem~\ref{theorem:KL_divergence}. 
we can replace $D_{\mathrm{KL}}$ with the tractable surrogate $\mathcal{L}_{\mathrm{KL}}$. We parameterize $g_\phi$ as the softmax distribution over all masked positions by default (see \eqref{eq:def_g}) for softmax realization. For pretraining, although one could initialize by matching $g_{\phi_0}$ to $g_{\mathrm{conf}^{\tau}}$ via KL minimization, we observe stronger performance with cross-entropy. For $g_{\mathrm{Top\text{-}K}}$, there exists some sampled trajectory where $p_{g_{\rm{Top-K}}}(\rvx_{0:L}|\rvq)=0$ since $g_{\mathrm{Top\text{-}K}}$ assigns zero probability to mask index out of Top-$K$. In this regard, we reparameterize $g_{\phi_{\rm{Top-K}}}$ as the softmax distribution of $h_\phi$ over the Top-$K$ set only. This reparametrization gives two benefits: 1) we can utilize $\mathcal{L}_{\rm{KL}}$ since $p_{g_{\rm{Top-K}}}$ is non-zero for all trajectory sampled from $p_{g_{\phi_{\rm{Top-K}}}}$, and 2) we do not need to pretrain policy model since it is same as $g_{\rm{Top-K}}$ when randomly initialized. In contrast, the determinism of $g_{\mathrm{conf}}$ renders $\mathcal{L}_{\mathrm{KL}}$ ill-posed whenever $g_\phi$ assigns zero mass to the chosen index, so we train with $\mathrm{CE}(g_{\mathrm{conf}}\!\parallel g_\phi)$ while retaining the regularized-MDP perspective. Full derivations and detailed explanations are provided in Appendix~\ref{appendix:upo_realizations}.

\section{Experimental results}\label{sec:result}
\subsection{Experimental Setting}\label{sec:experimental_setting}
\noindent\textbf{Models and benchmarks.}
We evaluate on the widely used large-scale MDM \textsc{LLaDA-8B-Instruct}~\citep{nie2025llada} and four datasets: (i) logic-puzzle benchmarks—\textsc{Sudoku}~\citep{zhao2025d1scalingreasoningdiffusion} and \textsc{Zebra}~\citep{shah2024causal}—and (ii) mathematical-reasoning benchmarks—\textsc{GSM8K}~\citep{cobbe2021gsm8k} and \textsc{Math500}~\citep{lightman2023math500}.

\noindent\textbf{Generation protocols and rewards.}
We utilize \textsc{LLaDA-8B-Instruct}~\citep{nie2025llada} for all experiments. For logic puzzles, the model receives a prompt and a masked solution; it then denoises by unmasking one position at a time. The reward is the fraction of slots predicted correctly (e.g., in \textsc{Sudoku}, the proportion of blank cells filled with the correct digits). For mathematical reasoning, the model is given a problem and denoises a length-128 masked sequence. Following the \textsc{LLaDA} generation protocol, we partition the 128 positions into four bins of size 32 and perform sampling by each baselines sequentially over the bins. The reward is $1$ if the final answer matches the ground truth and $0$ otherwise. \red{In addition, we incorporate the sum of log-probabilities produced by $\pi_\theta(\cdot)$ as an auxiliary reward signal.}
Detailed explanation and settings such as training hyper-parameters are provided in Appendix~\ref{appendix:experimental_details}. 
\red{We employ the dense reward rather than the binary reward because it leads to faster convergence and higher accuracy in practice. Nonetheless, we also verify that our proposed method outperforms the max-confidence baseline under the binary reward and report these results in Appendix~\ref{appendix:sec_binary_reward}.}

\subsection{Training dynamics across reference policies}\label{sec:training_dynamics}
\begin{figure}[t]
   \includegraphics[width=\linewidth]{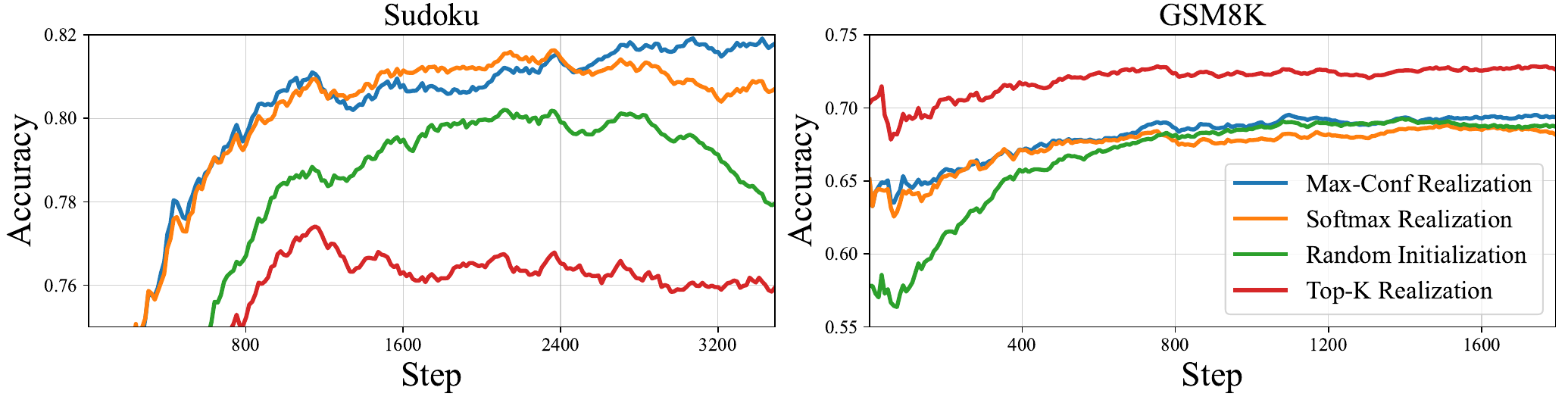}
   \caption{Training accuracy on \textsc{Sudoku} and \textsc{GSM8K} under \(\mathcal{L}_{\rm UPO}(g_\phi, g_{\rm ref}, D)\) for different choices of \(g_{\rm ref}\). Reference realizations are summarized in Table~\ref{tab:policy_realizations}. “Random initialization” denotes training a randomly initialized model with \(\mathcal{L}_{\rm UPO}(g_\phi, \varnothing, \varnothing)\).}
   \label{fig:result_sudoku_gsm8k}
\end{figure}

We train our policy model on \textsc{GSM8K} and \textsc{Sudoku} by optimizing \(\mathcal{L}_{\rm UPO}(g_\phi, g_{\rm ref}, D)\) with three reference realizations from Table~\ref{tab:policy_realizations}: \(g_{\rm conf}\), \(g_{\rm conf}^{\tau}\), and \(g_{\rm Top\text{-}K}\). We also report ablation that drops the divergence term, \ie $\mathcal{L}_{\rm UPO}(g_\phi, \varnothing, \varnothing)$, with a randomly initialized policy, which is conceptually equivalent to \textsc{DColT}. The result is shown in Figure~\ref{fig:result_sudoku_gsm8k}. 
The strongest configuration depends on the task: on \textsc{Sudoku}, max-confidence realization achieves the best accuracy, while on \textsc{GSM8K}, Top-$K$ realization performs best. We conjecture that in \textsc{GSM8K}, restricting exploration to the Top–\(K\) indices stabilizes learning under \(\mathcal{L}_{\rm UPO}\) due to the much larger number of masked tokens.
Across the two benchmarks, our optimal configuration outperforms the randomly initialized model—that is, \textsc{DColT} (green line)—by margins of at least 2\% and 3\%, respectively. 

\subsection{Main result}

\begin{table}[t]
  \centering
  \caption{Accuracy of each sampling strategy. Best per row in bold.}
  \label{tab:sampling_comparison}
  \setlength{\tabcolsep}{7pt}
  \renewcommand{\arraystretch}{1.15}
  \begin{adjustbox}{width=\textwidth}
  \begin{tabular}{l *{5}{S}}
    \toprule
    \multirow{2}{*}{Benchmark} & \multicolumn{5}{c}{Sampler} \\
    \cmidrule(lr){2-6}
    & \multicolumn{1}{c}{\shortstack{Random\\\citep{zheng2024maskeddiffusionmodelssecretly}}}
    & \multicolumn{1}{c}{\shortstack{Margin\\\citep{kim2025trainworstplanbest}}}
    & \multicolumn{1}{c}{\shortstack{Entropy\\ \citep{ye2025dream}}}
    & \multicolumn{1}{c}{\shortstack{Confidence\\\citep{chang2022maskgit}}}
    & {Ours} \\
    \midrule
    SUDOKU  & 0.616 & 0.713 & 0.671 & 0.705 & \textbf{0.817} \\
    Zebra   & 0.339 & 0.346 & 0.351 & 0.337 & \textbf{0.362} \\
    GSM8K   & 0.612 & 0.671 & 0.667 & 0.684 & \textbf{0.703} \\
    Math500 & 0.196   & \textbf{0.284}   & 0.266   & 0.272   & \textbf{0.284} \\
    \bottomrule
  \end{tabular}
\end{adjustbox}
\end{table}

\noindent\textbf{Baselines and setup.}
We compare our learned unmasking policy with four widely used schedulers: random~\citep{zheng2024maskeddiffusionmodelssecretly}, margin~\citep{kim2025trainworstplanbest}, entropy~\citep{ye2025dream}, and confidence~\citep{chang2022maskgit}. For \textsc{Sudoku} and \textsc{Zebra}, we utilize max-confidence realization, and for \textsc{GSM8K} and \textsc{Math500}, we use Top-$K$ realization where $K=5$.

\noindent\textbf{Results.}
Table~\ref{tab:sampling_comparison} shows that our policy matches or surpasses all heuristics on every benchmark. For \textsc{Sudoku} and \textsc{Zebra}, where we use $g_{\rm conf}$ as the reference, our model exceeds this reference: \textsc{Sudoku} improves from 70.5\% to 81.7\%, and \textsc{Zebra} from 33.7\% to 36.2\%. On \textsc{GSM8K}, trained with a Top-$K$ reference, our policy reaches 70.3\% and exceeds the strong max-confidence baseline at 68.4\%. On \textsc{Math500}, also trained with a Top-$K$ reference, our policy attains 28.4\%, tying the best baseline while improving over max-confidence at 27.2\%. Overall, max-confidence remains a strong single-path heuristic, yet a learned policy trained in our regularized framework consistently surpasses it.

\subsection{Ablation Study}\label{sec:ablation}

\begin{table}[t]
  \centering
  \caption{Accuracy on \textsc{GSM8K} of post-trained LLaDA with diffu-GRPO by sampling strategy.}
  \label{tab:d1_sampling_comparison}
  \setlength{\tabcolsep}{7pt}
  \renewcommand{\arraystretch}{1.15}
  \begin{adjustbox}{width=\textwidth}
    \begin{tabular}{*{5}{S[table-format=1.3]}}
      \toprule
      \multicolumn{1}{c}{\shortstack{Random\\\citep{zheng2024maskeddiffusionmodelssecretly}}} &
      \multicolumn{1}{c}{\shortstack{Margin\\\citep{kim2025trainworstplanbest}}} &
      \multicolumn{1}{c}{\shortstack{Entropy\\ \citep{ye2025dream}}} &
      \multicolumn{1}{c}{\shortstack{Confidence\\\citep{chang2022maskgit}}} &
      \multicolumn{1}{c}{Ours} \\
      \midrule
      0.638 & 0.750 & 0.740 & 0.751 & \textbf{0.764} \\
      \bottomrule
    \end{tabular}
  \end{adjustbox}
\end{table}

\begin{figure}[t]
   \includegraphics[width=\linewidth]{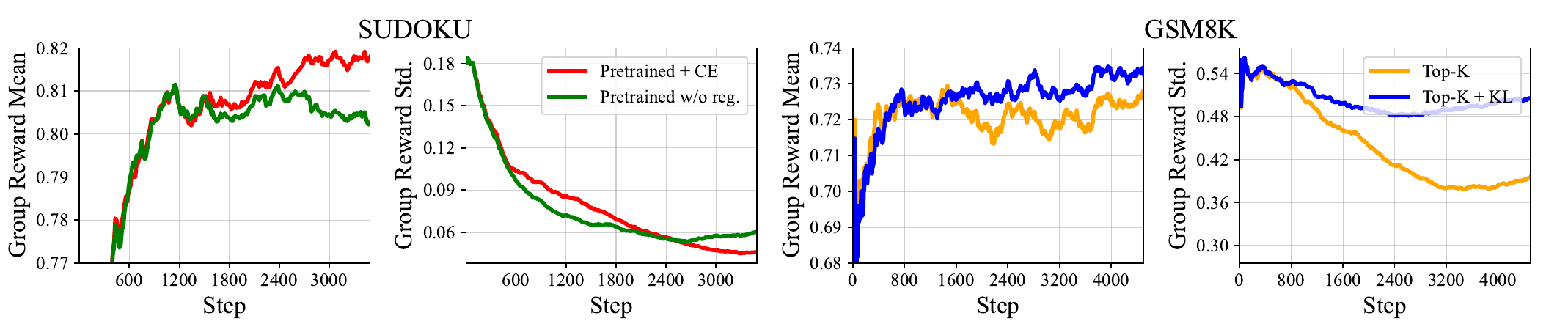}
   \caption{Mean and standard deviation of group reward during training under \(\mathcal{L}_{\rm UPO}\).
   For \textsc{SUDOKU}, we compare pretrained policy trained with/without CE to \(g_{\rm conf}\), and for
   \textsc{GSM8K}, we compare randomly initialized \(g_{\phi_{\rm Top\text{-}K}}\) trained with/without KL to \(g_{\rm Top\text{-}K}\).}
   \label{fig:comparison_with_DColT}
   \vspace{-0.5cm}
\end{figure}

\noindent\textbf{Compatibility with diffu-GRPO.}
\citet{zhao2025d1scalingreasoningdiffusion} introduces diffu-GRPO, an RL method that post-trains the MDM itself via GRPO. Our approach also uses RL but targets a different component: we optimize the unmasking policy while keeping the base MDM fixed. The two methods are therefore complementary and can be combined. To demonstrate this, we first post-train \textsc{LLaDA-8B-Instruct} on \textsc{GSM8K} with diffu-GRPO, then train our policy model with Top-$K$ realization, $\mathcal{L}_{\rm{UPO}}(g_{\phi_{\rm{Top-5}}},g_{\rm{Top-5}},\mathcal{L}_{\rm{KL}})$. As reported in Table~\ref{tab:d1_sampling_comparison}, our unmasking policy model yields additional gains on top of the diffu-GRPO model: +12.6\% over random and +1.3\% over max-confidence.

\noindent\textbf{Power of regularization.}
We compare the mean and standard deviation of the group reward while optimizing \(\mathcal{L}_{\rm UPO}(g_\phi,g_{\rm ref},D)\) in two setups: 
for \textsc{Sudoku}, we train a pretrained policy with/without CE to \(g_{\rm conf}\), and for \textsc{GSM8K}, we train a randomly initialized \(g_{\phi_{\rm Top\text{-}K}}\) with/without \(\mathcal{L}_{\rm KL}\) to \(g_{\rm Top\text{-}K}\).
As shown in Figure~\ref{fig:comparison_with_DColT}, adding the divergence term yields higher final accuracy and, at similar mean rewards—even near 1—maintains a larger group-reward standard deviation. This helps the policy model avoid premature convergence under the GRPO framework and reach a higher convergence point.
We conjecture that, for \textsc{GSM8K}, \(\mathcal{L}_{\rm KL}\) to \(g_{\rm Top\text{-}K}\) \ force the probability of the trajectory sampled from $g_{\phi_{\rm{Top-K}}}$ to be all equal and prevent early path collapse.
For \textsc{Sudoku}, we suppose that CE toward \(g_{\rm conf}\) counteracts collapse onto a suboptimal order by pulling the policy back to high-confidence alternatives, keeping multiple competitive paths active and maintaining variance.

\section{Conclusion}
We introduced a KL-regularized MDP formulation for learning the unmasking policy in masked diffusion models. Our analysis provides two guarantees: convergence to a fixed point that improves expected reward over the reference policy, and a KL tightening showing that the terminal-output distribution is closer to the real data than the reference. To make the framework practical, we derived a tractable surrogate loss and proposed several reference realizations. Across \textsc{Sudoku}, \textsc{Zebra}, \textsc{GSM8K}, and \textsc{Math500}, the learned unmasking policy model consistently matches or surpasses strong heuristics.

\blue{
\paragraph{Discussion.}
Our learned unmasking policy improves performance across benchmarks, though the gains on \textsc{GSM8K} and \textsc{Math500} are relatively smaller than on \textsc{Sudoku}. We conjecture that this stems from two differences between the domains. First, \textsc{Sudoku} exhibits relatively clear and consistent ordering structure—for example, certain positions are statistically more constraining—whereas such regularity is much weaker in mathematical reasoning tasks. Second, while \textsc{Sudoku} requires learning ordering patterns over a small digit space, \textsc{GSM8K} requires learning over the full vocabulary, despite having far fewer training examples. These factors likely make it more challenging to extract reliable ordering cues in \textsc{GSM8K}, resulting in more modest gains.
}

\blue{
\paragraph{Future directions.}
Our work demonstrates that GRPO-based training can yield task-specific
unmasking policies that outperform max-confidence and other rule-based
heuristics. As discussed above, an important next step is to develop more
generalizable policies that remain effective on larger and more diverse
language datasets; we explore this direction further in
Appendix~\ref{sec:appendix_qwen} with additional experiments. Another
promising direction is to design more stabilized policy optimization methods.
On \textsc{Sudoku}, the full reparametrization $g_\phi$ outperforms
$g_{\phi_{\text{Top-K}}}$, suggesting that near-optimal training benefits from
occasionally selecting low-confidence positions. In contrast, the much larger
search space in \textsc{GSM8K} favors the stability of the Top-K–restricted
policy. Developing an optimization scheme that enables effective exploration
of all masked positions, even in large search spaces, constitutes an important
avenue for future work.
}

\section*{Acknowledgments}
This work was supported by the National Research Foundation of Korea under Grant RS-2024-00336454, by the Institute of Information \& Communications Technology Planning \& Evaluation(IITP) grant funded by the Korea government(MSIT) (RS-2025-02304967, AI Star Fellowship(KAIST) \& RS-2024-00457882, AI Research Hub Project), and by AI Computing Infrastructure Enhancement (GPU Rental Support) User Support Program funded by the Ministry of Science and ICT (MSIT), Republic of Korea (RQT-25-120217).

\section*{Ethics Statement}
We follow the ICLR Code of Ethics. Our work trains a lightweight \emph{unmasking policy} while keeping the base MDM frozen, and evaluates only on public benchmarks (\textsc{Sudoku}, \textsc{Zebra}, \textsc{GSM8K}, \textsc{Math500}) that contain no personally identifiable information; no human-subject data were collected and no IRB approval was required.
As our method optimizes only a lightweight scheduling policy to maximize a verifiable terminal reward and does not modify the base MDM’s parameters or training data, the safety and fairness properties are largely inherited from the underlying model. Absent defects in the base model, the incremental risk is limited to potential amplification of existing behaviors; in practice, this can be mitigated by pairing our scheduler with standard safety filters and auditing the combined system on downstream tasks.

\section*{Reproducibility statement}
% For reproducibility, we release the code via a GitHub link (see abstract). 
The training algorithm is detailed in Algorithm~\ref{alg:training}, the overall experimental setup is described in Section~\ref{sec:result}, and the hyperparameters and prompt templates used for evaluation are provided in Appendix~\ref{appendix:experimental_details}. 
Model architecture and the policy inference procedure are summarized in Appendix~\ref{appendix_training_recipe}. 
Complete proofs of all theorems and propositions stated in the main paper appear in Appendix~\ref{appendix:proof}.

\bibliography{iclr2026_conference}
\bibliographystyle{iclr2026_conference}

\clearpage
\appendix
\tableofcontents
\section{Related works}\label{appendix:related_works}
\noindent\textbf{Discrete Diffusion Models.} Diffusion probabilistic models have become the state-of-the-art approach for continuous data domains such as images, audio, and video, by iteratively denoising Gaussian-corrupted inputs \citep{ho2020denoising,song2021score}. Extending this success to \emph{discrete} data (e.g.~text or sequences) has led to the development of discrete diffusion models built on discrete-state Markov chains rather than continuous noise processes \citep{Austin2021,Hoogeboom2021b}. \citet{lou2024discrete} proposed SEDD, which utilizes theoretical score entropy objective.
Meanwhile, masked diffusion models (MDMs), which utilize an absorbing mask strategy, have emerged as a leading class of discrete diffusion models for text generation~\citep{lou2024discrete,sahoo2024simple,shi2025simplified}. By using an absorbing masking state, MDMs simplify the corruption process: once a token is masked, it remains masked until generation, analogous to absorbing diffusion in continuous models \citep{shi2025simplified}. This simplification yields a clean and principled training recipe aligned with continuous-time diffusion theory, including an evidence lower bound (ELBO) objective~\citep{shi2025simplified}. Beyond continuous-time discrete diffusion, \citet{zheng2024maskeddiffusionmodelssecretly,ou2024absorbing} provides a time-agnostic training framework and simple cross-entropy loss. These advances have significantly closed the performance gap between MDMs and traditional autoregressive models on language tasks~\citep{ye2025dream,nie2025llada} when the scale of the model is similar. 

\noindent\textbf{Unmasking Strategy in Masked Diffusion Inference.} A crucial factor for MDM performance is the \emph{unmasking strategy} used during inference—i.e., how the model decides which masked tokens to reveal at each generation step. While MDMs have shown superior performance on text generations, earlier works~\citep{ou2024absorbing,shih2022training,li2021discoveringnonmonotonicautoregressiveorderings} have empirically observed that a random unmasking strategy is suboptimal. Moreover, \citet{kim2025trainworstplanbest} proves that no polynomial-time algorithm can solve any-order generation; \ie, we cannot train MDMs that exactly recovers the real data distribution for all masked sentences. However, they empirically show that selecting unmasking tokens via max-confidence~\citep{chang2022maskgit} or max-margin~\citep{kim2025trainworstplanbest} at inference time can bypass sub-hard instances and even outperform ARMs. As a result, contemporary large-scale MDMs~\citep{nie2025llada,ye2025dream} commonly adopt max-confidence scheduling at inference.

\green{
\paragraph{Remark. Detailed Explanation of Hardness of MDM Training.}\citet{kim2025trainworstplanbest} show that the conditional prediction tasks induced
by \emph{Any-order} masked diffusion models fall into the planted constrained satisfaction problem (CSP) hard phase~\citep{krzakala2009hiding},
implying that no polynomial-time algorithm can achieve optimal performance in general. Consequently, MDM training loss, \eqref{eq:objective} (the reverse-process KL objective driven by \citet{sahoo2024simple,zheng2024maskeddiffusionmodelssecretly}) cannot be driven to zero, which is consistent with their empirical finding
that loss remains high on text and L\&O-NAE-SAT datasets.
}

\begin{figure}[t]
    \centering
   \includegraphics[width=\linewidth]{figures/model.pdf}
   \caption{The structure of our unmasking policy model. The model is composed of a single Transformer layer and a 3-layer MLP. In inference time, the model proceeds as follows: 1) Given an input sentence, we run the base MDMs transformer and extract a feature. 2) This feature branches into two paths: first, the base MDMs continue to produce a prediction of token distribution, and second, fed to the policy model’s transformer layer. 3) We concatenate the extracted feature with the base MDM’s Top-K probabilities. 4) The concatenated feature is then processed by a 3-layer MLP to yield a policy over unmasking positions.}
\label{fig:model_structure_appendix}
\vspace{-0.3cm}
\end{figure}
\section{Training recipe in detail}\label{appendix_training_recipe}
In this section, we explain our memory-efficient training recipe in detail. We provide 
% how  we pretrain our policy model (Section~\ref{appendix:pretraining}), 
full derivations of various reference policy realizations (Appendix~\ref{appendix:upo_realizations}), and algorithms for training and generating (Appendix~\ref{appendix:algorithms}). For a deeper understanding of our memory-efficient training algorithm and model architecture, we first provide notations and model inference in detail.

\paragraph{Policy model inference process.} 
Our rationale behind policy model structure is two: 1) utilizing the MDM’s knowledge (extracted feature), which allows for avoiding from-scratch training and allow memory-efficient training, and 2) utilizing the MDM’s token prediction distribution, which can help the model mimic the behavior of $g_{\rm{conf}}$. The main paper outlined the policy model’s inference at a high level; here, we provide a notation and description of how, given a masked sentence \(\rvx_n\), the model generates an unmasking trajectory. For reference, we reproduce the model architecture from the main paper (Figure~\ref{fig:model_structure}) as Figure~\ref{fig:model_structure_appendix}. To recap, the unmasking policy model is parameterized as a softmax over masked positions using a learnable scorer \(h_\phi(\rvx)\in\mathbb{R}^{L}\):
\begin{align}
g_\phi(a^i\,|\,\rvx_n)
=\frac{\exp\!\big([h_\phi(\rvx_n)]_{a^i}\big)}
{\sum_{i'=1}^{n}\exp\!\big([h_\phi(\rvx_n)]_{a^{i'}}\big)}, \label{eq:appendix_policy}
\end{align}
where $a^i$ is $i$-th mask of $\rvx_n$.
As illustrated in Figure~\ref{fig:model_structure_appendix}, the policy model decomposes as
\(\phi=(\phi_{\rm MLP}, \phi_{\rm TF})\), where \(\phi_{\rm TF}\) is a 1-layer Transformer
and \(\phi_{\rm MLP}\) is a 3-layer MLP. We similarly write the MDM as
\(\theta=(\theta_{\rm TF},\theta_{\rm MLP})\) with a feature extractor transformer \(\theta_{\rm TF}\) and a token prediction head
\(\theta_{\rm MLP}\). Given the current masked sequence \(\rvx_n\) and selected mask $a^i$,
the MDM computes the token distribution as follows:
\[
\mathbf{H}_n=\theta_{\rm TF}(\rvx_n)\in\mathbb{R}^{L\times d},\qquad
\mathbf{u}_{a^i}=\mathbf{H}_n[a^i]\in\mathbb{R}^{d},
\]
\[
\mathbf{z}_{a^i}=\theta_{\rm MLP}(\mathbf{u}_{a^i})\in\mathbb{R}^{|\mathcal{V}|},\qquad
\pi_\theta\!\left(x^{a^i}_{n-1}=c\,\middle|\,\rvx_n\right)=
\mathrm{softmax}(\mathbf{z}_{a^i})_{c}\;\;\;\;(c\in\mathcal{V}),
\]
where $d$ is the output dimension of the transformer feature. Let 
\(\mathbf{p}_{a^i}=\mathrm{softmax}(\mathbf{z}_{a^i})\) the token posterior at index \(a^i\) with dimension $|\mathcal{V}|$.
The policy encoder first refines the sequence features,
\[
\widetilde{\mathbf{H}}_n=\phi_{\rm TF}(\mathbf{H}_n)\in\mathbb{R}^{L\times d},
\qquad
\widetilde{\mathbf{u}}_{a^i}=\widetilde{\mathbf{H}}_n[a^i]\in\mathbb{R}^{d}.
\]
Let \(\mathrm{TopK}(\mathbf{p}_{a^i})\in\mathbb{R}^{K}\) denote the vector of the \(K\) largest probabilities of
\(\mathbf{p}_{a^i}\), sorted in descending order. We form the concatenated feature
\begin{align}
\mathbf{s}_{a^i}
=\big[\,\widetilde{\mathbf{u}}_{a^i}\;;\;\mathrm{TopK}(\mathbf{p}_{a^i})\,\big]\in\mathbb{R}^{d+K},\label{eq:concat_feat}    
\end{align}
and map it to a scalar score via the 3-layer MLP:
\[
[h_\phi(\rvx_n)]_{a^i}=\phi_{\rm MLP}(\mathbf{s}_{a^i})\in\mathbb{R}.
\]
Finally, the unmasking policy over the masked index set is the softmax
of these scores (\eqref{eq:appendix_policy}).

% \subsection{Pretraining details}\label{appendix:pretraining}
\subsection{Full Derivations for $\mathcal{L}_{\rm UPO}$ Realizations}\label{appendix:upo_realizations}
\paragraph{General formulations.} To recap, our final objective is given as follows:
\begin{align}
    \mathcal{L}_{\rm{UPO}}(g_\phi,g_{\rm{ref}},D)=\mathbb{E}_{{\rvq\sim\rho_Q},\{\rvx_{0:L}^{(g)},a_{1:L}^{(g)}\}_{g=1}^G\sim p_{g_{\phi_{\rm{old}}}}}\Biggl[\frac{1}{G}\sum_{g}^{G}\Biggl(\Bigl(\frac{1}{L}\sum_{n=1}^{L} \min\bigl(\frac{g_{\phi_{}}(a_n^{(g)}|\rvx_n^{(g)})}{g_{\phi_{\rm{old}}}(a_n^{(g)}|\rvx_n^{(g)})}A_g,\nonumber\\
    \text{clip}(\frac{g_{\phi_{}}(a_n^{(g)}|\rvx_n^{(g)})}{g_{\phi_{\rm{old}}}(a_n^{(g)}|\rvx_n^{(g)})},1-\epsilon,1+\epsilon)A_g\bigr)\Bigr)- \beta\cdot D(p_{g_\phi}||p_{g_{\rm{ref}}})\Biggr)\Biggr],\label{eq:recap_final_loss}
\end{align}

The divergence term \(D\) in \eqref{eq:recap_final_loss} is a \emph{functional} that, given a prompt
\(\rvq\), a full trajectory \(\tau=(\rvx_{0:L}, a_{1:L})\), and models
\((g_\phi, g_{\rm ref}, \pi_\theta)\), returns a scalar estimate. Formally,
\[
D_{g_\phi, g_{\rm ref}, \pi_\theta}
:\;\mathcal{Q}\times \mathcal{X}_{0:L}\times \mathcal{A}_{1:L}\rightarrow \mathbb{R},
\]
and its empirical expectation serves as a tractable estimator of the \textit{output-level} KL divergence  $D_{\rm KL}\!\big(p_{g_\phi}(\rvx_0\mid\rvq)\,\|\,p_{g_{\rm ref}}(\rvx_0\mid\rvq)\big)$. Following Proposition~\ref{prop:upper_bound_KL} and Proposition~\ref{prop:stopgrad}, we can decrease the \textit{output-level} KL divergence with $\mathcal{L}_{\rm{KL}}$ defined as follows:
\begin{align}
    \mathcal{L}_{\rm KL}(\rvx_{0:L},a_{1:L},\rvq)
  = {\rm StopGrad}\!\Bigl(
      \tfrac{p_{g_{\phi}}(\rvx_{0:L}|\rvq)}{p_{g_{\phi_{\rm old}}}(\rvx_{0:L}|\rvq)}
      \cdot
      \bigl(1+\log\tfrac{p_{g_{\phi}}(\rvx_{0:L}|\rvq)}{p_{g_{\rm ref}}(\rvx_{0:L}|\rvq)}\bigr)
    \Bigr)
    \cdot \sum_{n=1}^{L}\log g_{\phi}(a_n\mid \rvx_n),\label{eq:appendix_L_KL}
\end{align}
where $g_\phi, g_{\rm ref}$ and $\pi_\theta$ is omitted in $\mathcal{L}_{\rm{KL}}$ for brevity. We can convert $p_g(\rvx_{0:L}|\rvq)$ into $\prod_{n=1}^L g(a_n|\rvx_n)\cdot\pi_\theta(\rvx_{n-1}|\rvx_n,a_n,\rvq)$ in \eqref{eq:appendix_L_KL}:
\begin{align}
    \mathcal{L}_{\rm KL}(\cdot)
  = {\rm StopGrad}\!\Bigl(
      \prod_{n=1}^L\frac{g_\phi(a_n|\rvx_n)}{g_{\phi_{\rm{old}}}(a_n|\rvx_n)}
      \cdot
      \bigl(1+\sum\log\frac{g_\phi(a_n|\rvx_n)}{g_{{\rm{ref}}}(a_n|\rvx_n)}\bigr)
    \Bigr)
    \cdot \sum_{n=1}^{L}\log g_{\phi}(a_n\mid \rvx_n).\label{eq:final_KL_eq}
\end{align}
We already computed $g_{\phi_{\rm{old}}}(\cdot)$ while sampling, and $g_{\phi_{}}(\cdot)$ while computing reward-maximization objective term in $\mathcal{L}_{\rm{UPO}}$, and $g_{\rm{ref}}$ does not require network evaluations. Therefore, we can measure $\mathcal{L}_{\rm{KL}}$ without no additional computation. Furthermore, since $\sum_{n=1}^{L}\log g_{\phi}(a_n\mid \rvx_n)$ is summation, we can perform gradient update token-by-token, which allows tractable and memory-efficient training.

\paragraph{Softmax realization.}
For softmax realization of $g_{\rm{ref}}$, we consider $g_{\rm{conf}^\tau}$ given as follows:
\begin{align}
    g_{\rm{conf}^\tau}(a^i|\rvx_n) &=\frac{\sum_{c\in\mathcal{V}} \exp({\pi_\theta(x_{n-1}^{a^i}=c|\rvx_n)/\tau})}{\sum_{a^j}\sum_{c\in\mathcal{V}} \exp({\pi_\theta(x_{n-1}^{a^j}=c|\rvx_n)/\tau})}, \;\;\text{s.t.}\;\lim_{\tau\rightarrow0+}g_{\rm{conf}^\tau}=g_{\text{conf}}.
\end{align}
Here, since $g_{\rm{conf}^\tau}$ assigns non-zero probability to every unmasking index, we can estimate the terminal KL divergence with a sampled trajectory. We utilize $\mathcal{L}_{\rm{KL}}$ (\eqref{eq:final_KL_eq}) by replacing $g_{\rm{ref}}$ with $g_{\rm{conf}^\tau}$.

\paragraph{Top-$K$ realization} For Top-$K$ realization of $g_{\rm{ref}}$, we consider $g_{\rm{conf}^\tau}$ given as follows:
\begin{align}
    g_{\rm{Top-K}}(a^i|\rvx_n)&=\frac{1}{K}\cdot\textbf{1}(a^i \in\underset{{a^j}}{\rm{argtopk}} (\max_{c\in\mathcal{V}} \pi_\theta(\rvx_{n-1}^{a^j}=c |\rvx_n))),\;\;\text{s.t.}\;g_{\rm{Top-1}}=g_{\rm{conf}}.
\end{align}
where we reparametrize $g_{\phi_{\rm{Top-K}}}$ defined as follows:
\begin{align}
g_{\phi_{\rm{Top-K}}}(a^i|\rvx_n)=\frac{\exp({[h_{\phi}(\rvx_{n})]_{a^i}})}
       {\sum_{a^j\in{\rm{argtopk}}\max\pi_\theta(\cdot)} \exp({[h_{\phi}(\rvx_{n})]_{a^j}})},
\end{align}
which is the softmax distribution of $h_\phi$ on the Top-$K$ set of unmasking positions. This reparametrization offers two advantages: 1) when $\phi$ is randomly initialized, we can expect $g_{\phi_{\rm{Top-K}}}=g_{\rm{Top-K}}$, and 2) for sampled trajectory, $g_{\rm{Top-K}}(a_n|\rvx_n)$ is non-zero for all $n\in[1,L]$ since we sample from $g_{\phi_{\rm{Top-K}}}$. Therefore, we can train $g_{\phi_{\rm{Top-K}}}$ from scratch with $\mathcal{L}_{\rm{KL}}$ by replacing $g_{\rm{ref}}=g_{\rm{Top-K}}$.

\paragraph{Max-confidence realization.} For max-confidence realization of $g_{\rm{ref}}$, we consider $g_{\rm{conf}}$ given as follows:
\begin{align}
    g_{\text{conf}}(a^i|\rvx_n) = 
        \textbf{1}\bigl(a^i=\argmax_{a^j} (\max_{c\in\mathcal{V}} \pi_\theta(\rvx_{n-1}^{a^j}=c |\rvx_n))\bigr),
\end{align}
where $g_{\rm{conf}}$ is deterministic. The terminal KL divergence $D_{\rm KL}\!\big(p_{g_\phi}(\rvx_0\mid\rvq)\,\|\,p_{g_{\rm ref}}(\rvx_0\mid\rvq)\big)$ can be defined where $\pi_\theta$ is stochastic, but we cannot empirically estimate the terminal KL divergence with sampled trajectory since there exists trajectory sampled from $p_{g_{\phi}}$ is 0 for $p_{g_{\rm{conf}}}$. Therefore, we just utilize cross-entropy given as follows:
\begin{align}
    {\rm{CE}}_{g_\phi,g_{\rm{conf}},\pi_\theta}(\rvx_{0:L},\rvq)=\sum_{n=1}^L\sum_{a\in\mathcal{A}_{\rvx_n}}g_{\rm{conf}}(a|\rvx_n)\log g_{\theta}(a|\rvx_n).
\end{align}
Although not theoretical, we can expect to regularize $g_{\phi}$ being far away from $g_{\rm{conf}}$, and we have empirically confirmed the performance gain (Section~\ref{sec:ablation}).

\subsection{Algorithms}\label{appendix:algorithms}
\begin{algorithm}[t]
\caption{Generalized Sampling of MDMs}
\label{alg:sampling-mdm}
\begin{algorithmic}[1]
\Require Sequence length $L$, vocabulary $\mathcal{V}=\{1,\dots,m\}$, pretrained masked diffusion model $\pi_\theta$, unmasking policy $g$
% \State $\rvx_L \leftarrow [\rmM\,\rmM\,\dots\,\rmM]$
\State $\tau_L \leftarrow 1$
\For{$n \gets L \ \textbf{to}\ 1$}
  \State sample $a_n \sim g(a|\rvx_n)$
  \Comment{unmasking token distribution. \textit{e.g.}, $\mathcal{U}(\{a^i\}_{i=1}^n)$}
  \State sample $x_{n-1}^{a_n}\sim\pi_\theta(x_{n-1}^{a_n}|\rvx_{n})$
  \Comment{per-position categorical probabilities}
  \State $\rvx_{n-1} \leftarrow \{x_n^1,x_n^2,\dots,x_n^{a_n-1},x_{n-1}^{a_n},x_n^{a_n+1},\dots,x^L_n\}$
\EndFor
\State \textbf{Output:} $\rvx_0$
\end{algorithmic}
\end{algorithm}

\begin{algorithm}[h]
\caption{Memory-efficient training of the unmasking policy model}
\label{alg:training}
\begin{algorithmic}[1]
\Require Pretrained masked diffusion model $\pi_\theta$ composed of feature extracting transformer $\theta_{\rm{TF}}$ and token estimator $\theta_{\rm{MLP}}$, unmasking policy model $g_\phi$ composed of 1-layer transformer $\phi_{\rm{TF}}$ and 3-Layer MLP $\phi_{\rm{MLP}}$, divergence loss $D$, reference policy $g_{\rm{ref}}$, prompt distribution $\mathcal{Q}$, number of inner updates $\mu$
% \State $\rvx_L \leftarrow [\rmM\,\rmM\,\dots\,\rmM]$
\While{not converge}
\Statex \textbf{Sampling phase}
\With{\texttt{torch.no\_grad()}}
  \State $g_{\phi_{\rm{old}}} \leftarrow g_{\phi}$
  \State Sample a prompt $\rvq\sim\mathcal{Q}$
  \State Sample G trajectories $\{\rvx_{0:L}^{(g)}, a_{1:L}^{(g)}\}_{g=1}^G\sim p_{g_{\phi_{\rm{old}}}}(\cdot|\rvq)$, $g\in[G]$
  \State \Comment{while sampling, save $\rmH_{n}^{(g)}=\theta_{\rm{TF}}(\rvx_n^{(g)})$, $\mathcal{A}_{\rvx_n^{(g)}}$, $\pi_\theta(\cdot|\rvx_n^{(g)})$}, $g_{\phi_{old}}(a_n^{(g)}|\rvx_n^{(g)})$
  \State \Comment{if $D=\mathcal{L}_{\rm{KL}}$, record $p_{g_{\phi_{\rm{old}}}}(\rvx_{0:L}^{(g)},a_{1:L}^{(g)}|\rvq)$, $p_{g_{\phi_{\rm{ref}}}}(\rvx_{0:L}^{(g)},a_{1:L}^{(g)}|\rvq)$ while sampling}
  \State For each $\rvx_0^{(g)}$, compute reward $r_g$ and advantage $A_g$
  \EndWith
  \Statex \textbf{Gradient update phase}
  \For{gradient update iterations $n=1,\dots,\mu$}
  \With{\texttt{torch.no\_grad()}}
  \State For each $\rvx_{0:L}^{(g)}$, pre-compute $p_{g_{\phi}}(\rvx_{0:L}^{(g)}|\rvq)$
    \If{$D=\mathcal{L}_{\rm{KL}}$}
    \State $\kappa_g\leftarrow\Bigl(
      \tfrac{p_{g_{\phi}}(\rvx_{0:L})}{p_{g_{\phi_{\rm old}}}(\rvx_{0:L})}
      \cdot
      \bigl(1+\log\tfrac{p_{g_{\phi}}(\rvx_{0:L})}{p_{g_{\rm ref}}(\rvx_{0:L})}\bigr)
    \Bigr)$
    \EndIf
  \EndWith
  \State Make mini-batch $\{B\}_{i=1}^b$ such that $B_1\cup B_2\cup\dots\cup B_b=\{1,\dots,L\}$ and $B_i\cap B_j=\emptyset$ for all $i\neq j$
  \For{$B\in\{B_i\}_{i=1}^b$}
  \State For each $n\in B$ and $g\in\{1,..,G\}$, $\widetilde{\rmH}_n^{(g)}\leftarrow\phi_{\rm{TF}}(\rmH_n^{(g)})$
  \State For each $n\in B$ and $g\in\{1,..,G\}$, construct $\rvs_n^{(g)}$ using \eqref{eq:concat_feat}
  \State $g_{\phi_{}}(\cdot|\rvx_n^{(g)})\leftarrow\phi_{\rm{MLP}}(\rvs_n^{(g)})$
  \State $\mathcal{L}_\phi\leftarrow\frac{1}{|B|}\sum_{n\in B,g\in G} \min\bigl(\frac{g_{\phi_{}}(a_n^{(g)}|\rvx_n^{(g)})}{g_{\phi_{\rm{old}}}(a_n^{(g)}|\rvx_n^{(g)})}A_g,
    \text{clip}(\frac{g_{\phi_{}}(a_n^{(g)}|\rvx_n^{(g)})}{g_{\phi_{\rm{old}}}(a_n^{(g)}|\rvx_n^{(g)})},1-\epsilon,1+\epsilon)A_g\bigr)$
    \If{$D=\rm{CE}$}
    \State$\mathcal{L}_\phi\leftarrow\mathcal{L}_\phi-\beta\cdot\sum_{g\in G}\sum_{n\in B}{\rm{CE}}(g_{\rm{conf}}(\cdot|\rvx_n^{(g)})||g_{\phi}(\cdot|\rvx_n^{(g)}))$
    \EndIf
    \If{$D=\mathcal{L}_{\rm{KL}}$}
    \State$\mathcal{L}_\phi\leftarrow\mathcal{L}_\phi-\beta\cdot\sum_{g\in G}\kappa_g\sum_{n\in B}\log g_{\phi}(a_n^{(g)}\mid \rvx_n^{(g)})$
    \EndIf
    \State Update $\phi$ by performing gradient ascent on $\mathcal{L}_{\rm{UPO}}$
  \EndFor
  \EndFor
\EndWhile
\State \textbf{Output:} $g_\phi$
\end{algorithmic}
\end{algorithm}

\paragraph{Sampling algorithm.} We provide generalized sampling algorithm of large-scale MDMs in Algorithm~\ref{alg:sampling-mdm}. The sampling trajectory differs by unmasking policy $g$, and we can perform various sampling by setting $g$ as $g_{\rm{rand}}$, $g_{\rm{conf}}$, $g_{\rm{conf}^\tau}$, $g_{\rm{Top-K}}$, or $g_{\phi}$.

\paragraph{Unmasking policy training algorithm.} We provide a memory-efficient unmasking policy model training algorithm in Algorithm~\ref{alg:training}. We have previously mentioned how we utilize the extracted features and token distribution of MDM as input to the unmasking policy model. We also have provided how the gradient of $\mathcal{L}_{\rm{UPO}}$ can be measured by the token-wise gradient of $g_\phi(\cdot|\rvx)$. Summing up all techniques, we can perform training without using GPU memory more than the amount of original MDM occupies. Indeed, we have confirmed that training can be performed with only one A100 40GB GPU.

Specifically, the algorithm is divided into two phases: the sampling phase and the gradient update phase. At the sampling phase, we sample various sequences from $p_{g_{\phi_{\rm{old}}}}(\cdot|\rvq)$. In this phase, we gather the extracted feature of the MDM transformer, the token distribution of MDM, the mask token index, and record the probability of the trajectory. At the gradient update phase, we can perform mini-batch training where all the gradients of $g_\phi(\cdot|\rvx_n)$ in  $\mathcal{L}_{\rm{UPO}}$ are independent of others. Furthermore, since we have gathered all the information of $\pi_\theta$ that is needed to update the gradient in the sampling phase, we can free the memory of MDM $\theta$. The only parameters requiring forward and backward process in the gradient update phase is the unmasking policy model, which is much smaller than MDM (\textit{e.g.}, LLaDA: 8B, corresponding policy model: 134M). Therefore, we can perform memory-efficient training. For more details, refer to the algorithm table.

% \subsection{Pretraining details}
% For \textsc{SUDOKU} and \textsc{Zebra}, we pretrain unmasking policy model with cross-entropy where max-confidence acts as an oracle. We prepare dataset composed of $\mathcal{D}=\{\rvq,\rva\}_{i=1}^N$ where $\rvq$ is prompt and $\rva$ is answer. Then we randomly choose remask $\rva$

%\input{appendix/proofs}
\clearpage
\section{Detailed Proof}\label{appendix:proof}
In this section, we provide full proofs omitted in the main paper.
\subsection{proof of KL tightening for MDM policy}\label{appendix:sec_proof_KL_tightening}
\KLTightening*
\begin{proof}
We first recall the GRPO policy training dynamics learning \eqref{eq:theoretical_loss} proven by \citet{mroueh2025reinforcement}:

\begin{lemma}[GRPO Policy Dynamic~\citep{mroueh2025reinforcement}]\label{lemma:policy_dynamics}
Optimal GRPO iterations policies solving \eqref{eq:theoretical_loss} satisfy the following recursion, for $n\ge1$:
\begin{align}
p_{g_{\phi_n}}(\rvx_0 \mid \rvq)
&= \frac{1}{Z_{n-1}(\rvq)}\, p_{g_{\rm ref}}(\rvx_0 \mid \rvq) \\
&\quad\times \exp\!\left(
    \frac{1}{\beta}\Big[
        w_\epsilon^+\!\big(r_{g_{\phi_{n-1}}}(\rvq)\big)\, r(\rvq,\rvx_0)
        - w_\epsilon^-\!\big(r_{g_{\phi_{n-1}}}(\rvq)\big)\, (1 - r(\rvq,\rvx_0))
    \Big]
\right).
\end{align}
where
\begin{align}
Z_{n-1}(\rvq)
= r_{g_{\rm ref}}(\rvq)\,
    \exp\!\left(\frac{1}{\beta}\,
        w_\epsilon^+\!\big(r_{g_{\phi_{n-1}}}(\rvq)\big)
    \right)
\;+\;
\big(1 - r_{g_{\rm ref}}(\rvq)\big)\,
    \exp\!\left(-\frac{1}{\beta}\,
        w_\epsilon^-\!\big(r_{g_{\phi_{n-1}}}(\rvq)\big)
    \right).
\end{align}

and $w_\epsilon^+(r)$ and $w_\epsilon^-(r)$ are defined as follows:
\begin{align}
    w_\epsilon^+(r) = \frac{1-r}{r(1-r)+\epsilon},\;\;\;\;w_\epsilon^-(r) = \frac{r}{r(1-r)+\epsilon}
\end{align}
\end{lemma}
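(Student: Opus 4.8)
The plan is to treat \eqref{eq:theoretical_loss} as a sequence of per-iteration, output-distribution-level optimization problems and to solve each one exactly via the Gibbs variational principle. Fix a prompt $\rvq$ and the outer iteration index $n$, so that $\phi_{\rm old}=\phi_{n-1}$ is frozen. The first step is to collapse the importance-weighted reward term using the identity $\mathbb{E}_{\rvx_0\sim p_{g_{\phi_{n-1}}}}\!\big[\tfrac{p_{g_{\phi}}(\rvx_0|\rvq)}{p_{g_{\phi_{n-1}}}(\rvx_0|\rvq)}A(\rvq,\rvx_0)\big]=\sum_{\rvx_0}p_{g_\phi}(\rvx_0|\rvq)\,A(\rvq,\rvx_0)$, valid on $\operatorname{supp}(p_{g_{\phi_{n-1}}})$. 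Crucially, because the advantage $A(\rvq,\rvx_0)=(r(\rvq,\rvx_0)-r_{g_{\phi_{n-1}}}(\rvq))/(\mathrm{std}_{g_{\phi_{n-1}}}(\rvq)+\epsilon)$ depends only on the frozen iterate $\phi_{n-1}$, it is a constant function of $\rvx_0$ during the inner maximization over $\phi_n$. The objective for iteration $n$ therefore reduces to the linear-minus-KL functional $J(p)=\langle p,A\rangle-\beta\,D_{\mathrm{KL}}(p\,\|\,p_{g_{\rm ref}})$ over the probability simplex, where I write $p=p_{g_{\phi_n}}(\cdot|\rvq)$.

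Second, I would solve $\max_p J(p)$ in closed form. Defining $p^\star(\rvx_0)\propto p_{g_{\rm ref}}(\rvx_0|\rvq)\exp(A(\rvq,\rvx_0)/\beta)$ with normalizer $Z$, a short completing-the-square computation gives the exact identity $J(p)=\beta\log Z-\beta\,D_{\mathrm{KL}}(p\,\|\,p^\star)$. Since $D_{\mathrm{KL}}\ge 0$ with equality iff $p=p^\star$, the unique maximizer over the simplex is $p_{g_{\phi_n}}=p^\star$; this route is cleaner than Lagrange multipliers, though a stationarity argument yields the same answer. It remains to specialize $A$ to the binary-reward setting. Because $r(\rvq,\rvx_0)\in\{0,1\}$, the reward is Bernoulli under $p_{g_{\phi_{n-1}}}$, so $\mathrm{std}_{g_{\phi_{n-1}}}(\rvq)$ and $\epsilon$ combine into the denominator $r_{g_{\phi_{n-1}}}(1-r_{g_{\phi_{n-1}}})+\epsilon$ appearing in $w_\epsilon^\pm$, and $A$ takes exactly two values: $w_\epsilon^+(r_{g_{\phi_{n-1}}})$ on the correct set $\{r=1\}$ and $-w_\epsilon^-(r_{g_{\phi_{n-1}}})$ on $\{r=0\}$. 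Substituting these two values into $\exp(A/\beta)$ reproduces the stated factor $\exp\!\big(\tfrac1\beta(w_\epsilon^+\!\cdot r-w_\epsilon^-\!\cdot(1-r))\big)$ in Lemma~\ref{lemma:policy_dynamics}.

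Finally, I would evaluate the partition function by splitting the sum over $\rvx_0$ according to the value of $r$: $Z_{n-1}(\rvq)=\big(\sum_{r=1}p_{g_{\rm ref}}\big)e^{w_\epsilon^+/\beta}+\big(\sum_{r=0}p_{g_{\rm ref}}\big)e^{-w_\epsilon^-/\beta}$, and then use $\sum_{\rvx_0:r=1}p_{g_{\rm ref}}(\rvx_0|\rvq)=\mathbb{E}_{p_{g_{\rm ref}}}[r]=r_{g_{\rm ref}}(\rvq)$ together with its complement $1-r_{g_{\rm ref}}(\rvq)$, yielding the displayed $Z_{n-1}(\rvq)$. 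The main obstacle is not the algebra but justifying that the inner maximization may be carried out over the full simplex: the optimization is nominally over the scorer parameters $\phi$ acting through the composed map $g_\phi\mapsto p_{g_\phi}(\rvx_0|\rvq)$, not over an unconstrained distribution. I would therefore make explicit the realizability (tabular) assumption, as in \citet{mroueh2025reinforcement}, that the policy class is expressive enough for $p_{g_\phi}(\cdot|\rvq)$ to attain the Gibbs optimum, and separately verify the support conditions ($p_{g_{\phi_{n-1}}}>0$ for the importance-sampling identity and $p_{g_{\rm ref}}>0$ for the KL term) so that every quantity above is well-defined; the base case $r_{g_{\phi_0}}=r_{g_{\rm ref}}$ then follows from initializing $\phi_0$ at the reference.
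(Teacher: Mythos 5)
Your proposal is correct in structure, and it is worth noting at the outset that the paper itself contains no proof of Lemma~\ref{lemma:policy_dynamics}: the lemma is imported verbatim from \citet{mroueh2025reinforcement} and used as a black box inside the proof of Theorem~\ref{theorem:KL_divergence} in Appendix~\ref{appendix:sec_proof_KL_tightening}. Your route---collapse the importance ratio via $\mathbb{E}_{\rvx_0\sim p_{g_{\phi_{n-1}}}}[\tfrac{p_{g_\phi}}{p_{g_{\phi_{n-1}}}}A]=\langle p_{g_\phi},A\rangle$ with $A$ frozen at $\phi_{n-1}$, solve the resulting linear-minus-KL problem by the Gibbs identity $J(p)=\beta\log Z-\beta D_{\mathrm{KL}}(p\,\|\,p^\star)$ with $p^\star\propto p_{g_{\rm ref}}\exp(A/\beta)$, then evaluate $Z$ by splitting the sum over $\{r=1\}$ and $\{r=0\}$ and using $\sum_{\rvx_0:\,r=1}p_{g_{\rm ref}}(\rvx_0|\rvq)=r_{g_{\rm ref}}(\rvq)$---is exactly the mechanism behind the cited result, so you have reconstructed the missing argument rather than found a new one. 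Your realizability caveat is also genuinely needed, and is the one place where this paper's setting is \emph{more} delicate than the tabular setting of \citet{mroueh2025reinforcement}: here $p_{g_\phi}(\cdot|\rvq)$ is the terminal marginal of trajectories routed through the frozen denoiser $\pi_\theta$, so the achievable terminal distributions need not sweep the whole simplex, and without the expressivity assumption the recursion holds only approximately. (One wording slip: $A$ is not ``a constant function of $\rvx_0$''; it is a $\phi$-independent function of $\rvx_0$ taking two values, which is what your subsequent case split actually uses.)

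One detail deserves a correction rather than the gloss you gave it. You assert that for Bernoulli rewards the standardization ``$\mathrm{std}_{g_{\phi_{n-1}}}(\rvq)+\epsilon$'' combines into the denominator $r(1-r)+\epsilon$ appearing in $w_\epsilon^\pm$. It does not: with $A=(r-r_{g_{\phi_{n-1}}})/(\mathrm{std}+\epsilon)$ and $\mathrm{std}=\sqrt{r_{g_{\phi_{n-1}}}(1-r_{g_{\phi_{n-1}}})}$, the denominator is $\sqrt{r(1-r)}+\epsilon$; the lemma's $w_\epsilon^\pm$ as typeset use $r(1-r)+\epsilon$; and Theorem~\ref{theorem:GRPO_convergence} in the same paper uses $\sqrt{r(1-r)+\epsilon}$, i.e., $\epsilon$ inside the root, which is the smoothing actually adopted in \citet{mroueh2025reinforcement}. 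These three expressions are pairwise distinct. The inconsistency originates in the paper's own statements, but a self-contained proof must pick a convention: if you define the smoothed advantage $A=(r-r_{g_{\phi_{n-1}}})/\sqrt{r_{g_{\phi_{n-1}}}(1-r_{g_{\phi_{n-1}}})+\epsilon}$, your two exponential weights become exactly the $w_\epsilon^\pm$ of the convergence theorem, whereas under the paper's literal definition of $A$ they do not equal the $w_\epsilon^\pm$ displayed in the lemma. Flagging and resolving this (rather than letting the quantities silently ``combine'') is the only substantive repair your otherwise sound derivation needs.
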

Under Theorem~\ref{theorem:GRPO_convergence}, at the fixed point, we can substitute the converged policy into the recursion and compare the two KL terms. The next steps simply expand the KL difference.
Then, for the convergence point where $\lim_{n\rightarrow\infty}r_{g_{\phi_n}}=r_{g^*}$:

\begin{align}
 D_{\rm{KL}}&(p_{\theta^*}(\rvx_0|\rvq)\,\|\,p_{g_{\rm{ref}}}(\rvx_0|\rvq))
 - D_{\rm{KL}}(p_{\theta^*}(\rvx_0|\rvq)\,\|\,p_{g_{\phi^*}}(\rvx_0|\rvq)) \\
    &= \mathbb{E}_{p_{\theta^*}}\!\left[\log p_{\theta^*}(\rvx_0|\rvq)-\log p_{g_{\rm{ref}}}(\rvx_0|\rvq)\right]
     - \mathbb{E}_{p_{\theta^*}}\!\left[\log p_{\theta^*}(\rvx_0|\rvq)-\log p_{g_{\phi^*}}(\rvx_0|\rvq)\right] \\
    &= \mathbb{E}_{p_{\theta^*}}\!\left[\log p_{g_{\phi^*}}(\rvx_0|\rvq)-\log p_{g_{\rm{ref}}}(\rvx_0|\rvq)\right] \\
    &= \mathbb{E}_{p_{\theta^*}}\!\Bigg[
        \frac{1}{\beta}\Big(
            w_\epsilon^+\!\big(r_{g_{\phi^*}}(\rvq)\big)\, r(\rvq,\rvx_0)
            - w_\epsilon^-\!\big(r_{g_{\phi^*}}(\rvq)\big)\, \big(1-r(\rvq,\rvx_0)\big)
        \Big)
        - \log Z_{\phi^*}
    \Bigg] \\
    &= \frac{1}{\beta}\, w_\epsilon^+\!\big(r_{g_{\phi^*}}(\rvq)\big)
       - \log Z_{\phi^*}
       \hspace{1cm}\because \forall \rvx_0\in\mathcal{X},\;
       p_{\theta^*}(\rvx_0|\rvq)>0 \Rightarrow r(\rvq,\rvx_0)=1 \\
    &= \frac{1}{\beta}\, w_\epsilon^+\!\big(r_{g_{\phi^*}}(\rvq)\big) \nonumber\\
    &\quad - \log\!\Bigg(
        r_{g_{\rm{ref}}}(\rvq)\,
        \exp\!\Big(\frac{1}{\beta}\, w_\epsilon^+\!\big(r_{g_{\phi^*}}(\rvq)\big)\Big)
        + \big(1 - r_{g_{\rm{ref}}}(\rvq)\big)\,
        \exp\!\Big(-\frac{1}{\beta}\, w_\epsilon^-\!\big(r_{g_{\phi^*}}(\rvq)\big)\Big)
    \Bigg) \\
    &\red{>} \frac{1}{\beta}\, w_\epsilon^+\!\big(r_{g_{\phi^*}}(\rvq)\big) \nonumber\\
    &\quad - \log\!\Bigg(
        r_{g_{\rm{ref}}}(\rvq)\,
        \exp\!\Big(\frac{1}{\beta}\, w_\epsilon^+\!\big(r_{g_{\phi^*}}(\rvq)\big)\Big)
        + \big(1 - r_{g_{\rm{ref}}}(\rvq)\big)\,
        \exp\!\Big(\frac{1}{\beta}\, w_\epsilon^+\!\big(r_{g_{\phi^*}}(\rvq)\big)\Big)
    \Bigg) \\
    % &\because r_{g_{\phi^*}}(\rvq) > r_{g_{\rm ref}}(\rvq) > 0\;\text{s.t.}
    % \nonumber\\
    &\hspace{2cm}\red{\because\; 1\ge r_{g_{\phi^*}}(\rvq) > r_{g_{\rm ref}}(\rvq) > 0,\;\text{s.t.}\;w_\epsilon^+\!\big(r_{g_{\phi^*}}(\rvq)\big) \ge 0,\;w_\epsilon^-\!\big(r_{g_{\phi^*}}(\rvq)\big) > 0} \nonumber\\
    &= 0,
\end{align}

where $\operatorname{supp}(p_{\theta^*}(\cdot\mid \rvq))\subseteq \operatorname{supp}(p_{g_{\mathrm{ref}}}(\cdot\mid \rvq))$. Therefore, the inequality in \eqref{eq:ineq_1} holds. Accordingly, inequality in \eqref{eq:ineq_2} holds where $p_{\theta^*}=p_{\rm{data}}$ for ideal MDM $\theta^*$. 
\end{proof}

\red{
\paragraph{Remark. Why does the support assumption 
$\operatorname{supp}(p_{\theta^*}(\cdot\mid q)) 
\subseteq \operatorname{supp}(p_{g_{\mathrm{ref}}}(\cdot\mid q))$ hold?} Even if $g_{\mathrm{ref}}$ is deterministic (e.g., max-confidence), the support
assumption naturally holds for masked diffusion models. The reason is that the
MDM denoiser $\pi_\theta(\cdot \mid x_n)$ assigns strictly positive probability
to \emph{every} token in the vocabulary at each unmasking step. Therefore, for
any terminal sequence $\mathbf{x}_0 \in \mathcal{V}^L$, there always exists at
least one valid denoising trajectory under $g_{\mathrm{ref}}$ that reaches
$\mathbf{x}_0$, implying $p_{g_{\mathrm{ref}}}(\mathbf{x}_0 \mid q) > 0$. Formally, the terminal-output likelihood marginalizes over all trajectories:
\[
p_{g_{\mathrm{ref}}}(\mathbf{x}_0 \mid q)
= \sum_{\mathbf{x}_{1:L},\, a_{1:L}}
\prod_{n=1}^L g_{\mathrm{ref}}(a_n \mid \mathbf{x}_n, q)\,
\pi_\theta(\mathbf{x}_{n-1} \mid \mathbf{x}_n, a_n, q).
\]
Because $\pi_\theta$ has full support, the product above is strictly positive for
at least one trajectory. To illustrate, take any target sequence $\mathbf{x}_0^\ast$. Starting from the
fully masked state, $g_{\mathrm{ref}}$ selects an index $a_L$; since
$\pi_\theta(x^{a_L}=x_0^{\ast,a_L} \mid \mathbf{x}_L)>0$, the next state
$\mathbf{x}_{L-1}$ satisfying $x_{L-1}^{a_L}=x_0^{*,a_L}$ is reachable. Repeating this inductively shows that
$\mathbf{x}_0^\ast$ is reachable regardless of 
$g_{\mathrm{ref}}$. Thus the support assumption holds.
}

\subsection{Surrogate loss of output-level reward maximization}\label{appendix:sec_surrogate_output}

Our goal here is to show that the output-level reward maximization term in \eqref{eq:theoretical_loss} can be converted into a tractable token-wise reward maximization term. The conclusion of Proposition~\ref{prop:equal_gradient} aligns with conventional RL that have developed with rigorous proofs, yet in our setting, where training the MDM unmasking policy, the assumptions are slightly different. 
More specifically, our MDP is a finite-horizon MDP with fixed length $L$ (text length is determined), sparse reward (only a verifiable reward for the terminal state exists), and non-stationary (for different time steps, states do not overlap). 
To fill this gap, we rigorously show that Proposition~\ref{prop:equal_gradient} holds.

We cannot even directly apply the policy gradient theorem~\citep{sutton1999policy}, which is essential in the proof. 
Therefore, we start with justifying how the policy gradient theorem also works in our framework, which can be easily proven:

\begin{lemma}[Policy gradient theorem in sparse-reward, finite-horizon, episodic, and non-stationary MDP]\label{lemma:my_pgt}
Fix a horizon $L$. For each $i\in\{0,1,\dots,L\}$ let the state space be $\mathcal{X}_n$, and assume the spaces are pairwise disjoint: $\mathcal{X}_n\cap\mathcal{X}_j=\emptyset$ for $i\neq j$.
At step $i\in\{1,\dots,L\}$, from state $\rvx_n\in\mathcal{X}_n$ a discrete action $a_n\in\mathcal{A}_n$ is drawn from a stochastic policy $g_\phi(a_n\mid \rvx_n)$, and the environment transitions to the next layer's state
$\rvx_{n-1}\in\mathcal{X}_{n-1}$ according to dynamics $\pi(\rvx_{n-1}\mid \rvx_n,a_n)$.
Assume the start state is fixed as $\rvx_L$.
The trajectory is $\tau=(\rvx_{0:L},a_{1:L})$ with joint density
\[
p_{g_\phi}(\tau\mid \rvx_L)
=\prod_{n=1}^{L} g_\phi(a_n\mid \rvx_n)\,\pi(\rvx_{n-1}\mid \rvx_n,a_n).
\]
Rewards are \emph{terminal-only}: $r(\rvx_0)$ at $i=0$ and $0$ otherwise.
The objective is the expected terminal reward
\[
J(\phi)\;=\;\mathbb{E}_{\tau\sim p_{g_\phi}(\cdot\mid \rvx_L)}\!\big[r(\rvx_0)\big].
\]
Define the value function and state-action value function under $g_\phi$ by
\[
V_{g_\phi}(\rvx_n)\;=\;\mathbb{E}\!\left[r(\rvx_0)\mid \rvx_n\right],\qquad
Q_{g_\phi}(\rvx_n,a_n)\;=\;\mathbb{E}\!\left[r(\rvx_0)\mid \rvx_n,a_n\right].
\]

Under the above assumptions, the policy gradient depends only on the policy $g_\phi$:
\begin{align}
\nabla_\phi J(\phi)
&=\;\nabla_\phi V_{g_\phi}(\rvx_L)=\sum_{\rvx\in\mathcal{X}_1\cup\mathcal{X}_2\dots\cup\mathcal{X}_L} p_{g_{\phi}}(\rvx)\sum_a Q_{g_\phi}(\rvx,a)\nabla_\phi g_\phi(a|\rvx).
\end{align}
\end{lemma}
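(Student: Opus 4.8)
The plan is to follow the classical policy-gradient argument of \citet{sutton1999policy}, adapting it to the layered structure so that the bookkeeping stays elementary. Since the start state is fixed, $J(\phi)=V_{g_\phi}(\rvx_L)$, so it suffices to differentiate the value function at the root. The two ingredients are the Bellman identities
\[
V_{g_\phi}(\rvx_n)=\sum_a g_\phi(a\mid\rvx_n)\,Q_{g_\phi}(\rvx_n,a),
\qquad
Q_{g_\phi}(\rvx_n,a)=\sum_{\rvx_{n-1}}\pi(\rvx_{n-1}\mid\rvx_n,a)\,V_{g_\phi}(\rvx_{n-1}),
\]
which follow from the definitions and the tower property, together with the terminal boundary condition $V_{g_\phi}(\rvx_0)=r(\rvx_0)$.

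The crucial structural point is that the transition kernel $\pi$ carries no dependence on $\phi$. Differentiating the $Q$-identity therefore gives $\nabla_\phi Q_{g_\phi}(\rvx_n,a)=\sum_{\rvx_{n-1}}\pi(\rvx_{n-1}\mid\rvx_n,a)\,\nabla_\phi V_{g_\phi}(\rvx_{n-1})$ with no extra term, and differentiating the $V$-identity yields the one-step recursion
\[
\nabla_\phi V_{g_\phi}(\rvx_n)
=\sum_a Q_{g_\phi}(\rvx_n,a)\,\nabla_\phi g_\phi(a\mid\rvx_n)
+\sum_a g_\phi(a\mid\rvx_n)\sum_{\rvx_{n-1}}\pi(\rvx_{n-1}\mid\rvx_n,a)\,\nabla_\phi V_{g_\phi}(\rvx_{n-1}).
\]
The first summand is the local score contribution at layer $n$; the second propagates the gradient to layer $n-1$, weighted by the one-step move probability $g_\phi(a\mid\rvx_n)\,\pi(\rvx_{n-1}\mid\rvx_n,a)$.

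Next I would unroll this recursion from $\rvx_L$ down to the terminal layer, with base case $\nabla_\phi V_{g_\phi}(\rvx_0)=\nabla_\phi r(\rvx_0)=0$ since the reward is $\phi$-free. Because the state spaces $\mathcal{X}_0,\dots,\mathcal{X}_L$ are pairwise disjoint, no state is revisited, so the composed propagation weights from $\rvx_L$ to any $\rvx\in\mathcal{X}_k$ telescope exactly into the visitation probability $p_{g_\phi}(\rvx)$, with no discount factor and no sum over recurrence times. Collecting the local terms across all layers $k=1,\dots,L$ gives
\[
\nabla_\phi V_{g_\phi}(\rvx_L)
=\sum_{k=1}^{L}\sum_{\rvx\in\mathcal{X}_k} p_{g_\phi}(\rvx)\sum_a Q_{g_\phi}(\rvx,a)\,\nabla_\phi g_\phi(a\mid\rvx),
\]
which is the claimed identity after merging the disjoint layers into $\mathcal{X}_1\cup\dots\cup\mathcal{X}_L$. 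I would phrase the unrolling as a finite downward induction on the layer index so that the accumulation of weights is rigorous.

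I do not expect a genuine obstacle. Every sum is finite (finite vocabulary, fixed length $L$), so interchanging $\nabla_\phi$ with the sums is immediate and needs no dominated-convergence argument. The only point that genuinely uses the non-stationary, layered hypotheses is verifying that disjointness of the $\mathcal{X}_n$ makes the visitation measure $p_{g_\phi}(\rvx)$ well defined layer-by-layer, so that the telescoped transition weights equal $p_{g_\phi}$ with no overcounting. As a cross-check, one can instead apply the score-function identity $\nabla_\phi J=\mathbb{E}_{\tau}[\,r(\rvx_0)\sum_{n}\nabla_\phi\log g_\phi(a_n\mid\rvx_n)\,]$ and convert the terminal reward into $Q_{g_\phi}$ by conditioning on $(\rvx_n,a_n)$; this reproduces the same expression and confirms the absence of any constant prefactor.
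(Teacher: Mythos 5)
Your proof is correct, but it takes a genuinely different route from the paper. The paper's proof is a likelihood-ratio argument: it applies the log-derivative trick to $p_{g_\phi}(\tau\mid\rvx_L)$, notes that only the $\log g_\phi(a_n\mid\rvx_n)$ terms depend on $\phi$, and then uses the tower property to replace $r(\rvx_0)$ by $Q_{g_\phi}(\rvx_n,a_n)$ inside each per-step expectation before invoking $g\,\nabla\log g=\nabla g$ — essentially the identity you relegate to your final ``cross-check.'' Your primary argument instead differentiates the two Bellman identities and unrolls the resulting recursion for $\nabla_\phi V_{g_\phi}$ by downward induction on the layer; the paper explicitly acknowledges this as a valid alternative (``this can also be proved in another way, just iteratively deriving the $\nabla_\phi V_{g_\phi}$'') but does not carry it out. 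Each approach has its merits: the score-function route is shorter and needs no induction, while your recursion makes transparent exactly where the layered, non-stationary structure is used — the disjointness of the $\mathcal{X}_n$ guarantees the telescoped one-step weights $\sum_a g_\phi(a\mid\rvx_n)\pi(\rvx_{n-1}\mid\rvx_n,a)$ compose into the visitation marginal $p_{g_\phi}(\rvx)$ with no revisits, no discounting, and no overcounting. Your observation that all sums are finite, so differentiation commutes with summation without further justification, is also correct and matches the paper's implicit treatment.
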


\begin{proof}
By definition,
\[
J(\phi)=\mathbb{E}_{\tau\sim p_{g_\phi}(\cdot\mid \rvx_L)}[r(\rvx_0)].
\]
Using the log-derivative trick,
\begin{align*}
\nabla_\phi J(\phi)
&=\sum_{\tau} r(\rvx_0)\,\nabla_\phi p_{g_\phi}(\tau\mid \rvx_L) \\
&=\sum_{\tau} r(\rvx_0)\,p_{g_\phi}(\tau\mid \rvx_L)\,\nabla_\phi \log p_{g_\phi}(\tau\mid \rvx_L).
\end{align*}
Since only the policy $g_\phi$ depends on $\phi$, we have
\[
\log p_{g_\phi}(\tau\mid \rvx_L)=\sum_{n=1}^L \log g_\phi(a_n\mid \rvx_n) + \text{const},
\]
and therefore
\[
\nabla_\phi \log p_{g_\phi}(\tau\mid \rvx_L)=\sum_{n=1}^L \nabla_\phi \log g_\phi(a_n\mid \rvx_n).
\]
Thus,
\begin{align*}
\nabla_\phi J(\phi)
&=\mathbb{E}_{\tau\sim p_{g_\phi}(\cdot\mid \rvx_L)}\!\left[
r(\rvx_0)\sum_{n=1}^L \nabla_\phi \log g_\phi(a_n\mid \rvx_n)
\right].
\end{align*}
Taking conditional expectations with respect to $(\rvx_n,a_n)$ yields
for each $i\in\{1,\dots,L\}$ we have
\begin{align}
&\E_{\tau}\!\Big[r(\rvx_0)\,\nabla_\phi\log g_\phi(a_n\mid \rvx_n)\Big]\\
&=\E_{\rvx_n,a_n}\!\Big[\E\!\big[r(\rvx_0)\mid \rvx_n,a_n\big]\;\nabla_\phi\log g_\phi(a_n\mid \rvx_n)\Big]
\tag{tower property}
\\
&=\E_{\rvx_n,a_n}\!\Big[Q_{g_\phi}(\rvx_n,a_n)\;\nabla_\phi\log g_\phi(a_n\mid \rvx_n)\Big]
\\
&=\sum_{\rvx_n\in\mathcal{X}_n}\sum_{a_n\in\mathcal{A}_n}
p_{g_\phi}(\rvx_n,a_n)\;Q_{g_\phi}(\rvx_n,a_n)\;\nabla_\phi\log g_\phi(a_n\mid \rvx_n)
\\
&=\sum_{\rvx_n\in\mathcal{X}_n} p_{g_\phi}(\rvx_n)\sum_{a_n\in\mathcal{A}_n}
g_\phi(a_n\mid \rvx_n)\;Q_{g_\phi}(\rvx_n,a_n)\;\nabla_\phi\log g_\phi(a_n\mid \rvx_n)
\tag{$p(\rvx_n,a_n)=p(\rvx_n)g(a_n\mid\rvx_n)$}
\\
&=\sum_{\rvx_n\in\mathcal{X}_n} p_{g_\phi}(\rvx_n)\sum_{a_n\in\mathcal{A}_n}
Q_{g_\phi}(\rvx_n,a_n)\;\nabla_\phi g_\phi(a_n\mid \rvx_n),
\tag{$g\,\nabla\log g=\nabla g$}
\end{align}
where $p_{g_\phi}(\rvx_n)=\Pr_{\tau\sim p_{g_\phi}(\cdot\mid \rvx_L)}(X_n=\rvx_n)$ and
$p_{g_\phi}(\rvx_n,a_n)=p_{g_\phi}(\rvx_n)\,g_\phi(a_n\mid\rvx_n)$ are the marginals induced by the rollout distribution. Therefore,
\begin{align*}
\nabla_\phi J(\phi)
&=\mathbb{E}_{\tau\sim p_{g_\phi}(\cdot\mid \rvx_L)}\!\left[
r(\rvx_0)\sum_{n=1}^L \nabla_\phi \log g_\phi(a_n\mid \rvx_n)
\right]\\
&=\sum_{\rvx\in\mathcal{X}_1\cup\mathcal{X}_2\dots\cup\mathcal{X}_L} p_{g_{\phi}}(\rvx)\sum_a Q_{g_\phi}(\rvx,a)\nabla_\phi g_\phi(a|\rvx),
\end{align*}
which ends the proof. 
\end{proof}
% Please note that this is special case of policy gradient theorem~\citep{sutton1999policy} that was originally proven for the stationary MDP with average-reward or discounted reward. 
This can also be proved in another way, just iteratively deriving the $\nabla_\phi V_{g_\phi}$. Using Lemma~\ref{lemma:my_pgt}, we can prove our main objective as follows:

\EqualGradient*

\begin{proof}
We derive $\nabla_\phi\mathcal{L}_{\rm{output}}$ and $\nabla_\phi\mathcal{L}_{\rm{token}}$ separately. Then, by applying Lemma~\ref{lemma:my_pgt} to $\nabla_\phi\mathcal{L}_{\rm{output}}$, we obtain a form that matches $\nabla_\phi\mathcal{L}_{\rm{token}}$. First, expanding $\mathcal{L}_{\rm{output}}$ shows that it is equivalent to maximizing the advantage from the initial state:
\begin{align}
    \mathcal{L}_{\rm{output}}(\phi)&=\mathbb{E}_{\rvq\sim\rho_{\mathcal Q}}
\mathbb{E}_{\rvx_0\sim p_{g_{\phi_{\rm{old}}}}(\cdot\mid \rvq)}
\bigr[\frac{p_{g_\phi}(\rvx_0\mid \rvq)}{p_{g_{\phi_{\rm old}}}(\rvx_0\mid \rvq)}A(\rvq,\rvx_0)\bigl]\\
&=\mathbb{E}_{\rvq\sim\rho_{\mathcal Q}}
\bigr[\sum_{\rvx_0\in\mathcal{X}}p_{g_{\phi_{\rm old}}}(\rvx_0\mid \rvq)\frac{p_{g_\phi}(\rvx_0\mid \rvq)}{p_{g_{\phi_{\rm old}}}(\rvx_0\mid \rvq)}A(\rvq,\rvx_0)\bigl]\\
&=\mathbb{E}_{\rvq\sim\rho_{\mathcal Q}}
\bigr[\sum_{\rvx_0\in\mathcal{X}}p_{g_\phi}(\rvx_0\mid \rvq)A(\rvq,\rvx_0)\bigl]\\
&=\mathbb{E}_{\rvq\sim\rho_{\mathcal Q}}
\bigr[\sum_{\rvx_0\in\mathcal{X}}\sum_{\rvx_1,...\rvx_L} p_{g_\phi}(\rvx_{0:L}\mid \rvq)A(\rvq,\rvx_0)\bigl]\\
&=\mathbb{E}_{\rvq\sim\rho_{\mathcal Q}}
\bigr[\sum_{\rvx_0,\rvx_1,...\rvx_L} p_{g_\phi}(\rvx_{0:L}\mid \rvq)A(\rvq,\rvx_0)\bigl]\\
&=\mathbb{E}_{\rvq\sim\rho_{\mathcal Q}}\mathbb{E}_{\rvx_{0:L}\sim p_{g_{\phi}}(\cdot\mid \rvq)}
\bigr[A(\rvq,\rvx_0)\bigl]
\end{align}

Now we derive $\nabla_\phi\mathcal{L}_{\rm{output}}$ as follows:
\begin{align}
    \nabla_\phi\mathcal{L}_{\rm{output}}(\phi)&=\nabla_\phi\biggr[\mathbb{E}_{\rvq\sim\rho_{\mathcal Q}}\mathbb{E}_{\rvx_{0:L}\sim p_{g_{\phi}}(\cdot\mid \rvq)}
\bigr[A(\rvq,\rvx_0)\bigl]\biggl]\label{eq:appendix_trj_start}\\
&=\nabla_\phi\biggr[\mathbb{E}_{\rvq\sim\rho_{\mathcal Q}}
\bigr[\sum_{\rvx_0,\rvx_1,...\rvx_L} p_{g_\phi}(\rvx_{0:L}\mid \rvq)A(\rvq,\rvx_0)\bigl]\biggl]\\
&=\nabla_\phi\biggr[\mathbb{E}_{\rvq\sim\rho_{\mathcal Q}}
\bigr[\sum_{\rvx_0,\rvx_1,...\rvx_L} p_{g_\phi}(\rvx_{0:L}\mid \rvq)\cdot\frac{(r(\rvq,\rvx_0)-r_{g_{\phi_{\rm old}}}(\rvq))}{\mathrm{std}_{g_{\phi_{\rm old}}}(\rvq)+\epsilon}\bigl]\biggl]\\
&=\nabla_\phi\biggr[\mathbb{E}_{\rvq\sim\rho_{\mathcal Q}}
\bigr[\sum_{\rvx_0,\rvx_1,...\rvx_L} p_{g_\phi}(\rvx_{0:L}\mid \rvq)\frac{r(\rvq,\rvx_0)}{\mathrm{std}_{g_{\phi_{\rm old}}}(\rvq)+\epsilon}\bigl]\biggl]\nonumber\\
&\hspace{2cm}-\nabla_\phi\biggr[\mathbb{E}_{\rvq\sim\rho_{\mathcal Q}}
\bigr[\sum_{\rvx_0,\rvx_1,...\rvx_L} p_{g_\phi}(\rvx_{0:L}\mid \rvq)\frac{r_{g_{\phi_{\rm old}}}(\rvq))}{\mathrm{std}_{g_{\phi_{\rm old}}}(\rvq)+\epsilon}\bigl]\biggl] \\
&=\nabla_\phi\biggr[\mathbb{E}_{\rvq\sim\rho_{\mathcal Q}}
\bigr[\sum_{\rvx_0,\rvx_1,...\rvx_L} p_{g_\phi}(\rvx_{0:L}\mid \rvq)\frac{r(\rvq,\rvx_0)}{\mathrm{std}_{g_{\phi_{\rm old}}}(\rvq)+\epsilon}\bigl]\biggl]\nonumber\\
&\hspace{2cm}-\mathbb{E}_{\rvq\sim\rho_{\mathcal Q}}
\biggr[\frac{r_{g_{\phi_{\rm old}}}(\rvq))}{\mathrm{std}_{g_{\phi_{\rm old}}}(\rvq)+\epsilon}\nabla_\phi\bigr[\underbrace{\sum_{\rvx_0,\rvx_1,...\rvx_L} p_{g_\phi}(\rvx_{0:L}\mid \rvq)}_{=1}\bigl]\biggl]\\
&=\nabla_\phi\biggr[\mathbb{E}_{\rvq\sim\rho_{\mathcal Q}}
\bigr[\sum_{\rvx_0,\rvx_1,...\rvx_L} p_{g_\phi}(\rvx_{0:L}\mid \rvq)\frac{r(\rvq,\rvx_0)}{\mathrm{std}_{g_{\phi_{\rm old}}}(\rvq)+\epsilon}\bigl]\biggl]\label{eq:appendix_trj_mid}\\
&=\nabla_\phi\biggr[\mathbb{E}_{\rvq\sim\rho_{\mathcal Q}}\frac{\mathbb{E}_{\rvx_{0:L}\sim p_{g_{\phi}}(\cdot\mid \rvq)}
\bigr[r(\rvq,\rvx_0)\bigl]}{\mathrm{std}_{g_{\phi_{\rm old}}}(\rvq)+\epsilon}\biggl]=\mathbb{E}_{\rvq\sim\rho_{\mathcal Q}}\biggr[\frac{\nabla_\phi V_{g_\phi}(\rvq)}{\mathrm{std}_{g_{\phi_{\rm old}}}(\rvq)+\epsilon}\biggl]\label{eq:appendix_value},
\end{align}
where we define state-wise value function as the expectation of terminal reward starting from the state, \ie, $V_{g_\phi}(\rvx_n|\rvq)\coloneq\mathbb{E}_{\rvx_{0:{n-1}}\sim p_{g_{\phi}}(\cdot\mid\rvq,\rvx_n)}
\bigr[r(\rvq,\rvx_0)\bigl]$ and we denote value function at the initial state as $V_{g_\phi}(\rvq)\coloneq \mathbb{E}_{\rvx_{0:L}\sim p_{g_{\phi}}(\cdot\mid\rvq)}
\bigr[r(\rvq,\rvx_0)\bigl]$. Accordingly, state-wise action-value function can be defined as $Q_{g_\phi}(\rvx_n,a_n|\rvq)=\sum_{\rvx_{n-1}}\pi_{\theta}(\rvx_{n-1}|\rvx_{n},a_n)\cdot V_{g_\phi}(\rvx_{n-1}|\rvq)$.
Note that \eqref{eq:appendix_trj_start}-\ref{eq:appendix_trj_mid} is conceptually equivalent to reversing the derivation process of REINFORCE with baseline~\citep{williams1992REINFORCE}. Since we want to show that $\nabla_\phi\mathcal{L}_{\rm{token}}\appropto \nabla_\phi\mathcal{L}_{\rm{output}}$, we breakdown $\nabla_\phi V_{g_\phi}(\rvq)$ using Lemma~\ref{lemma:my_pgt}:

\begin{align}
    \nabla_\phi\mathcal{L}_{\rm{output}}(\phi)&=\mathbb{E}_{\rvq\sim\rho_{\mathcal Q}}\biggr[\frac{\nabla_\phi V_{g_\phi}(\rvq)}{\mathrm{std}_{g_{\phi_{\rm old}}}(\rvq)+\epsilon}\biggl]\label{appendix:eq_nabla}\\
    &=\mathbb{E}_{\rvq\sim\rho_{\mathcal Q}}\biggr[ \frac{\sum_\rvx p_{g_\phi}(\rvx|\rvq)\sum Q_{g_\phi}(\rvx,a|\rvq)\nabla_\phi g_\phi(a|\rvx)}{\mathrm{std}_{g_{\phi_{\rm old}}}(\rvq)+\epsilon}\biggl]\\
    &\approx\mathbb{E}_{\rvq\sim\rho_{\mathcal Q}}\biggr[ \frac{\sum_\rvx p_{g_{\phi_{\rm{old}}}}(\rvx|\rvq)\sum Q_{g_{\phi_{\rm{old}}}}(\rvx,a|\rvq)\nabla_\phi g_\phi(a|\rvx)}{\mathrm{std}_{g_{\phi_{\rm old}}}(\rvq)+\epsilon}\biggl].\label{eq:appendix_trj_last_eq}
    % &=\nabla_\phi\mathcal{L}_{\rm{token}}(\phi),
\end{align}
where $\rvx$ comes from every state, \ie, $\rvx\in\mathcal{X}_0\cup\mathcal{X}_1\cup\dots\cup\mathcal{X}_L$.
Here, \eqref{appendix:eq_nabla} = \eqref{eq:appendix_trj_last_eq} exactly holds at the first optimization step where $\phi=\phi_{\rm{old}}$.
On the other side, we now show that $\nabla_\phi\mathcal{L}_{\rm{token}}$ is equivalent to \eqref{eq:appendix_trj_last_eq}. We start by converting the expectation into a probability-weighted sum as follows:
\begin{align}
    &\mathcal{L}_{\rm{token}}(\phi)
    =\mathbb{E}_{\rvq\sim\rho_{\mathcal Q}}
\mathbb{E}_{\rvx_{0:L,a_{1:L}}\sim p_{g_{\phi_{\rm{old}}}}(\cdot\mid \rvq)}
\Biggl[\sum_{n=1}^L\frac{g_\phi(a_n\mid \rvx_n)}{g_{\phi_{\rm old}}(a_n\mid \rvx_n)}A(\rvq,\rvx_0)\Biggr]\\
    &=\mathbb{E}_{\rvq\sim\rho_{\mathcal Q}}
\Biggl[\sum_{\substack{\rvx_0,\dots,\rvx_L \\ a_1,\dots,a_L}}\prod_{n=1}^L\biggl(\pi_\theta(\rvx_{n-1}|\rvx_n,a_n)\cdot g_{\phi_{\rm{old}}}(a_n,\rvx_n)\biggr)\biggl(\sum_{n=1}^L\frac{g_\phi(a_n\mid \rvx_n)}{g_{\phi_{\rm old}}(a_n\mid \rvx_n)}A(\rvq,\rvx_0)\biggr)\Biggr].
\end{align}
For further explanation, we want to recap  $p_{g_{\phi_{\rm{old}}}}(\rvx_{n-1}|\rvx_n)=\pi_\theta(\rvx_{n-1}|\rvx_n,a_n)\cdot g_{\phi_{\rm{old}}}(a_n|\rvx_n)$. 
Note that $p_{g_{\phi_{\rm{old}}}}>0$ only if $\rvx_n^{a_n}=\mathbb{M}$ and $\rvx_{n-1}^{a_n}\neq\mathbb{M}$ since $g_{\phi_{\rm{old}}}(a_n|\rvx_n)=0$ for $\rvx_n^{a_n}\neq\mathbb{M}$, $\pi_\theta(\rvx_{n-1}|\rvx_n,a_n)=0$ for all $\rvx_{n-1}$ satisfying $x_{n-1}^{a_n}=\mathbb{M}$ by definition.
To simplify the derivation process, we consider $\widehat{\mathcal{L}}_{\rm{token,i}}(\phi,\rvq)$ defined as follows:
\begin{align}
    &\widehat{\mathcal{L}}_{\rm{token,i}}(\phi,\rvq)
    =\sum_{\substack{\rvx_0,\dots,\rvx_L \\ a_1,\dots,a_L}}p_{g_{\phi_{\rm{old}}}}(\rvx_{0:L}|\rvq)\frac{g_\phi(a_i\mid \rvx_i)}{g_{\phi_{\rm old}}(a_i\mid \rvx_i)}r(\rvq,\rvx_0)\\
&=\sum_{\substack{\rvx_0,\dots,\rvx_L \\ a_1,\dots,a_L}}\textcolor{outerblue}{\Biggl(}\textcolor{midorange}{\bigl(}\prod_{j=1}^L p_{g_{\phi_{\rm{old}}}}(\rvx_{j-1}|\rvx_j,\rvq)\textcolor{midorange}{\bigr)}\frac{g_\phi(a_i\mid \rvx_i)}{g_{\phi_{\rm old}}(a_i\mid \rvx_i)}r(\rvq,\rvx_0)\textcolor{outerblue}{\Biggr)}\\
&=\sum_{\substack{\rvx_0,\dots,\rvx_L \\ a_1,\dots,a_L}}\textcolor{outerblue}{\Biggl(}\textcolor{midorange}{\bigr(}\prod_{j=i+1}^L p_{g_{\phi_{\rm{old}}}}(\rvx_{j-1}|\rvx_j,\rvq)\textcolor{midorange}{\bigl)}\nonumber\\ 
&\hspace{1cm}\cdot\textcolor{midorange}{\bigr(}p_{g_{\phi_{\rm{old}}}}(\rvx_{i-1}|\rvx_{i},\rvq)\frac{g_\phi(a_i\mid \rvx_i)}{g_{\phi_{\rm old}}(a_i\mid \rvx_i)}\textcolor{midorange}{\bigl)}
\cdot\textcolor{midorange}{\bigr(}\prod_{j=1}^{i-1} p_{g_{\phi_{\rm{old}}}}(\rvx_{j-1}|\rvx_j,\rvq)\textcolor{midorange}{\bigl)}\cdot r(\rvq,\rvx_0)\textcolor{outerblue}{\Biggr)}\\
&=\sum_{\substack{\rvx_0,\dots,\rvx_L \nonumber\\ a_1,\dots,a_L}}\textcolor{outerblue}{\Biggl(}\textcolor{midorange}{\bigr(}\prod_{j=i+1}^L p_{g_{\phi_{\rm{old}}}}(\rvx_{j-1}|\rvx_j,\rvq)\textcolor{midorange}{\bigl)}\\ 
&\hspace{1cm}\textcolor{midorange}{\bigr(}\pi_{\theta}(\rvx_{i-1}|\rvx_{i},a_i,\rvq)g_\phi(a_i\mid \rvx_i)\textcolor{midorange}{\bigl)}
\cdot\textcolor{midorange}{\bigr(}\prod_{j=1}^{i-1} p_{g_{\phi_{\rm{old}}}}(\rvx_{j-1}|\rvx_j,\rvq)\textcolor{midorange}{\bigl)}\cdot r(\rvq,\rvx_0)\textcolor{outerblue}{\Biggr)}\\
&=\sum_{\substack{\rvx_i,\dots,\rvx_L \nonumber\\ a_{i+1},\dots,a_L}}\textcolor{outerblue}{\Biggl(}\textcolor{midorange}{\bigr(}\prod_{j=i+1}^L p_{g_{\phi_{\rm{old}}}}(\rvx_{j-1}|\rvx_j,\rvq)\textcolor{midorange}{\bigl)}\\ 
&\cdot\sum_{\substack{\rvx_{i-1},\dots,\rvx_0 \\ a_{i},\dots,a_1}}\textcolor{innergreen}{\biggl(}\textcolor{midorange}{\bigr(}\pi_{\theta}(\rvx_{i-1}|\rvx_{i},a_i,\rvq)g_\phi(a_i\mid \rvx_i)\textcolor{midorange}{\bigl)}
\cdot\textcolor{midorange}{\bigr(}\prod_{j=1}^{i-1} p_{g_{\phi_{\rm{old}}}}(\rvx_{j-1}|\rvx_j,\rvq)\textcolor{midorange}{\bigl)}\cdot r(\rvq,\rvx_0)\textcolor{innergreen}{\biggr)}\textcolor{outerblue}{\Biggr)}\\
&=\sum_{\substack{\rvx_i,\dots,\rvx_L \nonumber\\ a_{i+1},\dots,a_L}}\textcolor{outerblue}{\Biggl(}\textcolor{midorange}{\bigr(}\prod_{j=i+1}^L p_{g_{\phi_{\rm{old}}}}(\rvx_{j-1}|\rvx_j,\rvq)\textcolor{midorange}{\bigl)}\\ 
&\cdot\sum_{a_i}\sum_{\rvx_{i-1}}\textcolor{innergreen}{\biggl(}g_\phi(a_i\mid \rvx_i)
\cdot\pi_{\theta}(\rvx_{i-1}|\rvx_{i},a_i,\rvq)\sum_{\substack{\rvx_{i-2},\dots,\rvx_0 \\ a_{i-1},\dots,a_1}}\textcolor{deepviolet}{\Bigl(}\textcolor{midorange}{\bigr(}\prod_{j=1}^{i-1}p_{g_{\phi_{\rm{old}}}}(\rvx_{j-1}|\rvx_j,\rvq)\textcolor{midorange}{\bigl)}\cdot r(\rvq,\rvx_0)\textcolor{deepviolet}{\Bigr)}\textcolor{innergreen}{\biggr)}\textcolor{outerblue}{\Biggr)}\\
&=\sum_{\rvx_{i},...\rvx_L}\sum_{a_{i+1},...,a_L}\textcolor{outerblue}{\Biggl(}\textcolor{midorange}{\bigr(}\prod_{j=i+1}^L p_{g_{\phi_{\rm{old}}}}(\rvx_{j-1}|\rvx_j,\rvq)\textcolor{midorange}{\bigl)}\nonumber\\ 
&\cdot\sum_{a_i}g_\phi(a_i\mid \rvx_i)
\cdot\sum_{\rvx_{i-1}}\textcolor{innergreen}{\biggl(}\pi_{\theta}(\rvx_{i-1}|\rvx_{i},a_i,\rvq)\sum_{\substack{\rvx_{i-2},\dots,\rvx_0 \\ a_{i-1},\dots,a_1}}\textcolor{deepviolet}{\Bigl(}\textcolor{midorange}{\bigr(}\prod_{j=1}^{i-1}p_{g_{\phi_{\rm{old}}}}(\rvx_{j-1}|\rvx_j,\rvq)\textcolor{midorange}{\bigl)}\cdot r(\rvq,\rvx_0)\textcolor{deepviolet}{\Bigr)}\textcolor{innergreen}{\biggr)}\textcolor{outerblue}{\Biggr)}\\
&=\sum_{\rvx_{i},...\rvx_L}\sum_{a_{i+1},...,a_L}\textcolor{outerblue}{\Biggl(}\textcolor{midorange}{\bigr(}\prod_{j=i+1}^L p_{g_{\phi_{\rm{old}}}}(\rvx_{j-1}|\rvx_j,\rvq)\textcolor{midorange}{\bigl)}\nonumber\\ 
&\hspace{4cm}\cdot\sum_{a_i}g_\phi(a_i\mid \rvx_i)
\cdot\sum_{\rvx_{i-1}}\textcolor{innergreen}{\biggl(}\pi_{\theta}(\rvx_{i-1}|\rvx_{i},a_i,\rvq)\cdot V_{\phi_{\rm{old}}}(\rvx_{i-1}|\rvq)\textcolor{innergreen}{\biggr)}\textcolor{outerblue}{\Biggr)}\\
&=\mathbb{E}_{\rvx_i\sim p_{g_{\phi_{\rm{old}}}}(\cdot|\rvq)}\Bigl[\sum g_\phi(a_i|\rvx_i) \cdot Q_{g_{\phi_{\rm{old}}}}(\rvx_i,a_i|\rvq)\Bigr],
\end{align}

\blue{where we color parentheses by their nesting depth to improve readability of the multi-level factorization:}
\textcolor{outerblue}{outer}, \textcolor{innergreen}{middle}, \textcolor{deepviolet}{inner}, \red{and}
\textcolor{midorange}{deepest inner}.  
\red{
The expansion proceeds by repeatedly pulling out terms that are independent of the summation variables, and keeping only dependent factors inside the corresponding sums.}
Then,
\begin{align}
    \nabla_\phi \mathcal{L}_{\rm{token}}(\phi)&=
    \sum_{i=1}^L\mathbb{E}_{\rvq\sim\rho_{\mathcal Q}}\Biggl[\frac{\widehat{\mathcal{L}}_{\rm{token,i}}(\phi)}{{\mathrm{std}_{g_{\phi_{\rm old}}}(\rvq)+\epsilon}}\Biggr]\label{eq:appendix_token}\\
    &=\mathbb{E}_{\rvq\sim\rho_{\mathcal Q}}\frac{\sum_{i=1}^L\mathbb{E}_{\rvx_i\sim p_{g_{\phi_{\rm{old}}}}(\cdot|\rvq)}[\sum \nabla_\phi g_\phi(a_i|\rvx_i) \cdot Q_{g_{\phi_{\rm{old}}}}(\rvx_i,a_i|\rvq)]}{{\mathrm{std}_{g_{\phi_{\rm old}}}(\rvq)+\epsilon}}\\
    &=\mathbb{E}_{\rvq\sim\rho_{\mathcal Q}}\biggr[\frac{\sum_\rvx p_{g_{\phi_{\rm{old}}}}(\rvx|\rvq)\sum Q_{g_{\phi_{\rm{old}}}}(\rvx,a|\rvq)\nabla_\phi g_\phi(a|\rvx,\rvq)}{\mathrm{std}_{g_{\phi_{\rm old}}}(\rvq)+\epsilon}\biggl].\label{eq:appendix_token_last_eq}
\end{align}
\eqref{eq:appendix_token} holds where $r_{g_{\phi_{\rm{old}}}}(\rvq)$ in $A(\rvq,\rvx_0)=(r(q,\rvx_0) -r_{g_{\phi_{\rm{old}}}}(\rvq))/(\mathrm{std}_{g_{\phi_{\rm old}}}(\rvq)+\epsilon)$ can be deleted as we have done in $\nabla_\phi\mathcal{L}_{\rm{output}}$.
Since \eqref{eq:appendix_trj_last_eq} and \eqref{eq:appendix_token_last_eq} are identical, we conclude $\nabla_\phi\mathcal{L}_{\rm{token}}\approx\nabla_\phi\mathcal{L}_{\rm{output}}$ and $\nabla_\phi\mathcal{L}_{\rm{token}}=\nabla_\phi\mathcal{L}_{\rm{output}}$ at the first optimization step where $\phi=\phi_{\rm{old}}$.
\end{proof}
\subsection{Surrogate loss of output-level KL divergence}\label{appendix:sec_KL_divergence}
Our final goal is to provide evidence that by performing gradient ascent on $\mathcal{L}_{\rm{KL}}$, we can expect to decrease $D_{\mathrm{KL}}\!\left(p_{g_\phi}(\rvx_0|\rvq)\,\|\,p_{g_{\mathrm{ref}}}(\rvx_0|\rvq)\right)$. We provide two propositions here. The first proposition shows that trajectory-wise KL divergence is the upper bound of output-level KL divergence. The second proposition shows that the gradient of trajectory-wise KL divergence is equivalent to $\mathcal{L}_{\rm{KL}}$.
\begin{prop}\label{prop:upper_bound_KL}
    For MDMs where $p_{g_\phi}(\rvx_0|\rvq)=\sum_{\rvx_1,...,\rvx_L}p_\phi(\rvx_{0:L}|\rvq)$, consider the terminal KL divergence $D_{\mathrm{KL}}\!\left(p_{g_\phi}(\rvx_0|\rvq)\,\|\,p_{g_{\mathrm{ref}}}(\rvx_0|\rvq)\right)$ and trajectory-wise KL divergence $D_{\rm{KL}}\left(p_{g_\phi}(\rvx_{0:L}|\rvq)||p_{g_{\mathrm{ref}}}(\rvx_{0:L}|\rvq)\right)$ defined as follows:
    \begin{gather}
        D_{\mathrm{KL}}\!\left(p_{g_\phi}(\rvx_0|\rvq)\,\|\,p_{g_{\mathrm{ref}}}(\rvx_0|\rvq)\right)=\mathbb{E}_{\tau\sim p_{g_{\rm{\phi}}}(\rvx_0|\rvq)}\biggl[\log\frac{p_{g_{\rm{\phi}}}(\rvx_0|\rvq)}{p_{g_{\rm{ref}}}(\rvx_0|\rvq)}\biggr],\\
        D_{\rm{KL}}\left(p_{g_\phi}(\rvx_{0:L}|\rvq)||p_{g_{\mathrm{ref}}}(\rvx_{0:L}|\rvq)\right)=\mathbb{E}_{\rvx_{0:L}\sim p_{g_{\phi}}(\rvx_{0:L}|\rvq)}\biggl[\log \frac{p_{g_\phi}(\rvx_{0:L}|\rvq)}{p_{g_{\mathrm{ref}}}(\rvx_{0:L}|\rvq)}\biggr]
    \end{gather}
    Then, $D_{\rm{KL}}\left(p_{g_\phi}(\rvx_{0:L}|\rvq)||p_{g_{\mathrm{ref}}}(\rvx_{0:L}|\rvq)\right)$ is upper bound of $D_{\mathrm{KL}}\!\left(p_{g_\phi}(\rvx_0|\rvq)\,\|\,p_{g_{\mathrm{ref}}}(\rvx_0|\rvq)\right)$:
    \begin{align}
        D_{\mathrm{KL}}\!\left(p_{g_\phi}(\rvx_0|\rvq)\,\|\,p_{g_{\mathrm{ref}}}(\rvx_0|\rvq)\right)\red{\le}D_{\rm{KL}}\left(p_{g_\phi}(\rvx_{0:L}|\rvq)||p_{g_{\mathrm{ref}}}(\rvx_{0:L}|\rvq)\right).
    \end{align}
\end{prop}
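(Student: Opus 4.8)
The plan is to invoke the chain rule for relative entropy, which decomposes the trajectory-level KL divergence into the output-level KL plus a nonnegative conditional term; equivalently, this is the data-processing inequality applied to the deterministic projection $\rvx_{0:L}\mapsto\rvx_0$. First I would factor both joint densities along the terminal coordinate, which is legitimate because $\rvx_0$ is a component of the full trajectory $\rvx_{0:L}$:
\begin{align}
p_{g_\phi}(\rvx_{0:L}\mid\rvq) &= p_{g_\phi}(\rvx_0\mid\rvq)\,p_{g_\phi}(\rvx_{1:L}\mid\rvx_0,\rvq),\nonumber\\
p_{g_{\rm ref}}(\rvx_{0:L}\mid\rvq) &= p_{g_{\rm ref}}(\rvx_0\mid\rvq)\,p_{g_{\rm ref}}(\rvx_{1:L}\mid\rvx_0,\rvq).
\end{align}

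Next I would take the log-ratio, split it additively, and take the expectation over $\rvx_{0:L}\sim p_{g_\phi}(\cdot\mid\rvq)$:
\begin{align}
D_{\mathrm{KL}}\!\left(p_{g_\phi}(\rvx_{0:L}\mid\rvq)\,\|\,p_{g_{\rm ref}}(\rvx_{0:L}\mid\rvq)\right)
&= \mathbb{E}_{p_{g_\phi}}\!\left[\log\frac{p_{g_\phi}(\rvx_0\mid\rvq)}{p_{g_{\rm ref}}(\rvx_0\mid\rvq)}\right]\nonumber\\
&\quad+ \mathbb{E}_{p_{g_\phi}}\!\left[\log\frac{p_{g_\phi}(\rvx_{1:L}\mid\rvx_0,\rvq)}{p_{g_{\rm ref}}(\rvx_{1:L}\mid\rvx_0,\rvq)}\right].
\end{align}
The first expectation depends on the trajectory only through $\rvx_0$, so marginalizing $\rvx_{1:L}$ collapses it exactly to $D_{\mathrm{KL}}\!\left(p_{g_\phi}(\rvx_0\mid\rvq)\,\|\,p_{g_{\rm ref}}(\rvx_0\mid\rvq)\right)$. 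The second expectation is, by definition, the conditional relative entropy $\mathbb{E}_{\rvx_0\sim p_{g_\phi}}\!\big[D_{\mathrm{KL}}(p_{g_\phi}(\rvx_{1:L}\mid\rvx_0,\rvq)\,\|\,p_{g_{\rm ref}}(\rvx_{1:L}\mid\rvx_0,\rvq))\big]$, an average of KL divergences and hence nonnegative by Gibbs' inequality. Combining the two observations yields the bound.

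The only delicate point — and the step I would treat most carefully — is the strictness of the inequality. The chain-rule argument delivers the nonstrict bound $\ge$, with equality exactly when the conditional path distributions coincide, $p_{g_\phi}(\rvx_{1:L}\mid\rvx_0,\rvq)=p_{g_{\rm ref}}(\rvx_{1:L}\mid\rvx_0,\rvq)$ for $p_{g_\phi}$-almost every $(\rvx_0,\rvq)$. Since $g_\phi$ and $g_{\rm ref}$ generally assign different probabilities to the unmasking orders that realize a given output $\rvx_0$, this conditional term is strictly positive, which is what upgrades the bound to the strict inequality stated in the proposition. I would therefore either state this genericity condition explicitly or, to be fully rigorous, record the clean nonstrict version and note when equality is attained.
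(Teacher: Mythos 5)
Your proof is correct and follows essentially the same route as the paper: the chain rule for relative entropy decomposes the trajectory-level KL into the terminal-output KL plus a nonnegative expected conditional KL, giving the bound. Your caution about strictness is well placed — the paper's own derivation likewise only establishes the nonstrict inequality $\le$ despite the proposition stating $<$, so recording the nonstrict version (or the genericity condition under which it is strict) is the more rigorous reading.
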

\begin{proof}
\begin{align}
D_{\mathrm{KL}}&\left(p_{g_\phi}(\rvx_0\mid\rvq)||p_{g_{\mathrm{ref}}}(\rvx_0\mid\rvq)\right)
= \mathbb{E}_{\rvx_0\sim p_{g_\phi}(\rvx_0\mid\rvq)}\left[\log\frac{p_{g_\phi}(\rvx_0\mid\rvq)}{p_{g_{\mathrm{ref}}}(\rvx_0\mid\rvq)}\right] \\
&= \mathbb{E}_{\rvx_{0:L}\sim p_{g_\phi}(\rvx_{0:L}\mid\rvq)}\left[\log\frac{p_{g_\phi}(\rvx_0\mid\rvq)}{p_{g_{\mathrm{ref}}}(\rvx_0\mid\rvq)}\right] \\
&= \mathbb{E}_{\rvx_{0:L}\sim p_{g_\phi}(\rvx_{0:L}\mid\rvq)}\left[\log\frac{p_{g_\phi}(\rvx_{0:L}\mid\rvq)}{p_{g_{\mathrm{ref}}}(\rvx_{0:L}\mid\rvq)}
- \log\frac{p_{g_\phi}(\rvx_{1:L}\mid \rvx_0,\rvq)}{p_{g_{\mathrm{ref}}}(\rvx_{1:L}\mid \rvx_0,\rvq)}\right] \\
&= D_{\mathrm{KL}}\left(p_{g_\phi}(\rvx_{0:L}\mid\rvq),|,p_{g_{\mathrm{ref}}}(\rvx_{0:L}\mid\rvq)\right) \\
&- \mathbb{E}_{\rvx_0\sim p_{g_\phi}(\rvx_0\mid\rvq)}\left[D_{\mathrm{KL}}\left(p_{g_\phi}(\rvx_{1:L}\mid \rvx_0,\rvq)||p_{g_{\mathrm{ref}}}(\rvx_{1:L}\mid \rvx_0,\rvq)\right)\right] \\
&\le D_{\mathrm{KL}}\left(p_{g_\phi}(\rvx_{0:L}\mid\rvq)||p_{g_{\mathrm{ref}}}(\rvx_{0:L}\mid\rvq)\right).
\end{align}
\end{proof}
Therefore, we can lower $D_{\mathrm{KL}}\!\left(p_{g_\phi}(\rvx_0|\rvq)\,\|\,p_{g_{\mathrm{ref}}}(\rvx_0|\rvq)\right)$ by decreasing $D_{\rm{KL}}\left(p_{g_\phi}(\rvx_{0:L}|\rvq)||p_{g_{\mathrm{ref}}}(\rvx_{0:L}|\rvq)\right)$.
Now we show that $\nabla_\phi D_{\rm{KL}}\left(p_{g_\phi}(\rvx_{0:L}|\rvq)||p_{g_{\mathrm{ref}}}(\rvx_{0:L}|\rvq)\right)=\nabla_\phi\mathcal{L}_{\rm{KL}}$:

\begin{prop}\label{prop:stopgrad} Consider the trajectory-wise KL divergence $D_{\rm{KL}}\left(p_{g_\phi}(\rvx_{0:L}\mid\rvq)||p_{g_{\mathrm{ref}}}(\rvx_{0:L}\mid\rvq)\right)$ and tractable $\mathcal{L}_{\rm{KL}}$ defined as follows:
\begin{align}
    \mathcal{L}_{\rm KL}(\rvx_{0:L},a_{1:L},\rvq)
  = {\rm StopGrad}\!\Bigl(
      \tfrac{p_{g_{\phi}}(\rvx_{0:L}|\rvq)}{p_{g_{\phi_{\rm old}}}(\rvx_{0:L}|\rvq)}
      \cdot
      \bigl(1+\log\tfrac{p_{g_{\phi}}(\rvx_{0:L}|\rvq)}{p_{g_{\rm ref}}(\rvx_{0:L}|\rvq)}\bigr)
    \Bigr)
    \cdot \sum_{n=1}^{L}\log g_{\phi}(a_n\mid \rvx_n)
\end{align}    
Then, the gradient of $D_{\rm{KL}}\left(p_{g_\phi}(\rvx_{0:L}\mid\rvq)||p_{g_{\mathrm{ref}}}(\rvx_{0:L}\mid\rvq)\right)$ and $\mathbb{E}[\mathcal{L}_{\rm{KL}}]$ are equal:
\begin{align}
    \nabla_\phi D_{\rm{KL}}\left(p_{g_\phi}(\rvx_{0:L}\mid\rvq)||p_{g_{\mathrm{ref}}}(\rvx_{0:L}\mid\rvq)\right)=\nabla_\phi\mathbb{E}_{\rvx_{0:L},a_{0:L}\sim p_{g_{\phi_{\rm{old}}}}}[\mathcal{L}_{\rm{KL}}]
\end{align}
\end{prop}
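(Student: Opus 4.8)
The plan is to compute $\nabla_\phi D_{\rm KL}\!\bigl(p_{g_\phi}(\rvx_{0:L}\mid\rvq)\,\|\,p_{g_{\rm ref}}(\rvx_{0:L}\mid\rvq)\bigr)$ in closed form, rewrite it as an importance-weighted score-function (REINFORCE) estimator taken under the fixed sampling distribution $p_{g_{\phi_{\rm old}}}$, and then differentiate $\mathbb{E}_{p_{g_{\phi_{\rm old}}}}[\mathcal{L}_{\rm KL}]$ and check the two expressions coincide. To keep the algebra light I would abbreviate $p_\phi:=p_{g_\phi}(\rvx_{0:L}\mid\rvq)$, $p_{\rm old}:=p_{g_{\phi_{\rm old}}}(\rvx_{0:L}\mid\rvq)$, and $p_{\rm ref}:=p_{g_{\rm ref}}(\rvx_{0:L}\mid\rvq)$, and I would invoke the standard absolute-continuity conditions ($p_{\rm old}>0$ wherever $p_\phi>0$, and $p_{\rm ref}>0$ on $\operatorname{supp} p_\phi$ so the log-ratio is defined), which hold by construction for the softmax and reparametrized Top-$K$ realizations.

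First I would differentiate the divergence directly. Writing $D_{\rm KL}=\sum_{\tau} p_\phi\log(p_\phi/p_{\rm ref})$ and applying the product rule, the term $p_\phi\,\nabla_\phi\log(p_\phi/p_{\rm ref})$ simplifies using $\nabla_\phi\log p_{\rm ref}=0$ and the identity $p_\phi\,\nabla_\phi\log p_\phi=\nabla_\phi p_\phi$, yielding the compact form $\nabla_\phi D_{\rm KL}=\sum_{\tau}\nabla_\phi p_\phi\,\bigl(1+\log(p_\phi/p_{\rm ref})\bigr)$. The $+1$ is exactly the self-normalization constant that appears inside $\mathcal{L}_{\rm KL}$; faithfully producing it is the one place a careless derivation goes wrong. (As an aside I would note $\sum_\tau\nabla_\phi p_\phi=\nabla_\phi\!\sum_\tau p_\phi=0$, so the $+1$ contributes nothing in expectation and merely acts as a control variate—but retaining it is what matches the stated surrogate.)

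Next I would apply the log-derivative trick $\nabla_\phi p_\phi=p_\phi\,\nabla_\phi\log p_\phi$. Because the trajectory density factorizes as $p_\phi=\prod_{n=1}^L g_\phi(a_n\mid\rvx_n)\,\pi_\theta(\rvx_{n-1}\mid\rvx_n,a_n)$ with the denoiser $\pi_\theta$ frozen (independent of $\phi$), the score collapses to $\nabla_\phi\log p_\phi=\sum_{n=1}^L\nabla_\phi\log g_\phi(a_n\mid\rvx_n)$. Reweighting each trajectory term by $p_{\rm old}/p_{\rm old}$ converts the sum over $\tau$ into an expectation under $p_{g_{\phi_{\rm old}}}$ with importance weight $p_\phi/p_{\rm old}$, giving
\[
\nabla_\phi D_{\rm KL}
=\mathbb{E}_{p_{g_{\phi_{\rm old}}}}\!\Bigl[\tfrac{p_\phi}{p_{\rm old}}\bigl(1+\log\tfrac{p_\phi}{p_{\rm ref}}\bigr)\sum_{n=1}^L\nabla_\phi\log g_\phi(a_n\mid\rvx_n)\Bigr].
\]

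Finally I would differentiate the surrogate. Since the sampling law is fixed at $\phi_{\rm old}$, the gradient passes inside the expectation; the ${\rm StopGrad}$ wrapper contributes zero to the gradient while its forward value is exactly $\tfrac{p_\phi}{p_{\rm old}}\bigl(1+\log\tfrac{p_\phi}{p_{\rm ref}}\bigr)$, so the only differentiated factor is $\sum_n\log g_\phi(a_n\mid\rvx_n)$. This reproduces the displayed expression verbatim, establishing the claimed gradient identity. I expect the proof to be short; the main obstacle is conceptual rather than computational, namely stating the ${\rm StopGrad}$ semantics cleanly—that differentiating ${\rm StopGrad}(w)\cdot f(\phi)$ gives $w\,\nabla_\phi f$ with $w$ frozen at its current value—and observing that, because the full trajectory importance weight $p_\phi/p_{\rm old}$ is kept, the equality is \emph{exact for every} $\phi$, not merely at $\phi=\phi_{\rm old}$, in contrast to the approximate statement of Proposition~\ref{prop:equal_gradient}.
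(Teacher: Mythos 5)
Your proposal is correct and follows essentially the same route as the paper's proof: both compute $\nabla_\phi D_{\rm KL}$ via the product rule (producing the $1+\log(p_{g_\phi}/p_{g_{\rm ref}})$ factor), apply the log-derivative trick so that the frozen $\pi_\theta$ drops out and the score collapses to $\sum_n\nabla_\phi\log g_\phi(a_n\mid\rvx_n)$, and reweight by $p_{g_{\phi_{\rm old}}}$ to match the ${\rm StopGrad}$ surrogate; the only difference is that you differentiate before introducing the importance weight while the paper reweights first. Your added observations---that the $+1$ term vanishes in expectation (acting as a control variate) and that the identity is exact for all $\phi$ rather than only at $\phi=\phi_{\rm old}$---are correct and go slightly beyond what the paper states.
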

\begin{proof}
\begin{align}
D_{\rm{KL}}&\left(p_{g_\phi}(\rvx_{0:L}\mid\rvq)||p_{g_{\mathrm{ref}}}(\rvx_{0:L}\mid\rvq)\right)\\
&=\mathbb{E}_{\rvx_{0:L}\sim p_{g_{\phi}}(\rvx_{0:L}\mid\rvq)}\!\left[\log \frac{p_{g_\phi}(\rvx_{0:L}\mid\rvq)}{p_{g_{\mathrm{ref}}}(\rvx_{0:L}\mid\rvq)}\right]\label{eq:absolute_1}\\
&=\mathbb{E}_{\rvx_{0:L}\sim p_{g_{\phi_{\rm{old}}}}(\rvx_{0:L}\mid\rvq)}\!\left[\frac{p_{g_\phi}(\rvx_{0:L}\mid\rvq)}{p_{g_{\phi_{\rm{old}}}}(\rvx_{0:L}\mid\rvq)}\cdot\log \frac{p_{g_\phi}(\rvx_{0:L}\mid\rvq)}{p_{g_{\mathrm{ref}}}(\rvx_{0:L}\mid\rvq)}\right]\label{eq:absolute_2}\\
&=\mathbb{E}_{\rvx_{0:L}\sim p_{g_{\phi_{\rm{old}}}}(\rvx_{0:L}\mid\rvq)}\!\left[\frac{p_{g_\phi}(\rvx_{0:L}\mid\rvq)}{p_{g_{\phi_{\rm{old}}}}(\rvx_{0:L}\mid\rvq)}\cdot\sum_{i=1}^L\log \frac{p_{g_\phi}(\rvx_{i-1}\mid\rvx_i,\rvq)}{p_{g_{\mathrm{ref}}}(\rvx_{i-1}\mid\rvx_i,\rvq)}\right]
\end{align}
\red{where the change of measure from $p_{g_\phi}$ to $p_{g_{\phi_{\text{old}}}}$ (\eqref{eq:absolute_1} to \eqref{eq:absolute_2}) is justified by the absolute continuity $p_{g_\phi}(\mathbf{x}_{0:L}|\mathbf{q}) \ll p_{g_{\phi_{\text{old}}}}(\mathbf{x}_{0:L}|\mathbf{q})$.}

\begin{align}
    &\nabla_\phi D_{\rm{KL}}\left(p_{g_\phi}(\rvx_{0:L}\mid\rvq)||p_{g_{\mathrm{ref}}}(\rvx_{0:L}\mid\rvq)\right) \\&= \nabla_\phi \mathbb{E}_{\rvx_{0:L}\sim p_{g_{\phi_{\rm{old}}}}(\rvx_{0:L}\mid\rvq)}\!\left[\frac{p_{g_\phi}(\rvx_{0:L}\mid\rvq)}{p_{g_{\phi_{\rm{old}}}}(\rvx_{0:L}\mid\rvq)}\cdot\sum_{i=1}^L\log \frac{p_{g_\phi}(\rvx_{i-1}\mid\rvx_i,\rvq)}{p_{g_{\mathrm{ref}}}(\rvx_{i-1}\mid\rvx_i,\rvq)}\right]\\
    &= \mathbb{E}\Biggl[\nabla_\phi \frac{p_{g_\phi}(\rvx_{0:L}\mid\rvq)}{p_{g_{\phi_{\rm{old}}}}(\rvx_{0:L}\mid\rvq)}\cdot\sum_{i=1}^L\log \frac{p_{g_\phi}(\rvx_{i-1}\mid\rvx_i,\rvq)}{p_{g_{\mathrm{ref}}}(\rvx_{i-1}\mid\rvx_i,\rvq)} \nonumber\\
    &\hspace{4cm}+ \frac{p_{g_\phi}(\rvx_{0:L}\mid\rvq)}{p_{g_{\phi_{\rm{old}}}}(\rvx_{0:L}\mid\rvq)}\cdot \nabla_\phi\sum_{i=1}^L\log \frac{p_{g_\phi}(\rvx_{i-1}\mid\rvx_i,\rvq)}{p_{g_{\mathrm{ref}}}(\rvx_{i-1}\mid\rvx_i,\rvq)}\Biggr]\\
    &= \mathbb{E}\Biggl[\biggr(\sum_{i=1}^L\frac{\nabla_\phi p_{g_\phi}(\rvx_{i-1}\mid\rvx_i,\rvq)}{p_{g_\phi}(\rvx_{i-1}\mid\rvx_i,\rvq)}\biggl) \biggr(\frac{p_{g_\phi}(\rvx_{0:L}\mid\rvq)}{p_{g_{\phi_{\rm{old}}}}(\rvx_{0:L}\mid\rvq)}\biggl)\cdot\biggl(\sum_{i=1}^L\log \frac{p_{g_\phi}(\rvx_{i-1}\mid\rvx_i,\rvq)}{p_{g_{\mathrm{ref}}}(\rvx_{i-1}\mid\rvx_i,\rvq)}\biggr) \nonumber\\
    &\hspace{5cm}+ \frac{p_{g_\phi}(\rvx_{0:L}\mid\rvq)}{p_{g_{\phi_{\rm{old}}}}(\rvx_{0:L}\mid\rvq)}\biggr(\sum_{i=1}^L\frac{\nabla_\phi p_{g_\phi}(\rvx_{i-1}\mid\rvx_i,\rvq)}{p_{g_\phi}(\rvx_{i-1}\mid\rvx_i,\rvq)}\biggl)\Biggr]\\
    &= \mathbb{E}\Biggl[\biggr(\frac{p_{g_\phi}(\rvx_{0:L}\mid\rvq)}{p_{g_{\phi_{\rm{old}}}}(\rvx_{0:L}\mid\rvq)}\biggl)\cdot\biggl(1+\sum_{i=1}^L\log \frac{p_{g_\phi}(\rvx_{i-1}\mid\rvx_i,\rvq)}{p_{g_{\mathrm{ref}}}(\rvx_{i-1}\mid\rvx_i,\rvq)}\biggr)\biggr(\sum_{i=1}^L\frac{\nabla_\phi p_{g_\phi}(\rvx_{i-1}\mid\rvx_i,\rvq)}{p_{g_\phi}(\rvx_{i-1}\mid\rvx_i,\rvq)}\biggl)\Biggr]\\
    &=\nabla_\phi\mathbb{E}\Biggl[{\rm{StopGrad}}\Bigl(\frac{p_{g_{\phi}}(\rvx_{0:L}\mid\rvq)}{p_{g_{\phi_{\rm{old}}}}(\rvx_{0:L}\mid\rvq)}\cdot(1+\log\frac{p_{g_{\phi}}(\rvx_{0:L}\mid\rvq)}{p_{g_{\rm{ref}}}(\rvx_{0:L}\mid\rvq)})\Bigr)\cdot\sum_{i=1}^L\log g_\phi(a_{i}|\rvx_i)\Biggr]\\
    &=\nabla_\phi \mathbb{E} [\mathcal{L}_{\rm{KL}}]
\end{align}
\red{\paragraph{Remark. Why the absolute continuity $p_{g_\phi}(\mathbf{x}_{0:L}|\mathbf{q}) \ll p_{g_{\phi_{\text{old}}}}(\mathbf{x}_{0:L}|\mathbf{q})$} holds?
Recall that we consider two parametrization of our unmasking policy model, $g_\phi$ and $g_{\phi_{\text{Top-K}}}$ given as follows:
\begin{align}
    g_\phi(a^i\,|\,\rvx_n)
=\frac{\exp\!\big([h_\phi(\rvx_n)]_{a^i}\big)}
{\sum_{i'=1}^{n}\exp\!\big([h_\phi(\rvx_n)]_{a^{i'}}\big)}\;\text{for all mask indices } a^i\in\mathcal{A}_{\mathbf{x}_n},
\end{align}
\begin{align} 
g_{\phi_{\mathrm{Top\text{-}K}}}(a^i \mid \mathbf{x}_n)
=
\begin{cases}
\displaystyle
\frac{\exp\!\big([h_\phi(\mathbf{x}_n)]_{a^i}\big)}
     {\sum_{a^j \in \operatorname{argtopk}\max \pi_\theta(\cdot)}
      \exp\!\big([h_\phi(\mathbf{x}_n)]_{a^j}\big)}
& \text{if } a^i \in \operatorname{argtopk}\max \pi_\theta(\cdot), \\[1.5em]
0
& \text{otherwise},
\end{cases}\label{eq:topk_reparametrization}
\end{align}
where $\operatorname{argtopk}\max \pi_\theta(\cdot)$ denotes the set of 
Top-$K$ mask indices corresponding to the highest-confidence predictions 
given by the MDM denoiser $\pi_\theta$.
For the parametrization $g_\phi$, absolute continuity holds trivially. 
This is because $g_\phi$ assigns non-zero probability to every unmasking index, 
and $\pi_\theta$ also assigns non-zero probability to every token. 
Consequently, every sequence $\mathbf{x}_{0:L}$ receives strictly positive probability under $g_\phi$, 
i.e., $p_{g_\phi}(\mathbf{x}_{0:L}) > 0$ for all $\mathbf{x}_{0:L}\in\mathcal{X}_0\times\mathcal{X}_1\times\dots\times\mathcal{X}_L$. Our remaining question is whether absolute continuity 
$p_{g_\phi}(\mathbf{x}_{0:L}\mid \mathbf{q}) \ll 
 p_{g_{\phi_{\text{old}}}}(\mathbf{x}_{0:L}\mid \mathbf{q})$ 
also holds for $g_{\phi_{\text{Top-K}}}$. 
This condition is indeed satisfied: 
$p_{g_{\phi_{\text{Top-K}}}}(\mathbf{x}_{0:L}\mid \mathbf{q})$ and 
$p_{g_{\phi_{\text{Top-K,old}}}}(\mathbf{x}_{0:L}\mid \mathbf{q})$ 
are \emph{equivalent measures}, 
that is, they are \emph{mutually absolutely continuous} 
($p_{g_{\phi_{\text{Top-K}}}} \ll p_{g_{\phi_{\text{Top-K,old}}}}$ 
and 
$p_{g_{\phi_{\text{Top-K,old}}}} \ll p_{g_{\phi_{\text{Top-K}}}}$). This is because the set of sequences $\mathbf{x}_{0:L}$ for which 
$p_{g_{\phi_{\text{Top-K}}}}(\mathbf{x}_{0:L}) > 0$ depends entirely on 
the denoiser $\pi_\theta$, and not on the particular choice of 
$g_{\phi_{\text{Top-K}}}$. Formally, recall that $p_{g_{\phi_{\text{Top-K}}}}$ can be written as:
\begin{align}
    p_{g_{\phi_{\text{Top-K}}}}(\mathbf{x}_{0:L}|\mathbf{q})=\prod_{n=1}^Lg_{\phi_{\text{Top-K}}}(a_n|\mathbf{x}_n,\mathbf{q})\cdot \pi_\theta(\mathbf{x}_{n-1}|\mathbf{x}_n,a_n,\mathbf{q}).
\end{align}
Here, since $\pi_\theta$ is strictly positive at every step, whether a given path
$\mathbf{x}_{0:L}$ has zero probability is determined solely by 
$g_{\phi_{\text{Top-K}}}(a_n \mid \mathbf{x}_n,\mathbf{q})$. 
Inspecting \eqref{eq:topk_reparametrization}, we see that 
$g_{\phi_{\text{Top-K}}}(a_n \mid \mathbf{x}_n,\mathbf{q})$ becomes zero 
exactly when $a_n \notin \operatorname{argtopk}\max \pi_\theta(\cdot \mid \mathbf{x}_n,\mathbf{q})$. 
Thus, whether $p_{g_{\phi_{\text{Top-K}}}}(\mathbf{x}_{0:L}\mid\mathbf{q})$ is zero 
depends entirely on $\pi_\theta$ and is independent of the specific choice of 
$g_{\phi_{\text{Top-K}}}$. 
Consequently, $p_{g_{\phi_{\text{Top-K}}}}(\mathbf{x}_{0:L}\mid\mathbf{q})$ and 
$p_{g_{\phi_{\text{Top-K,old}}}}(\mathbf{x}_{0:L}\mid\mathbf{q})$ are equivalent measures, 
and hence $p_{g_{\phi_{\text{Top-K}}}}(\mathbf{x}_{0:L}\mid\mathbf{q}) 
\ll p_{g_{\phi_{\text{Top-K,old}}}}(\mathbf{x}_{0:L}\mid\mathbf{q})$ readily follows.
}
\end{proof}

\section{Additional experiments}
\subsection{unmasking policy optimization in binary reward setting}\label{appendix:sec_binary_reward}
\begin{table}[t]
\centering
\caption{
\red{
Comparison of different unmasking policies across benchmarks. 
For our method, the \emph{dense reward} follows the definition in Section~\ref{sec:experimental_setting} and Appendix~\ref{appendix:experimental_details}. 
The \emph{binary reward} assigns 1 only when (i) for \textsc{Sudoku}, all masked positions are correctly predicted by \textsc{LLaDA}, and (ii) for \textsc{GSM8K}, the final answer predicted by \textsc{LLaDA} matches the ground truth.
}
}
\label{tab:unmasking-results}
\resizebox{\textwidth}{!}{
\begin{tabular}{lcccc}
\toprule
\textbf{Benchmark} & \textbf{Random} & \textbf{Max-Confidence} & \textbf{Ours (Binary Reward)} & \textbf{Ours (Dense Reward)} \\
\midrule
SUDOKU  & 0.616 & 0.705 & 0.801 & 0.817 \\
GSM8K   & 0.612 & 0.684 & 0.701 & 0.703 \\
\bottomrule
\end{tabular}
}
\end{table}

\begin{figure}[t]
   \centering
   \includegraphics[width=0.8\linewidth]{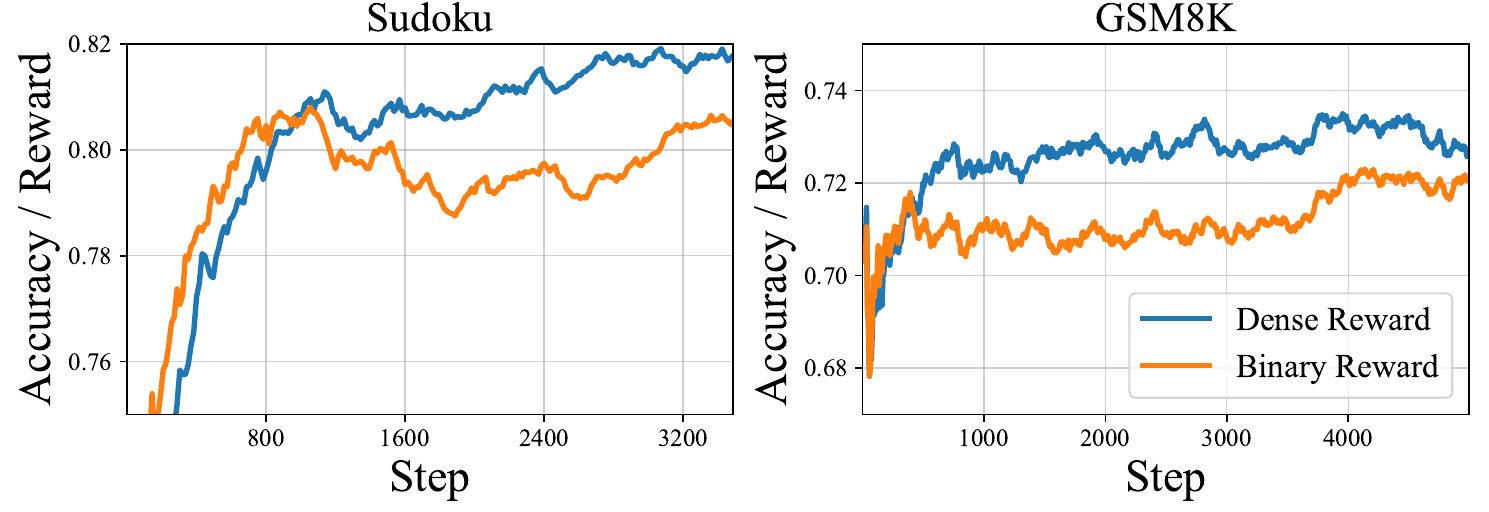}
   \caption{
   \red{
   Average reward per training step when optimizing the unmasking policy using
   (i) dense reward and (ii) binary reward.  
   }}
   \label{fig:binary_reward}
   \vspace{-0.5cm}
\end{figure}
\red{In the experiments reported in Section~\ref{sec:result}, we employed the dense reward defined in Section~\ref{sec:experimental_setting} and Appendix~\ref{appendix:experimental_details}.
Specifically, the dense reward corresponds to the proportion of correctly predicted masked positions for \textsc{Sudoku}, and for \textsc{GSM8K}, it consists of the binary correctness signal augmented with the sum of log-probabilities from $\pi_\theta(\cdot)$.}

\red{
In contrast, Theorem~\ref{theorem:GRPO_convergence} and Theorem~\ref{theorem:KL_divergence} theoretically show that under the \textbf{binary reward setting}, our KL-regularized unmasking policy optimization can outperform the reference policy.
Therefore, in this section, we additionally evaluate our method under the \emph{theoretically aligned} binary reward setting and demonstrate that our KL-regularized unmasking policy continues to surpass the reference policy.
}

\red{
The experimental results are presented in Table~\ref{tab:unmasking-results}, and the training dynamics are shown in Figure~\ref{fig:binary_reward}.
On \textsc{Sudoku}, the dense reward yields slightly higher final accuracy and converges faster than the binary reward; however, in both \textsc{Sudoku} and \textsc{GSM8K}, our method consistently outperforms the max-confidence baseline.
These empirical observations are aligned with the theoretical predictions of Theorem~\ref{theorem:GRPO_convergence} and Theorem~\ref{theorem:KL_divergence}.}

% \subsection{Future direction: generalization to natural language generation}
\begin{figure}
    \centering
    \includegraphics[width=0.8\linewidth]{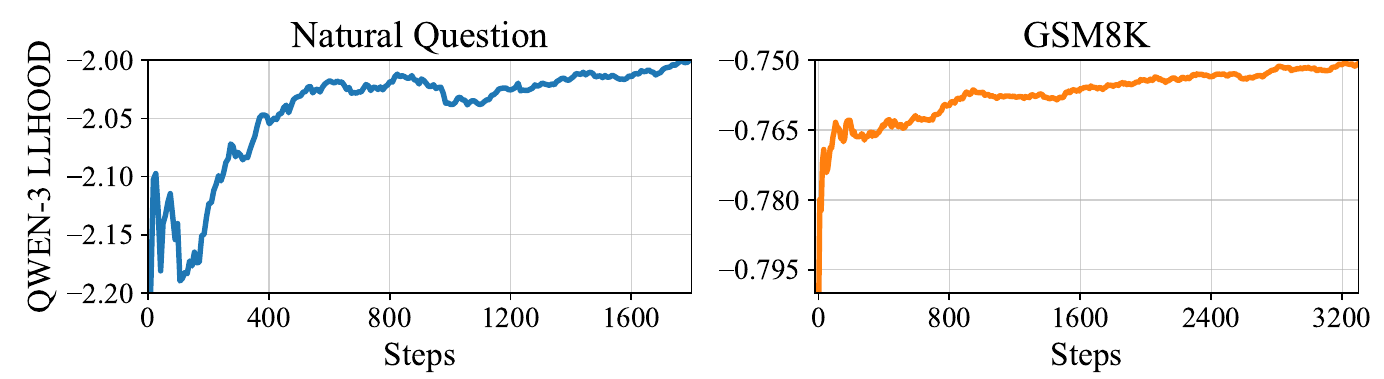}
    \caption{
\green{Training dynamics of UPO when using the average log-likelihood of Qwen-3 32B 
as the reward. The left plot corresponds to the \textsc{Natural Question} benchmark, 
a general-purpose natural language generation dataset, and the right plot shows 
results on \textsc{GSM8K}. In both cases, the log-likelihood increases monotonically 
throughout training, indicating that the learned unmasking policy improves the generation quality.}
}
    \label{fig:qwen_ppl}
\end{figure}

\subsection{Toward scalable and generalizable unmasking policies}\label{sec:appendix_qwen}

\green{
While the main experiments focus on correctness-based rewards
within structured reasoning benchmarks, our next goal is to learn an
unmasking policy that scales beyond task-specific supervision and remains
useful across natural language domains. To this end, we conduct a preliminary
study replacing the verifiable reward with a preference-based signal obtained
from a stronger autoregressive model (ARM).
}

\green{
Specifically, we use the average log-likelihood of Qwen-3~32B~\citep{yang2025qwen3} as the reward
signal, keeping all training settings identical to the \textsc{GSM8K} setup
except for the reward definition. We evaluate this configuration on two
datasets: \textsc{GSM8K}, which enables comparison to correctness metrics, and
\textsc{Natural Questions}~\citep{kwiatkowski2019naturalquestion}, a large-scale open-domain question–answering
benchmark representing unconstrained natural language generation.
}

\green{
As shown in Figure~\ref{fig:qwen_ppl}, the learned policy consistently improves
the reward across both datasets. Since the log-likelihood of a strong ARM is a
standard proxy for generation quality, this trend indicates that the policy can
adapt even without task-specific correctness signals. Moreover, on
\textsc{GSM8K}, this preference reward also yields measurable downstream
improvement, achieving 70.5\% accuracy compared to 68.2\% from the
max-confidence baseline.
}

\green{
These findings provide early but encouraging evidence that unmasking policies
may be learned under general preference signals rather than explicit oracle
correctness. We view this as a step toward scalable, general-purpose token
ordering strategies aligned with natural language structure rather than tied to
individual benchmarks or reward designs.
}

\clearpage
\subsection{Pass at N result of unmasking policy model}
\begin{wrapfigure}{r}{0.36\linewidth}
   \vspace{-0.2cm}
   \includegraphics[width=\linewidth]{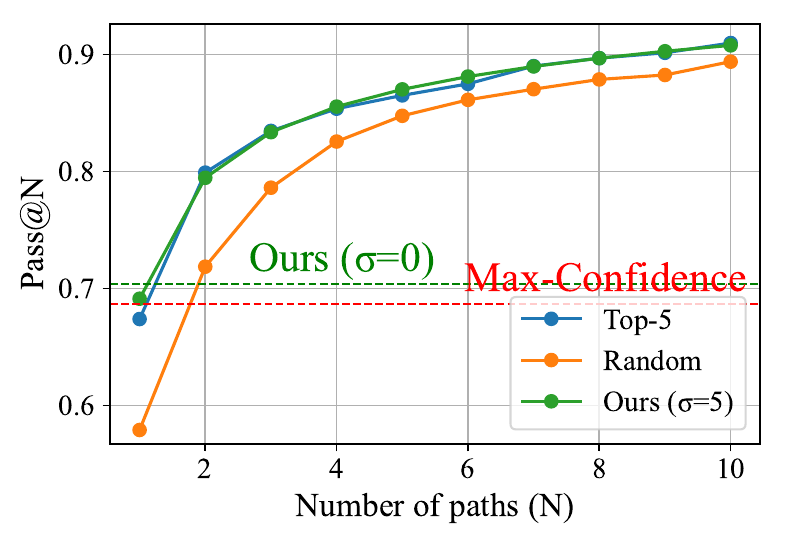}
   \vspace{-0.35cm}
   \caption{\blue{Pass@N on GSM8K. 
   The dashed red line is the single-trajectory accuracy of max-confidence $g_{\rm conf}$, and the dashed green line is the single-trajectory accuracy of our unmasking policy model where $\sigma=0$ in Gumbel-Softmax.}}
   \label{fig:rebuttal_passatn}
   \vspace{0.1cm}
\end{wrapfigure}
\blue{
In Section~\ref{sec:preliminary_experiment}, we demonstrated that the Pass@N of 
$g_{\text{Top-K}}$ can grow significantly as $N$ increases, surpassing the 
single-trajectory accuracy of $g_{\text{conf}}$. This shows that performance can 
improve through the diversity induced by exploring multiple denoising paths. 
Our method can similarly benefit from both single-shot and multi-shot generation, 
which can be achieved by employing Gumbel-Softmax~\citep{jang2016categorical}. 
When using Gumbel-Softmax, sampling is defined as:
\begin{align}
    a_n = \argmax_{a^i\in\mathcal{A}_{\mathbf{x}_n}} \big[g_{\phi_{\mathrm{Top\text{-}K}}}(a^i \mid \mathbf{x}_n) 
    + \epsilon_i\big], 
    \qquad \epsilon_i \sim \mathcal{N}(0, \sigma^2),\nonumber
\end{align}
Therefore, if one wishes to perform single-shot generation, setting 
$\sigma = 0$ yields a deterministic selection and achieves high accuracy. 
Conversely, increasing $\sigma$ enables multi-shot generation by injecting 
diversity into the sampled trajectories. In fact, when $\sigma$ is sufficiently 
large, the resulting behavior becomes theoretically identical to that of 
$g_{\text{Top-K}}$. We provide empirical evidence in 
Figure~\ref{fig:rebuttal_passatn}, where the Pass@N curves for 
$g_{\phi_{\text{Top-K}}}$ (Ours) and $g_{\text{Top-K}}$ match almost 
perfectly, confirming this correspondence.
}

\section{Experimental Details}\label{appendix:experimental_details}
\subsection{Hyper-parameters and other details}
For all experiments, we set the group size $G$ as 6. We utilize AdamW optimizer with $\beta_1=0.9, \beta_2=0.99$ and weight decay is $0.1$. We perform gradient clipping where the maximum gradient norm is set as $0.2$.
For computational efficiency, we utilized Flash Attention 2 and 4-bit quantization for MDM parameters. We set the gradient accumulation steps as 8. For \textsc{SUDOKU} and \textsc{ZEBRA} We utilize 3 40GB A100 GPUs for \textsc{SUDOKU} and \textsc{ZEBRA} and use a constant learning rate of $3\times10^{-6}$. We perform a gradient update $16$ times for each prompt. Since \textsc{MATH500} and \textsc{GSM8K} are much harder to learn, we use more computation resources and set stable hyperparameters. We utilize 8 40GB A100 GPUs and use cosine learning rate scheduler starting from $2\times10^{-6}$ and ends at $10,000$ updates. We perform a gradient update $8$ times for each prompt. Following Diffu-GRPO~\citep{zhao2025d1scalingreasoningdiffusion}, we save models for every 100 steps and report the best accuracy. For all experiments, we set $\beta=10^{-4}$. For softmax realizations we set $\tau=0.05$ for $g_{\rm{conf}^\tau}$. Since \textsc{ZEBRA} benchmark is a symbolic dataset that differs from human language, we performed supervised finetuning\footnote{We have utilized GitHub code from \citep{zhao2025d1scalingreasoningdiffusion}.}.

\subsection{reward and prompt template for each benchmark}
For \textsc{SUDOKU} and \textsc{ZEBRA}, we set the reward function as the matching rate between the generated sequence and the answer. For example, if there are $8$ masks in a sudoku puzzle and the model got $5$ correct answers, then the reward is $5/8=0.625$. For \textsc{Math500} and \textsc{GSM8K}, the reward is 1 if the model got the correct answer and 0 otherwise. We do not use formatting rewards such as other GRPO papers since unmasking policy models do not have the ability to change the token probability distribution itself. However, for \textsc{Math500} and \textsc{GSM8K}, the reward is too sparse, as it only gives 0 or 1. Therefore, we utilize additional reward $\sum_{i=1}^L\log \pi_\theta(\rvx_{i-1}|\rvx_i,a_i)$. We do not utilize it directly, but we assigned relative reward, \textit{e.g.}, 0.25 for the trajectory with the largest probability, 0 for the trajectory with the smallest probability. We have confirmed that trajectories with higher probability are more likely to get a correct answer, and this reward indeed stabilized the unmasking policy model training process. Similarly, large-scale MDM such as Seed diffusion~\citep{song2025seeddiffusionlargescalediffusion} post-trains the MDM to fit to the generated sequence with a higher ELBO.

We provide a prompt template for \textsc{SUDOKU}, \textsc{Math500}, and \textsc{GSM8K} below, and \textsc{Zebra} is omitted since it is a symbolic dataset.

{\captionsetup[lstlisting]{labelformat=empty,labelsep=none}
\begin{lstlisting}[language=prompt, caption={Prompt template used for \textsc{Sudoku}}, label={lst:sudoku_prompt}]
Please solve the following 4x4 Sudoku puzzle. The puzzle is provided as a 16-character string reading left-to-right, top-to-bottom, where ' ' represents empty cells.

Rules:
- Fill empty cells with digits 1-4
- Each row must contain digits 1-4 exactly once
- Each column must contain digits 1-4 exactly once
- Each 2x2 box must contain digits 1-4 exactly once

Important: Your solution must be a COMPLETE 16-character string with only the digits 1-4, representing your final solved grid. Never leave it as ' '.

Respond in this exact format:
</reasoning>
<answer>
[First row of 4-character solution]
[Second row of 4-character solution]
[Third row of 4-character solution]
[Fourth row of 4-character solution]
</answer>

Solve the following Sudoku puzzle:
 1 3   2
   2  
 2 4   3
 3   
<answer>
 1 3 [M] 2
[M]2[M][M]  
 2 4 [M] 3
 3[M][M][M]
 </answer>
\end{lstlisting}
}

{\captionsetup[lstlisting]{labelformat=empty,labelsep=none}
\begin{lstlisting}[language=prompt, caption={Prompt template used for \textsc{GSM8K} and \textsc{Math500}}, label={lst:gsm8k_prompt}]
You are a math expert. You will be given a question to solve. Solve it step by step. Wrap the final answer in a \\boxed{}. 
Respond in the following format:
<reasoning>
Your reasoning here
</reasoning>
<answer>
\\boxed{...}
</answer>
<reasoning>
... 128 Masks
\end{lstlisting}
}

\clearpage

\section{SUDOKU unmasking examples}

\begin{figure}[t]
    \centering

    \begin{subfigure}{\linewidth}
        \centering
        \includegraphics[width=0.8\linewidth]{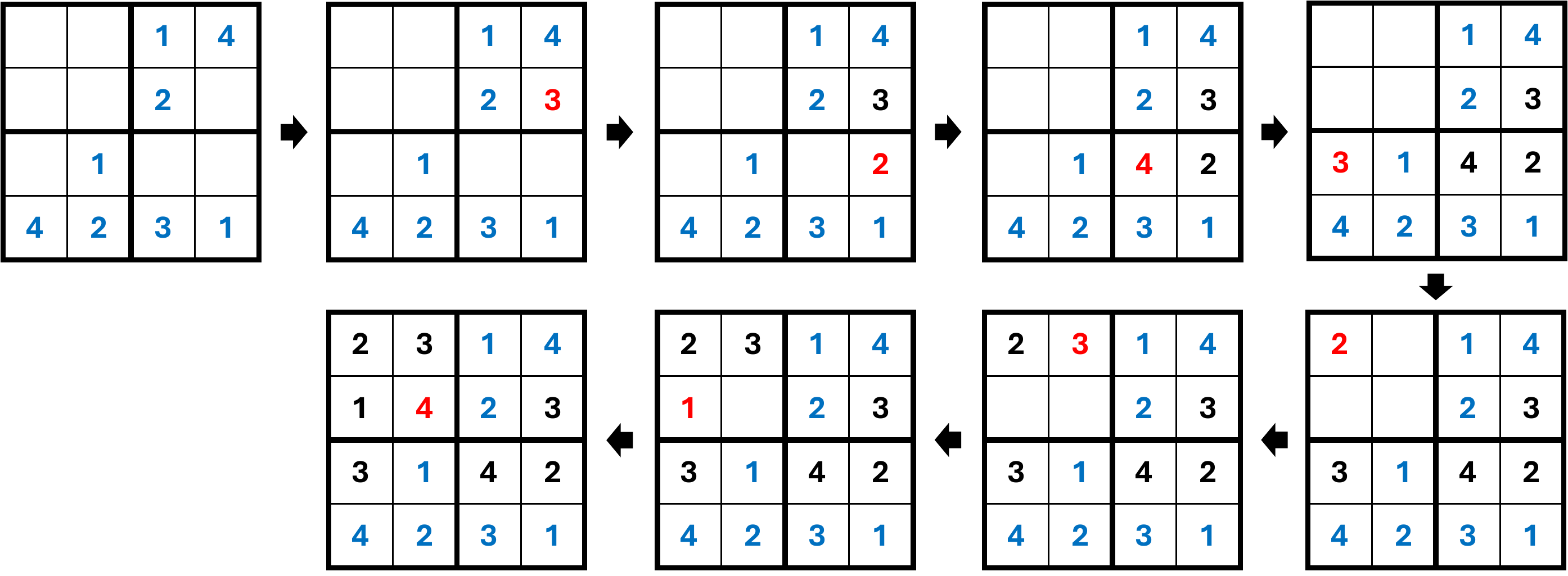}
        \caption{Ours}
    \end{subfigure}

    \vspace{0.5em}

    \begin{subfigure}{\linewidth}
        \centering
        \includegraphics[width=0.8\linewidth]{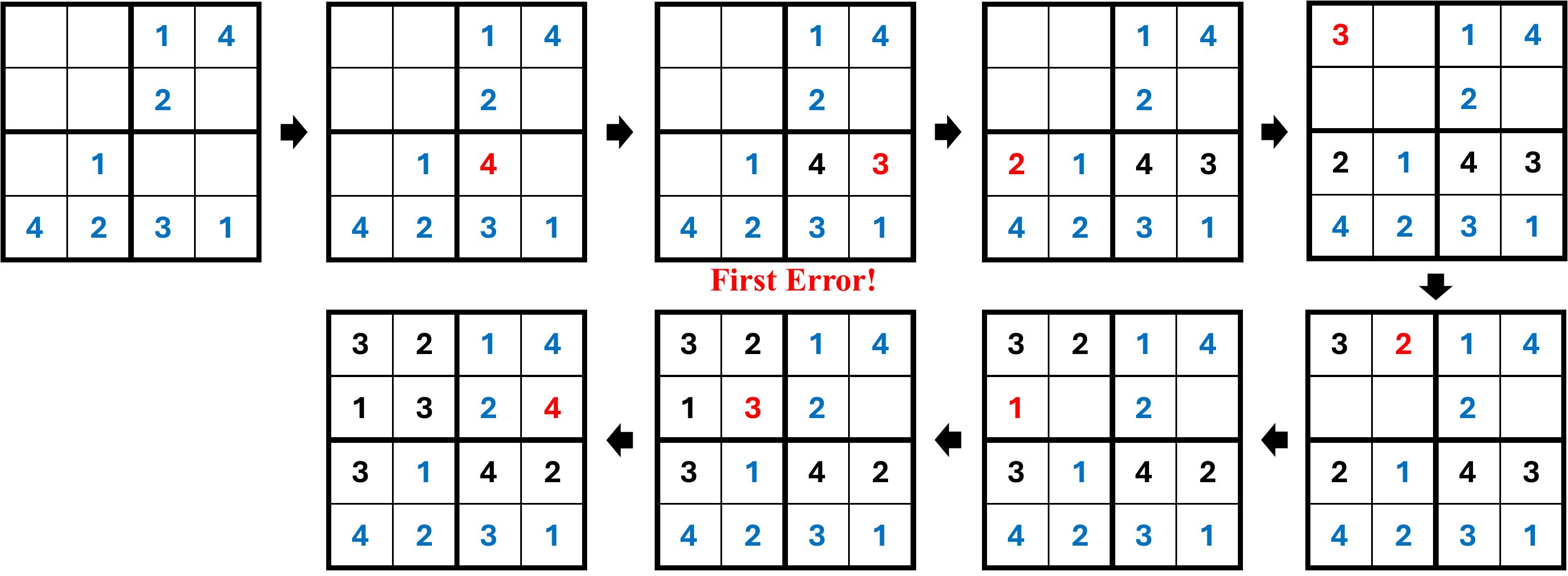}
        \caption{Max-confidence}
    \end{subfigure}

    \caption{\blue{(a) Unmasking path of our unmasking policy model and (b) unmasking path of the max-confidence baseline on the same 4×4 Sudoku instance. Blue digits indicate the original clues in the puzzle, red digits denote the value unmasked at the current step by the unmasking policy and sampled by the \textsc{LLADA}, and black digits represent previously filled values. Our model successfully solves the entire puzzle, whereas the max-confidence method begins to make mistakes from the point marked “First error!”.}}
    \label{fig:sudoku_1}
\end{figure}

\begin{figure}[t]
    \centering

    \begin{subfigure}{\linewidth}
        \centering
        \includegraphics[width=0.8\linewidth]{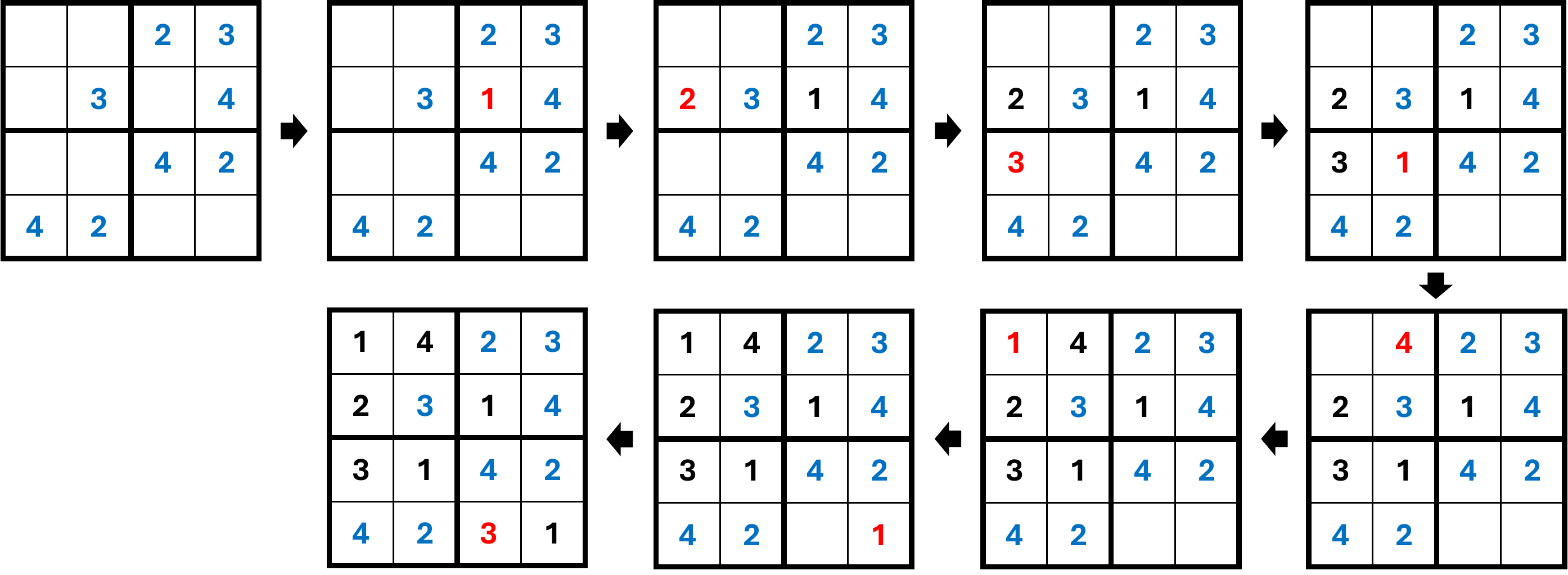}
        \caption{Ours}
    \end{subfigure}

    \vspace{0.5em}

    \begin{subfigure}{\linewidth}
        \centering
        \includegraphics[width=0.8\linewidth]{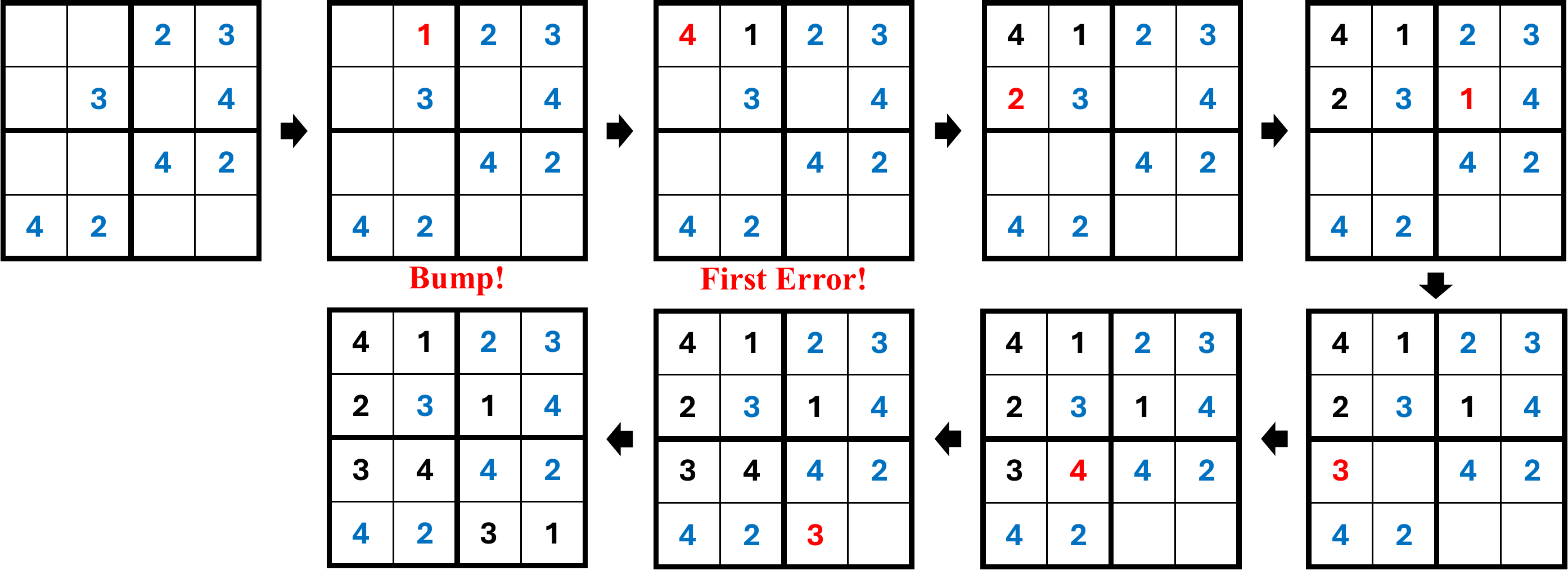}
        \caption{Max-confidence}
    \end{subfigure}

    \caption{\blue{(a) Unmasking path of our unmasking policy model and (b) unmasking path of the max-confidence baseline on the same 4×4 Sudoku instance. Blue digits indicate the original clues in the puzzle, red digits denote the value unmasked at the current step by the unmasking policy and sampled by the \textsc{LLADA}, and black digits represent previously filled values. Our model successfully solves the entire puzzle, whereas the max-confidence method reaches a point labeled “Bump!” where the unmasking decision does not immediately violate Sudoku constraints but nonetheless leads to an unsatisfiable partial state. Consequently, the very next prediction results in an error.}}
    \label{fig:sudoku_2}
\end{figure}
\blue{To better illustrate the qualitative behavior of our learned unmasking policy, we visualize representative unmasking trajectories from the main SUDOKU experiment. In all examples, the underlying MDM samples the token values via argmax over the model’s categorical distribution, so the only difference between methods lies in the choice of the unmasking position. As shown in Figures \ref{fig:sudoku_1} and \ref{fig:sudoku_2}, our learned policy tends to prioritize positions whose values are already strongly determined by the puzzle constraints, leading to stable infilling and consistent recovery of the correct solution.}

\blue{
In contrast, the max-confidence heuristic shows two clear failure behaviors. In Figure \ref{fig:sudoku_1}, it chooses a reasonable unmasking position, but the predicted value is wrong. This single mistake immediately propagates and breaks the remaining steps. In Figure \ref{fig:sudoku_2}, it selects a position whose correct value is not yet identifiable. The predicted token does not violate any explicit Sudoku rule, but it places the puzzle in a state that cannot lead to a valid solution, highlighted by “Bump!”. The next prediction therefore fails.}

\blue{
These visualizations highlight a central claim of our work: the choice of where to unmask is as crucial as the values predicted by the MDM, and a learned policy can more reliably identify structurally deterministic positions than hand-crafted schedules.
}

\section{Usage of Large Language Model}
We have utilized a large language model to assist with grammar and wording for writing the paper.
\end{document}